\newcommand{\xmark}{\ding{55}}
\DeclareMathOperator*{\argmin}{argmin}
\DeclareMathOperator*{\argmax}{argmax}
\let\cite\citep
\newtheorem{theorem}{Theorem}
\numberwithin{theorem}{section}
\newtheorem*{theorem*}{Theorem}
\newtheorem{lemma}[theorem]{Lemma}
\newtheorem{corollary}[theorem]{Corollary}
\newtheorem{remark}[theorem]{Remark}
\theoremstyle{definition}
\theoremstyle{condition}
\def\thanks#1{\protected@xdef\@thanks{\@thanks
        \protect\footnotetext{#1}}}
\title{Beyond Slater’s Condition in Online CMDPs\\ with Stochastic and Adversarial Constraints}
\author{
    Francesco Emanuele Stradi \\
    \texttt{francescoemanuele.stradi@polimi.it} \\
    Politecnico di Milano 
\and
 Eleonora Fidelia Chiefari \\
\texttt{eleonorafidelia.chiefari@mail.polimi.it} \\
Politecnico di Milano 
\and
    Matteo Castiglioni \\
   \texttt{matteo.castiglioni@polimi.it} \\
   Politecnico di Milano 
  \and
   Alberto Marchesi \\
  \texttt{alberto.marchesi@polimi.it} \\
  Politecnico di Milano 
  \and  Nicola Gatti \\
  \texttt{nicola.gatti@polimi.it} \\
  Politecnico di Milano 
}
\date{\today}
\begin{document}

\maketitle

\begin{abstract}\noindent
	We study \emph{online episodic Constrained Markov Decision Processes} (CMDPs) under both stochastic and adversarial constraints. We provide a novel algorithm whose guarantees greatly improve those of the state-of-the-art best-of-both-worlds algorithm introduced by~\citet{stradi2025policy}. In the stochastic regime, \emph{i.e.}, when the constraints are sampled from fixed but unknown distributions, our method achieves $\widetilde{\mathcal{O}}(\sqrt{T})$ regret and constraint violation without relying on Slater's condition, thereby handling settings where no strictly feasible solution exists. Moreover, we provide guarantees on the stronger notion of \emph{positive} constraint violation, which does not allow to recover from large violation in the early episodes by playing strictly safe policies. In the adversarial regime, \emph{i.e.}, when the constraints may change arbitrarily between episodes, our algorithm ensures sublinear constraint violation without Slater's condition, and
	achieves sublinear $\alpha$-regret with respect to the \emph{unconstrained} optimum, where $\alpha$ is a suitably defined multiplicative approximation factor. 
	We further validate our results through synthetic experiments, showing the practical effectiveness of our algorithm.
\end{abstract}

\section{Introduction}

Reinforcement Learning (RL)~\citep{sutton2018reinforcement} provides a general framework for sequential decision-making, 
where an agent learns to act optimally by interacting with an environment modeled as a Markov Decision Process (MDP)~\citep{puterman2014markov}. 
While RL has achieved remarkable success in numerous applications, real-world decision-making problems often involve 
\emph{safety and resource constraints} that must be respected at every step, 
leading to the study of \emph{Constrained Markov Decision Processes} (CMDPs)~\citep{Altman1999ConstrainedMD}. 
CMDPs have been widely employed in safety-critical domains such as autonomous driving~\citep{isele2018safe, wen2020safe}, 
online bidding and advertising~\citep{gummadi2012repeated, wu2018budget, he2021unified}, 
and recommendation systems~\citep{singh2020building}, 
where constraint satisfaction is as crucial as optimizing cumulative reward.

CMDPs have been significantly studied within the framework of \emph{online learning}~\citep{cesa2006prediction}, 
where a learner interacts with an environment in a sequential manner and aims to minimize its \emph{regret}, 
defined as the difference between the reward attained by the best fixed policy and the learner's cumulative reward. 
An algorithm is considered successful if it achieves \emph{sublinear regret}, meaning that the average regret per round vanishes as the time horizon $T$ grows. 
Online CMDPs extend this setting by incorporating constraints on the learner’s behavior, 
making them a constrained counterpart of classical online learning problems.
These algorithms are typically studied under two main assumptions about the environment: 
in the \emph{stochastic} setting, rewards (losses) and constraint functions are drawn i.i.d.\ from an unknown but fixed distribution, 
while in the \emph{adversarial} setting they can be chosen arbitrarily by an adversary, potentially depending on past actions. 

In the stochastic setting, several works provide algorithms for CMDPs that achieve sublinear regret and sublinear constraint violation 
under various assumptions 
(e.g., \citep{Exploration_Exploitation, Constrained_Upper_Confidence, stradi2025optimal}). 
Adversarial settings, however, are inherently more challenging. 
In particular, \citet{Mannor} show that even in the simple single-state case, 
when constraints are adversarially chosen, it is \emph{impossible} to guarantee both sublinear regret 
and sublinear cumulative constraint violation with respect to a fixed policy that satisfies the constraints in hindsight. 
As a result, most advances in adversarial CMDPs focus on CMDPs with adversarial rewards and stochastic constraints~\citep{Upper_Confidence_Primal_Dual, stradi2025learning}.
The only exceptions so far are two recent works~\citep{stradi2024, stradi2025policy}, 
which introduce the first \emph{best-of-both-worlds} algorithms for episodic CMDPs that can also handle adversarially chosen constraints. 
In stochastic settings, these methods achieve $\widetilde{\mathcal{O}}(\nicefrac{1}{\rho^{2}}\sqrt{T})$ regret and constraint violation under a Slater-like feasibility condition, where $\rho$ is a suitably defined Slater's parameter,
and $\widetilde{\mathcal{O}}(T^{3/4})$ guarantees without such a condition. Differently, in the adversarial regime (adversarial constraints), they attain sublinear violation and sublinear $\alpha$-regret, with $\alpha=\mathcal{O}(\nicefrac{\rho}{1+\rho})$, that is, sublinear regret with respect to a fraction of the \emph{constrained} optimum. In the adversarial setting, the algorithms require the Slater's like condition.

Due to space constraints, we refer to Appendix~\ref{App:related} for a comprehensive discussion on related works.

\subsection{Original Contributions}

\begin{wraptable}[16]{r}{0.62\linewidth} 
	\vspace{-\baselineskip} 
	\caption{Comparison between our algorithm and the state-of-the-art best-of-both-worlds results.}
	\label{table-comparison}
	\centering
	{\renewcommand{\arraystretch}{1.2}
		\setlength{\tabcolsep}{4pt}
		\begin{tabular}{c|c|c|}
			\toprule
			& \citet{stradi2025policy} & \cellcolor{cyan!15} Algorithm~\ref{alg:main} \\
			\hline\hline
			\makecell{$R_T$\\ \emph{Stoc. Constraints}}
			& $\widetilde{\mathcal{O}}\!\left(\min\!\left\{\frac{1}{\rho^2}\sqrt{T}, T^{\frac{3}{4}}\right\}\right)$
			& \cellcolor{cyan!15} $\widetilde{\mathcal{O}}(\sqrt{T})$ \\
			\hline
			\makecell{$V_T$\\ \emph{Stoc. Constraints}}
			& $\widetilde{\mathcal{O}}\!\left(\min\!\left\{\frac{1}{\rho^2}\sqrt{T}, T^{\frac{3}{4}}\right\}\right)$
			& \cellcolor{cyan!15} $\widetilde{\mathcal{O}}(\sqrt{T})$ \\
			\hline
			\makecell{$\mathcal{V}_T$\\ \emph{Stoc. Constraints}}
			& \xmark
			& \cellcolor{cyan!15} $\widetilde{\mathcal{O}}(\sqrt{T})$ \\
			\hline
			\makecell{$\alpha\text{-}R_T$\\ \emph{Adv. Constraints}}
			& $\widetilde{\mathcal{O}}\!\left(\frac{1}{\rho^2}\sqrt{T}\right)$
			& \cellcolor{cyan!15} $\widetilde{\mathcal{O}}(\sqrt{T})$ \\
			\hline
			\makecell{$V_T$\\ \emph{Adv. Constraints}}
			& $\widetilde{\mathcal{O}}\!\left(\frac{1}{\rho^2}\sqrt{T}\right)$
			& \cellcolor{cyan!15} $\widetilde{\mathcal{O}}(\sqrt{T})$ \\
			\bottomrule
	\end{tabular}}
\end{wraptable}

We study online episodic CMDPs where the constraints may be either stochastic or adversarial. We propose a novel algorithm that greatly improves the state-of-the-art best-of-both-worlds results provided in~\citep{stradi2025policy}. Specifically, in the stochastic setting, our algorithm attains $\widetilde{\mathcal{O}}(\sqrt{T})$ regret $R_T$ and violation $V_T$ without Slater's condition, \emph{i.e.}, even when a strictly feasible solution does not exist. Furthermore, our algorithm attains $\widetilde{\mathcal{O}}(\sqrt{T})$ positive constraint violation $\mathcal{V}_T$, which does not allow for cancellations between episodes. This metric is indeed stronger than the standard constraint violation since it does not allow to recover from large violation in the early episodes by playing strictly safe policies. In the adversarial setting, our algorithm attains sublinear violation without Slater's condition. Furthermore, by employing a slightly stronger notion of Slater's parameter, our algorithm attains sublinear $\alpha$-regret with respect to the \emph{unconstrained} optimum, instead of the constrained one. Finally, we complement our analysis with synthetic experiments that empirically validate our results. 

Our contributions are summarized in Table~\ref{table-comparison}.

\section{Preliminaries}
\label{Preliminaries}

In this section, we provide notation and definitions needed in the rest of the paper.

\subsection{Constrained Markov Decision Processes} \label{cmdp}

We study CMDPs~\citep{Altman1999ConstrainedMD} defined as tuples  $(X, A, P, \left\{r_{t}\right\}_{t=1}^{T}, \left\{G_{t}\right\}_{t=1}^{T}).$ Specifically, $T$ is a number of episodes of the learning dynamic, with $t\in[T]$ denoting a specific episode.\footnote{We denote with $[a,\dots, b]$ the set of all consecutive integers from $a$ to $b$, while $[b] = [1, \dots, b]$.} $X$ and $A$ are finite state and action spaces, respectively. $P : X \times A \to \Delta_{|X|}$ is the transition function,\footnote{We denote as $\Delta_n$ the $n-1$ dimensional simplex.} where, for ease of notation, we denote by $P (x^{\prime} |x, a)$ the probability of going from state $x \in X$ to $x^{\prime} \in X$ by taking action $a \in A$.\footnote{W.l.o.g., in this work we consider \emph{loop-free} CMDPs.
	Formally, this means that $X$ is partitioned into $L$ layers $X_{0}, \dots, X_{L}$ such that the first and the last layers are singletons, \emph{i.e.}, $X_{0} = \{x_{0}\}$ and $X_{L} = \{x_{L}\}$, and that $P(x^{\prime} |x, a) > 0$ only if $x^\prime \in X_{k+1}$ and $x \in X_k$ for some $k \in [0, \dots, L-1]$. Notice that any episodic CMDP with horizon $L$ that is \emph{not} loop-free can be cast into a loop-free one by suitably duplicating the state space $L$ times, \emph{i.e.}, a state $x$ is mapped to a set of new states $(x, k)$, where $k \in [0, \dots, L]$.} $\left\{r_{t}\right\}_{t=1}^{T}$ is a sequence of vectors describing the rewards at each episode $t \in [T]$, namely $r_{t}\in[0,1]^{|X\times A|}$. We refer to the reward of a specific state-action pair $x \in X, a \in A$ for an episode $t\in[T]$ as $r_t(x,a)$. Rewards are \emph{adversarial}, namely, no statistical assumptions are made. $\left\{G_{t}\right\}_{t=1}^{T}$ is a sequence of constraint matrices describing the $m$ \emph{constraint} costs at each episode $t\in[T]$, namely $G_{t}\in[-1,1]^{|X\times A|\times m}$, where non-strictly positive cost values stand for satisfaction of the constraints. For $i \in [m]$, we refer to the cost of the $i$-th constraint for a specific state-action pair $x \in X, a \in A$ at episode $t\in[T]$ as $g_{t,i}(x,a)$. Constraint costs may be \textit{stochastic} (we will refer to this case as stochastic setting), in that case $G_t$ is a random variable distributed according to a probability distribution $\mathcal{G}$ for every $t\in[T]$, or chosen by an \emph{adversary} (we will refer to this case as adversarial setting). 

\begin{wrapfigure}[14]{R}{0.67\textwidth}
	\vspace{-0.8cm}
	\begin{minipage}
		{0.67\textwidth}
		\begin{protocol}[H]
			\caption{Learner-Environment Interaction}
			\label{alg: Learner-Environment Interaction}
			\begin{algorithmic}[1]
				\For{$t=1, \ldots, T$}
				\State $r_t$ is chosen \emph{adversarially}
				\State $G_t$ is chosen either \textit{stochastically} or \textit{adversarially}
				\State The learner chooses a policy $\pi_{t}$
				\State The state is initialized to $x_{0}$
				\For{$k = 0, \ldots,  L-1$}
				\State The learner plays $a_{k} \sim \pi_t(\cdot|x_{k})$
				\State The learner observes $r_t(x_k,a_k)$, $g_{t,i}(x_k,a_k) \,\, \forall i\in[m]$
				\State The environment evolves to $x_{k+1}\sim P(\cdot|x_{k},a_{k})$
				\State The learner observes $x_{k+1}$
				\EndFor
				\EndFor
			\end{algorithmic}
		\end{protocol}
	\end{minipage}
\end{wrapfigure}

The learner chooses a \emph{policy} $\pi: X \to \Delta_{|A|}$ at each episode, defining a probability distribution over actions at each state.
For ease of notation, we denote by $\pi(\cdot|x)$ the probability distribution at $x \in X$, with $\pi(a|x)$ denoting the probability of action $a \in A$.

Protocol~\ref{alg: Learner-Environment Interaction} provides the complete interaction between the learner and the environment.

Given a transition function $P$ and a policy $\pi$, the \emph{occupancy measure} $q^{P,\pi} \in [0, 1]^{|X\times A\times X|}$ induced by $P$ and $\pi$~\citep{OnlineStochasticShortest} is such that, for every $x \in X_k$, $a \in A$, and $x' \in X_{k+1}$ with $k \in [0,\dots, L-1]$, it holds:
\[
q^{P,\pi}(x,a, x^{\prime})= \mathbb{P} \{  x_{k}=x, a_{k}=a,x_{k+1}=x^{\prime} \mid P,\pi \}.
\]
Moreover, we let $q^{P,\pi}(x,a) = \sum_{x^\prime\in X_{k+1}}q^{P,\pi}(x,a,x^{\prime})$ and $q^{P,\pi}(x) = \sum_{a\in A}q^{P,\pi}(x,a)$.
An occupancy measures $q \in [0, 1]^{|X\times A\times X|}$ is \emph{valid} if and only if the following three conditions hold:
\begin{itemize}[nolistsep,noitemsep,leftmargin=8mm]
	\item[\emph{(i)}] $\sum_{x \in X_{k}}\sum_{a\in A}\sum_{x^{\prime} \in X_{k+1}} q(x,a,x^{\prime})=1 \,\,\,\, \forall k\in[0,\dots, L-1]$\\
	\item[\emph{(ii)}] $\sum_{a\in A}\sum_{x^{\prime} \in X_{k+1}}q(x,a,x^{\prime})= \sum_{x^{\prime}\in X_{k-1}} \sum_{a\in A}q(x^{\prime},a,x) \,\,\,\, \forall k\in[1,\dots, L-1], \forall x \in X_{k}$\\
	\item[\emph{(iii)}] $ P^{q} = P,$
\end{itemize}
where $P$ is the transition function of the MDP and $P^q$ is the one induced by $q$ (see below).

Notice that any valid occupancy measure $q$ induces a transition function $P^{q}$ and a policy $\pi^{q}$, which are defined as
$P^{q}(x^{\prime}|x,a)= q(x,a,x^{\prime})/q(x,a)  , \pi^{q}(a|x)=q(x,a)/q(x).$

\begin{remark}[On the stochastic rewards setting]
	As pointed out in Protocol~\ref{alg: Learner-Environment Interaction}, we focus exclusively on the adversarial reward setting, unlike for the constraints, where both stochastic and adversarial scenarios are analyzed. This is because the stochastic reward setting follows directly from the adversarial reward one by a straightforward application of the Azuma–Hoeffding inequality.
\end{remark}

\subsection{Baseline for the Stochastic Setting}
We define the safe optimum for the stochastic constraints setting as follows:
\begin{equation}
	\label{lp:safe_optimum}\textsc{OPT}_{\overline G}:=\begin{cases}
		\max_{  q \in \Delta(M)} &   \frac{1}{T}\sum_{t=1}^Tr_t^{\top}  q\\
		\,\,\, \textnormal{s.t.} & \overline G^{\top}  q \leq  \underline{0},
	\end{cases}
\end{equation}
where $ q\in[0,1]^{|X\times A|}$ is the occupancy measure vector, $\Delta\left(M\right)$ is the set of valid occupancy measures, and $\overline G$ is the expected value of $\mathcal{G}$. Thus, we introduce the notion of \emph{cumulative regret} as:
\[
R_{T}:=  T\cdot \text{OPT}_{\overline{G}} - \sum_{t=1}^{T}   r_{t}^{\top}   q^{P, \pi_{t}} .
\]
We refer to an optimal safe occupancy measure (\emph{i.e.}, a feasible one achieving value $\textsc{OPT}_{ \overline{G}}$) as $q^*$.
Thus, the regret reduces to $R_T=\sum_{t=1}^T r_t^\top q^* - \sum_{t=1}^T r_t^\top q^{P,\pi_t}.$
\subsection{Baseline for the Adversarial Setting}

In the adversarial case, we define the \emph{cumulative $\alpha$-regret} as follows:
\[\alpha\text{-}R_T = \alpha T \cdot \textsc{OPT} - \sum_{t=1}^{T} r_t^{\top}q^{P, \pi_t},\]
where the unconstrained optimal value is defined as  $\textsc{OPT}\coloneqq\max_{q\in\Delta(M)}\frac{1}{T}\sum_{t=1}^{T}r_t^\top q$.
In order to quantify $\alpha$, we introduce a problem-specific parameter $\rho\in\left[0,1\right]$, which is defined as: 
$$\rho:=\max_{q\in\Delta(M)}\min_{(x,a)\in \mathcal{Q}(q)}\min_{t\in[T]}\min_{i\in[m]}-g_{t,i}(x,a),$$
where 
$\mathcal{Q}(q)\coloneqq \{(x,a)\in X\times A: q(x,a)>0\}$. Thus, we define $\alpha\coloneqq \nicefrac{\rho}{1+\rho}$.
We denote the occupancy measure leading to the value of $\rho$ as $q^{\diamond}$. 
Intuitively, $\rho$ represents the ``margin'' by which the ``most feasible'' strictly feasible occupancy satisfies the constraints, in the ``worst" state-action pair. Our definition of $\rho$ is slightly stronger than the one employed in~\citep{stradi2025policy}, where the problem-specific parameter is not computed with respect to the ``worst" state-action pair. Nonetheless, we underline that the baseline employed for the adversarial setting is the unconstrained optimum, while~\citet{stradi2025policy} provide no-$\alpha$ regret guarantees with respect to the \emph{constrained} optimum, only.

\subsection{Constraint Violation}

In order to deal with the problem of satisfying the constraints, we define the cumulative constraint violation up to episode $T$. We underline that this metric is equivalent for both the stochastic and the adversarial setting.
Specifically, the \emph{cumulative constraint violation} is defined as:
\[
V_{T}:= \max_{i\in[m]}\sum_{t=1}^{T}g_{t,i}^{\top} q^{P,\pi_{t}}.
\]
Additionally, for the stochastic setting, we study the  expected \emph{positive cumulative constraint violation}, which is defined as:
\[
\mathcal{V}_{T}:= \max_{i\in[m]}\sum_{t=1}^{T}\left[ \overline g_i^{\top} q^{P,\pi_{t}}\right]^{+},
\]
where $[\cdot]^+\coloneqq \max\{\cdot, 0\}$ and $\overline{g}_i$ is the $i$-th component of $\overline{G}$. Intuitively, the positive violation metric prevents compensation across episodes; in other words, it is not possible to play largely safe policies in order to recover from the large violation attained in early episodes.
For the sake of notation, we will refer to $  q^{P,\pi_{t}}$ by using $q_t$, thus omitting the dependence on $P$ and $\pi_t$.

In this work, we propose an algorithm capable of attaining sublinear regret and (positive) violation guarantees in the stochastic setting---namely, $R_T=o(T), V_T=o(T), \mathcal{V}_T=o(T)$---, while getting sublinear $\alpha$-regret and violation in the adversarial case---namely, $\alpha\text{-}R_T=o(T), V_T=o(T)$.

\section{Algorithm}
\label{sec:algo}
In this section, we describe the key components of \emph{Weighted Constrained Optimistic Policy Search} (\texttt{WC-OPS}, for short), which is the main algorithmic contribution of this paper. In Algorithm~\ref{alg:main}, we provide the pseudocode of \texttt{WC-OPS}. 

\begin{algorithm}[!htp]
	\caption{Weighted Constrained Optimistic Policy Search (\texttt{WC-OPS})}
	\label{alg:main}
	\begin{algorithmic}[1]
		\Require $T$, $X$, $A$, $\eta$, $\gamma$, $\delta$
		\State Initialize occupancy $\widehat{q}_1 \leftarrow \frac{1}{|X_k| |A| |X_{k+1}|}$, the estimated transitions space  $\mathcal{P}_0$ as the set of all the possible transition functions, and counters $ N_0(x,a)  = M_0(x'|x,a) = 0$ for all $k\in[0,...,L-1]$ and $(x,a,x') \in X_k \times A \times X_{k+1}$\label{algLine:1}
		\For{$t \in[T]$}
		\State Play policy $\pi_t\leftarrow \pi^{\widehat{q}_t}$ \label{algLine:2}
		\State Observe feedback as in Protocol~\ref{alg: Learner-Environment Interaction} \label{algLine:3}
		\State Set $\ell_{t}(x,a)\gets 1-r_t(x,a)\mathbb{I}_t\left\{x, a\right\}$ for all $x\in X, a\in A$ \label{algLine:4}
		\State Compute $\widehat{\ell}_{t}(x, a) = \frac{\ell_{t}(x, a)}{u_{t}(x, a) + \gamma} \mathbb{I}_t\left\{x, a\right\}$
		\label{algLine:5}
		
		\State Update counters and compute weights as shown in Equation~\eqref{weights} \label{algLine:6}
		
		\State Compute $\widehat{g}_{t,i}(x, a) = \sum_{\tau \in \mathcal{T}_{t, x, a}} w_{t,x,a,i}(\tau) g_{\tau,i}(x, a)$  for all $x \in X, a \in A, i \in[m]$ \label{algLine:7}
		
		\State Update confidence set $\mathcal{P}_t$ and bonus $b_t$ as prescribed in Equations~\eqref{cset}--\eqref{b_bis} \label{algLine:8}
		
		\State $\widehat{\Delta}_t(\mathcal{P}_t) \leftarrow \left\{ q \in \Delta(\mathcal{P}_t): ( \widehat{g}_{t,i} - b_{t})^\top q\leq 0 \,\,\,\, \forall i\in[m]\right\}$ \label{algLine:9}

		\State Update $\widehat{q}_{t+1} \gets \argmin_{q \in \widehat{\Delta}_t(\mathcal{P}_t)}   \widehat{\ell}_t^{\ \top} q + \frac{1}{\eta}B(q||\widehat{q}_t)$ \label{algLine:10}

		\EndFor
	\end{algorithmic}
\end{algorithm}

\subsection{Initialization and Loss Estimation}

Algorithm~\ref{alg:main} receives as input the time horizon $T$, the set of states $X$, the set of actions $A$, the learning rate $\eta$, the implicit exploration factor $\gamma$, and the confidence $\delta\in(0,1)$.
The occupancy measure $\widehat{q}_1$ is initialized uniformly over all tuples $(x_k,a,x_{k+1}) \in X_k \times A \times X_{k+1}$ for each layer $k\in[0,\dots,L-1]$. The transition function confidence set $\mathcal{P}_1$ is initialized as the set of all the possible transition functions. The counters $N_t(x,a)$ and  $M_t(x'|x,a)$, which are respectively defined as
$N_t(x,a) = \sum_{\tau=1}^{t} \mathbb{I}_\tau\{x,a\}$ for all $(x,a) \in X \times A$,
$M_t(x'|x,a) = \sum_{\tau=1}^{t} \mathbb{I}_t\{x,a,x'\}$ for all $(x,a,x') \in X_k \times A \times X_{k+1}, k\in[0,...,L-1],$ are initialized to 0 (see Line~\ref{algLine:1}). We denote by $\mathbb{I}_t\{x,a\}$ and $\mathbb{I}_t\{x,a,x'\}$ the indicator functions for the state-action(-state) visit at episode $\tau$. 

At the beginning of each episode $t$, the algorithm executes the policy $\pi_t$ induced by the occupancy measure $\widehat{q}_t$ computed at the previous episode (Line~\ref{algLine:2}). After selecting the policy, the learner interacts with the environment and receives the feedback (Line~\ref{algLine:3}). The loss vector $\ell_t$ is built from the observed reward vector $r_t$ (Line~\ref{algLine:4}).
Then, the algorithm builds a \emph{biased} loss estimator $\widehat{\ell}_t$ for episode $t$, following the optimistic approach originally proposed in  \citep{neu,JinLearningAdversarial2019}. Specifically, given the transition function confidence set $\mathcal{P}_t$---refer to Equation~\eqref{cset} for additional details---, which contains the true transition function with high probability, the algorithm builds an optimistic estimator of $\ell_t$. This is done by employing an upper bound on the occupancy $u_t$, in place of the unknown true occupancy $q_t$, defined as 
$u_t(x, a) = \max_{ P_t\in\mathcal{P}_t}q^{ P_t,\pi_t}(x, a)$ for all $(x,a)\in X\times A$.
This upper bound represents the maximum probability of visiting $(x,a)$ under any transition function within the set $\mathcal{P}_t$. Thus, the estimator is computed as $\widehat{\ell}_{t}(x, a) = \frac{\ell_{t}(x, a)}{u_{t}(x, a) + \gamma} \mathbb{I}_t\left\{x, a\right\}$, where $\gamma$ is the implicit exploration factor given as input (Line~\ref{algLine:5}).

\subsection{Weights Estimation}

At each episode, the counters are updated given the trajectory observed as feedback, namely, $N_t(x,a)$ and  $M_t(x'|x,a)$ are updated by incrementing by $1$ the entries of the tuples visited during the current episode. Then, the algorithm sets the weights that will be used to build the constraint estimates (Line~\ref{algLine:6}). Specifically, given a pair $(x,a) \in X \times A$, $i\in[m]$, and $t\in[T]$, the weights $w_{t,x,a,i}$ are defined as follows:
\begin{equation}
	w_{t,x,a,i}(\tau) \coloneqq \beta_{\tau,i}(x,a)\prod_{h\in \mathcal{T}_{t,x,a}:h>\tau}(1-\beta_{h,i}(x,a)) \quad \forall \tau \in \mathcal{T}_{t,x,a}, \label{weights}  
\end{equation} 
where $\mathcal{T}_{t,x,a}$ is the set of episodes where the pair $(x,a)$ has been visited up to episode $t$, that is:  $$\mathcal{T}_{t, x, a} \coloneqq \left\{\tau \leq t : \mathbb{I}_\tau\{x,a\} = 1\right\}.$$
Moreover, the constraints learning rates $\beta_{t,i}$ are defined as: 
$$\beta_{t,i}(x,a) \coloneqq \frac{1}{N_t(x,a)}\left(1 + \Gamma_{t,i}\right),$$ where $\Gamma_{t,i}$ is an adaptive term that depends on the constraint vectors observed and is defined as: $$\Gamma_{t,i} := \left[ \sum_{\tau\in[t]}\sum_{x,a}g_{\tau,i}(x,a)\mathbb{I}_\tau\{x,a\} - \mathcal{C}_t \right]_0^{\mathcal{C}},$$
$\mathcal{C}_t\coloneqq21L|X|\sqrt{2t|A|\ln\frac{2mT^2|X||A|}{\delta}}$ and $[\cdot]_{a}^{b}\coloneqq \min(\max (\cdot,a),b)$. Finally, the weights are employed to build the estimates $\widehat{g}_{t,i}$ for each constraint $i\in[m]$ and each $(x,a)$ as the weighted mean of the values observed during the rounds in $\mathcal{T}_{t, x, a}$ (Line~\ref{algLine:7}). 
Intuitively, the $\Gamma_t$ parameter allows the learning rates to meet the requirements of both the stochastic and the adversarial setting, as we point out in the following. In order to better understand it, we first introduce the following result. 
\begin{restatable}{proposition}{meanestim} \label{meanestim}
	If $ \beta_{t,i}(x,a) = \frac{1}{N_t(x,a)} $ for every $ \tau \in \mathcal{T}_{t,x,a} $, then the following holds:
	\[ w_{t,x,a,i}(\tau) = \frac{1}{N_{t}(x,a)}, \] 
	and we recover the empirical mean estimator:  
	\[ \widehat{g}_{t,i}(x,a) = \frac{1}{N_{t}(x,a)} \sum_{\tau \in \mathcal{T}_{t,x,a}} g_{\tau, i}(x,a). \] 
\end{restatable}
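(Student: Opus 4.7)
The plan is to directly unfold the definition of $w_{t,x,a,i}(\tau)$ under the stated hypothesis and exploit a telescoping product. First, I would enumerate the set $\mathcal{T}_{t,x,a}$ in increasing order as $\tau_1 < \tau_2 < \dots < \tau_{N}$ where $N := N_t(x,a)$, and observe that by the definition of $N_\tau(x,a)$ as a visit counter, one has $N_{\tau_j}(x,a) = j$ for each $j \in [N]$. Consequently, under the hypothesis $\beta_{\tau,i}(x,a) = 1/N_\tau(x,a)$, the learning rate at the $j$-th visit simplifies to $\beta_{\tau_j,i}(x,a) = 1/j$.

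Then I would substitute into the weight formula in Equation~\eqref{weights}: for $\tau = \tau_j$,
\[
w_{t,x,a,i}(\tau_j) \;=\; \frac{1}{j}\,\prod_{k=j+1}^{N}\left(1 - \frac{1}{k}\right) \;=\; \frac{1}{j}\,\prod_{k=j+1}^{N}\frac{k-1}{k}.
\]
The product on the right telescopes, collapsing to $j/N$, so the two factors multiply to $1/N = 1/N_t(x,a)$, which is exactly the first claim of the proposition. Plugging this constant weight back into the definition of $\widehat{g}_{t,i}(x,a)$ in Line~\ref{algLine:7} of Algorithm~\ref{alg:main} yields the empirical-mean form $\widehat{g}_{t,i}(x,a) = \frac{1}{N_t(x,a)}\sum_{\tau\in\mathcal{T}_{t,x,a}} g_{\tau,i}(x,a)$, completing the argument.

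There is no real obstacle here: the proposition is essentially a sanity check that the weighted estimator reduces to the empirical mean in the degenerate case $\Gamma_{t,i} \equiv 0$. The only point requiring a bit of care is recognizing that the indexing of $\beta$ by $N_\tau$ (rather than by $N_t$) is what makes the telescoping work; if one mistakenly set $\beta_{\tau,i}(x,a) = 1/N_t(x,a)$ for all $\tau$ (with the outer index $t$ fixed), the product would not telescope to the right quantity. Making this distinction explicit at the start of the proof is the only subtlety worth highlighting.
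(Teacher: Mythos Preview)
Your proposal is correct and matches the paper's proof essentially line for line: both substitute $\beta_{\tau,i}(x,a)=1/N_\tau(x,a)$ into the weight definition, rewrite each factor as $(N_h-1)/N_h$, and telescope the product to obtain $N_\tau(x,a)/N_t(x,a)$, which cancels against the leading $1/N_\tau(x,a)$. Your explicit enumeration $\tau_1<\dots<\tau_N$ with $N_{\tau_j}(x,a)=j$ is just a notational choice; the paper works directly with $N_\tau(x,a)$ and $N_h(x,a)$, but the argument is identical.
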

Proposition~\ref{meanestim} simply states that, when $\Gamma_{t,i}=0$, the weighted approach is equivalent to the empirical mean estimator. 
Indeed, as we will show in Section~\ref{sec:theo}, this is exactly the case when the constraints are stochastic and the empirical mean estimator is sufficient to estimate the constraints. Differently, in the adversarial case, the learning rate is proportional to the violation attained by the algorithm, thus allowing $\widehat{g}_{t,i}$ to move accordingly to the attained performance.

\subsection{Decision Space Definition and Optimization Update}

Given the constraints estimates, Algorithm~\ref{alg:main} has to properly build the decision space at each episode. Indeed, the algorithm has to ensure that such a decision space includes the true transition function and the true constraint functions, with high probability. In order to do that, Algorithm~\ref{alg:main} updates its model (Line~\ref{algLine:8}) accordingly. 

For the transitions, we follow the approach of~\citet{rosenberg19a}.
Specifically, the transition function confidence set $\mathcal{P}_t$ is updated as follows: 
\begin{equation}
	\mathcal{P}_t = \left\{\widehat{P}: \left|\widehat{P}(x'|x,a) - \bar{P}_t(x'|x,a)\right| \leq \epsilon_t(x'|x,a) \right\}, \label{cset} 
\end{equation} 
where the confidence width $\epsilon_t(x'|x,a)$ is defined as: 
$$\epsilon_t(x,a) = \sqrt{\frac{2|X_{k(x)+1}| \ln\frac{T|X||A|}{\delta}}{\max\{1,N_t(x,a)\}}}, \ \forall(x,a) \in X \times A,$$
and the empiric transition $\bar{P}_t$ is defined as: 
$$\bar{P}_t (x'|x,a)= \frac{M_t(x'|x,a)}{\max\{1, N_t(x,a)\} } \quad \forall (x,a,x') \in X_k \times A \times X_{k+1}, k\in[0,\dots,L-1].$$ 
Given $\mathcal{P}_t$, it is possible to build $\Delta(\mathcal{P}_t)$ as the set of all possible occupancy measures.

For the constraints, we build optimistic bonuses $b_t(x,a)$ that are computed as: 
\begin{equation}
	b_t(x,a) = \sqrt{\frac{2 \ln\frac{2m|X||A|T}{\delta}}{N_t(x,a)}} \quad \forall (x,a) \in X \times A. \label{b_bis}
\end{equation}
At each episode, the algorithm estimates the per-episode decision space $\widehat \Delta_t(\mathcal{P}_t)$ taking the intersection between $\Delta(\mathcal{P}_t)$ and the space of optimistically safe occupancy measures such that $(\widehat{g}_{t,i} - b_{t})^\top q\leq 0$ for all $i\in[m]$ (Line~\ref{algLine:9}). We underline that the bonus quantity $b_t$ is necessary for the stochastic setting only, that is, when the empirical mean estimation is employed for the constraints. In the adversarial setting, the constraints estimator $\widehat{g}_{t,i}$ is sufficient to attain the desired theoretical guarantees.

Finally, the algorithm employs an online mirror descent (OMD)~\citep{Orabona} update step on the estimated feasible set $\widehat\Delta_t(\mathcal{P}_t)$ (Line~\ref{algLine:10}) employing the unnormalized Kullback-Leibler divergence as the Bregman divergence. Formally: $$B(q\|\widehat{q_t}) = \sum_{x,a,x'}q(x,a,x')\ln\frac{q(x,a,x')}{\widehat{q_t}(x,a,x')} - \sum_{x,a,x'}(q(x,a,x') - \widehat{q_t}(x,a,x')).$$

\begin{remark}[Algorithmic comparison with~\citep{stradi2025policy}]
	Algorithm~\ref{alg:main} employs a completely different approach with respect to the state-of-the-art best-of-both-worlds algorithm for CMDPs presented in~\citep{stradi2025policy}. Specifically,~\citet{stradi2025policy} propose a primal-dual method, where a primal no-regret algorithm optimizes the Lagrange function of the CMDP, while a dual no-regret algorithm selects the most violated constraint. Our approach is substantially different  since we do not make any use of the Lagrangian formulation of the CMDP. Differently, we resort to a ``moving" decision space approach, where we employ a no-regret optimization update over a decision space that adaptively follows the constraints estimation. As we show in Section~\ref{sec:theo}, this technique allows us to be particularly effective when the constraints are stochastic. In this case, we have no need of any Slater's like condition, as the constraints are estimated using a UCB-like approach. Crucially, the "moving" decision space still allows us to recover sublinear violation and sublinear $\alpha$-regret in the adversarial setting.
\end{remark}

\section{Theoretical Results}
\label{sec:theo}
In this section, we prove the theoretical guarantees attained by Algorithm~\ref{alg:main}. Specifically, we first discuss the stochastic setting. Then, we show the performance of our algorithm when the constraints are adversarial.

\subsection{Stochastic Setting}
In this section, we focus on the stochastic setting, that is, the constraints are sampled from fixed distributions. 
The first fundamental result is to show that the bonus terms $b_t(x,a)$ encompass the distance between the constraint estimator and the true constraint function. 
This is done by means of the following lemma.
\begin{restatable}{lemma}{stochbtut} \label{stochbt_ut}
	Let $\delta\in(0,1)$. In the stochastic setting, with probability at least $1 - 11\delta$, it holds that:  
	$$ | \widehat{g}_{t,i} (x,a) - \bar{g}_{i} (x,a) | \leq b_t(x,a) \quad \forall (x,a) \in X \times A, i\in[m], t\in[T]. $$
\end{restatable}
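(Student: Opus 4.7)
The plan is to handle the bound in two stages. First, I would argue that in the stochastic setting the adaptive correction $\Gamma_{t,i}$ vanishes with high probability, so that by Proposition~\ref{meanestim} the weighted estimator $\widehat g_{t,i}(x,a)$ collapses to the empirical mean of the $N_t(x,a)$ i.i.d.\ samples of $g_{\cdot,i}(x,a)$ collected so far. Second, I would apply Hoeffding together with a union bound over $(t,i,x,a)$ and over the random cardinalities $N_t(x,a)\in[T]$ to obtain $|\widehat g_{t,i}(x,a)-\bar g_i(x,a)|\le b_t(x,a)$; the definition of $b_t(x,a)$ is exactly tuned so that the union-bound constant matches.

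The nontrivial step is to show that $\Gamma_{t,i}=0$ for every $(t,i)$ simultaneously on a high-probability event. For this I would introduce three concentration events, each failing with probability at most $\mathcal{O}(\delta)$: an Azuma--Hoeffding bound on the martingale $Y_{\tau,i}\coloneqq\sum_{x,a}g_{\tau,i}(x,a)\mathbb{I}_\tau\{x,a\}\in[-L,L]$ with compensator $\bar g_i^\top q_\tau$; the empirical-mean Hoeffding bound used in the second stage above; and the event that $P\in\mathcal{P}_t$ for all $t$, which controls the $\ell_1$-distance between the true occupancy $q_\tau$ and the algorithm's occupancy $\widehat q_\tau$. A union bound yields all events jointly with probability at least $1-11\delta$, matching the constant in the statement.

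I would then close the argument by induction on $t$. If $\Gamma_{\tau,i}=0$ for all $\tau<t$ and every $i$, Proposition~\ref{meanestim} together with the empirical-mean event gives $|\widehat g_{\tau-1,i}-\bar g_i|\le b_{\tau-1}$ componentwise. Since Line~\ref{algLine:9} enforces $(\widehat g_{\tau-1,i}-b_{\tau-1})^\top\widehat q_\tau\le 0$, one obtains
\[
\bar g_i^\top q_\tau \;\le\; 2\, b_{\tau-1}^\top\widehat q_\tau + \|q_\tau-\widehat q_\tau\|_1,
\]
and summing over $\tau\le t$, the standard pigeonhole bound $\sum_\tau\sum_{x,a}q_\tau(x,a)/\sqrt{\max\{1,N_\tau(x,a)\}}=\widetilde{\mathcal{O}}(L|X|\sqrt{|A|t})$ combined with the transition-confidence event yields $\sum_{\tau\le t}\bar g_i^\top q_\tau=\widetilde{\mathcal{O}}(L|X|\sqrt{t|A|\ln(1/\delta)})$. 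Adding back the martingale deviation gives $\sum_{\tau\le t}Y_{\tau,i}\le \mathcal{C}_t$, so $\Gamma_{t,i}=0$ and the induction closes; the constant $21$ in the definition of $\mathcal{C}_t$ is engineered precisely to absorb these terms.

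The main obstacle will be the bookkeeping in the inductive step: balancing the UCB contribution $\sum_\tau b_{\tau-1}^\top\widehat q_\tau$, the transition-estimation error $\sum_\tau\|q_\tau-\widehat q_\tau\|_1$ (which itself requires a pigeonhole over visits weighted by the confidence width $\epsilon_\tau$), and the Azuma deviation, and then verifying that their combination is indeed dominated by the explicit threshold $\mathcal{C}_t=21L|X|\sqrt{2t|A|\ln(2mT^2|X||A|/\delta)}$. This is constant-heavy but routine.
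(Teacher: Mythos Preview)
Your proposal is correct and follows essentially the same approach as the paper: an induction on $t$ to show that $\Gamma_{t,i}=0$ (by bounding the cumulative observed constraint cost below the threshold $\mathcal{C}_t$), which reduces $\widehat g_{t,i}$ to the empirical mean via Proposition~\ref{meanestim}, followed by Hoeffding with a union bound. The paper's inductive step (its Lemma~\ref{stochbt_ut} proof) chains through Lemmas~\ref{h2_ut}, \ref{bonus_ut}, \ref{aux2}, and \ref{h3_ut}, which together implement exactly the conversions you outline: Azuma to pass from $\sum Y_{\tau,i}$ to $V_{t,i}$, the transition-confidence event to pass between $q_\tau$ and $\widehat q_\tau$, the feasibility constraint of Line~\ref{algLine:9}, and the pigeonhole bound on $\sum_\tau b_{\tau-1}^\top \widehat q_\tau$.

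One minor streamlining in your version: you go directly from the single martingale $Y_{\tau,i}-\bar g_i^\top q_\tau$ and the componentwise bound $|\bar g_i-\widehat g_{\tau-1,i}|\le b_{\tau-1}$ to $\bar g_i^\top q_\tau\le 2b_{\tau-1}^\top\widehat q_\tau+\|q_\tau-\widehat q_\tau\|_1$, whereas the paper inserts the random $g_{\tau,i}$ as an intermediary and then removes it again via a separate Azuma step (Lemma~\ref{aux2}) and an extra occupancy-to-indicator conversion (Lemma~\ref{h3_ut}). Your route shaves a couple of concentration events but lands on the same estimate; the constant $21$ is indeed large enough to absorb your decomposition as well.
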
  
Intuitively, the result is proved as follows. We proceed by induction. In the first episodes, $\Gamma_{t,i}=0$ for all $i\in[m]$. Thus, by Proposition~\ref{meanestim}, the constraint estimator is computed as $\widehat{g}_{t,i}=\frac{1}{N_{t}(x,a)} \sum_{\tau \in \mathcal{T}_{t,x,a}} g_{\tau, i}(x,a),$ that is, the sample mean of the observed constraint values.
By employing an Hoeffding bound, it is easy to see that Lemma~\ref{stochbt_ut} holds for those specific episodes. The induction step consists in showing that, assuming $\sum_{\tau\in[t-1]}\sum_{x,a}g_{\tau,i}(x,a)\mathbb{I}_\tau\{x,a\} \leq \mathcal{C}_{t-1}$ at episode $t-1$, the same holds for the violation observed at $t$. This is done by showing that the empirical mean estimator and the bonus term are sufficient to keep the violation small when the constraints are stochastic. Again, since we proved that $\sum_{\tau\in[t]}\sum_{x,a}g_{\tau,i}(x,a)\mathbb{I}_\tau\{x,a\} \leq \mathcal{C}_t$, we have $\Gamma_{t,i}=0$, which concludes the proof after a simple application of the Hoeffding inequality.

Given Lemma~\ref{stochbt_ut}, the following corollary immediately holds.
\begin{restatable}{corollary}{stc}\label{stc}
	In the stochastic setting, let $\delta\in(0,1)$ and $\Delta^\star = \left\{ q \in \Delta(M): \bar{g}_{i}^\top q\leq 0 \,\, \forall i\in[m]\right\}$. Then, with probability at least $1-11\delta$, it holds:
	$$\Delta^\star \subseteq \widehat{\Delta}_t(\mathcal{P}_t) \quad \forall t \in[T].$$
\end{restatable}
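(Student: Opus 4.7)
The plan is to show inclusion pointwise: fix an arbitrary $q \in \Delta^\star$ and verify that $q$ satisfies both defining conditions of $\widehat{\Delta}_t(\mathcal{P}_t)$, namely (a) $q \in \Delta(\mathcal{P}_t)$ and (b) $(\widehat{g}_{t,i} - b_t)^\top q \le 0$ for every $i \in [m]$, on the good event of Lemma~\ref{stochbt_ut}.

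For (a), I would invoke the standard transition-confidence result of~\citet{rosenberg19a}: the construction of $\mathcal{P}_t$ in Equation~\eqref{cset} guarantees that the true transition function $P$ lies in $\mathcal{P}_t$ for every $t \in [T]$ with high probability. On this event, any valid occupancy $q \in \Delta(M)$ (which is induced by $P$ together with some policy) automatically belongs to $\Delta(\mathcal{P}_t)$. Since $\Delta^\star \subseteq \Delta(M)$ by definition, this gives $q \in \Delta(\mathcal{P}_t)$. I would note that this transition event is already accounted for in the $11\delta$ probability budget of Lemma~\ref{stochbt_ut} (it is one of the union-bounded events used to prove it), so no extra probability is lost.

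For (b), I would simply chain inequalities. Lemma~\ref{stochbt_ut} provides, on the good event of probability at least $1 - 11\delta$, the per-pair bound $\widehat{g}_{t,i}(x,a) - \bar{g}_i(x,a) \le b_t(x,a)$ for all $(x,a) \in X \times A$, $i \in [m]$, $t \in [T]$. Rearranging gives $\widehat{g}_{t,i}(x,a) - b_t(x,a) \le \bar{g}_i(x,a)$. Multiplying by the nonnegative scalar $q(x,a)$ and summing over $(x,a)$ yields
\[
(\widehat{g}_{t,i} - b_t)^\top q \;\le\; \bar{g}_i^\top q \;\le\; 0,
\]
where the final inequality is exactly the definition of $q \in \Delta^\star$. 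Combining (a) and (b) shows $q \in \widehat{\Delta}_t(\mathcal{P}_t)$; since $q$ and $t$ were arbitrary, the inclusion $\Delta^\star \subseteq \widehat{\Delta}_t(\mathcal{P}_t)$ holds for all $t \in [T]$ simultaneously on the same event.

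There is no real obstacle here: the corollary is a one-line consequence of Lemma~\ref{stochbt_ut} together with nonnegativity of occupancy measures, once one observes that the transition confidence event is already baked into the $11\delta$ budget. The only thing worth double-checking is that the probability accounting in Lemma~\ref{stochbt_ut} does indeed cover both the Hoeffding concentration used for $\widehat{g}_{t,i}$ and the confidence-set event $P \in \mathcal{P}_t$; if for any reason the transition event were not included there, one would simply take an additional union bound and state the conclusion with probability $1 - 12\delta$, which is cosmetic.
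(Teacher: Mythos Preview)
Your argument is correct and mirrors the paper's: the paper packages parts (a) and (b) into a separate intermediate result (Theorem~\ref{stoch_ut}) that assumes an abstract bound $|\widehat g_{t,i}-\bar g_i|\le b_t$, uses exactly your chain $(\widehat g_{t,i}-b_t)^\top q \le \bar g_i^\top q \le 0$ for (b), and invokes $P\in\mathcal P_t$ from~\citet{rosenberg19a} for (a), then instantiates with Lemma~\ref{stochbt_ut}. Your caution on the probability budget is apt---in Theorem~\ref{stoch_ut} the paper union-bounds the transition event separately, so the accounting between $11\delta$ and $12\delta$ is indeed a cosmetic point.
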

Corollary~\ref{stc} simply states that the true safe decision space is included in the per-episode decision space. This is intuitive, since, by Lemma~\ref{stochbt_ut}, subtracting the bonus term to the constraint estimators allows, with high probability, to be optimistic in the constraints definition. A similar reasoning holds for the transitions.
We are now ready to show the main result of the section, that is, the final regret and violation bound. This is done in the following theorem.
\begin{restatable}{theorem}{stgut} \label{stg_ut}
	Let $\delta\in(0,1)$. In the stochastic setting, Algorithm~\ref{alg:main}, with $\eta = \gamma = \sqrt{\frac{L\ln\left(\nicefrac{L|X||A|}{\delta}\right)}{T|X||A|}}$, guarantees that with probability at least $1 - 30\delta$:  
	$$R_T \leq 14L|X|^2 \sqrt{2T |A| \ln \left( \frac{ T |X|^2 |A|}{\delta} \right)}, \ $$
	and
	$$V_t \leq 61L|X|\sqrt{2t|A|\ln\left(\frac{2 mT^2|X||A|}{\delta}\right)} \quad  \forall t\in[T]. $$
\end{restatable}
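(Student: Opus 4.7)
The plan is to decompose both bounds according to three error sources common to them: the online-mirror-descent regret against a fixed safe comparator on the estimated losses; the implicit-exploration bias of $\widehat\ell_t$; and the transition-confidence error coming from using $\widehat q_t$ in place of the realized occupancy $q_t$. Throughout, I would work on the intersection of the high-probability event of Lemma~\ref{stochbt_ut} with the standard event $P\in\mathcal{P}_t$ for all $t$ from \citet{rosenberg19a}; these two events together account for the $30\delta$ failure budget and, by Corollary~\ref{stc}, guarantee that $q^\st \in \widehat\Delta_t(\mathcal{P}_t)$ for every $t$, so that $q^\st$ is a valid comparator for the OMD step on Line~\ref{algLine:10}.

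For the regret, I would write
\[
R_T \;=\; \sum_{t=1}^T \widehat\ell_t^\top(\widehat q_t - q^\st) \;+\; \sum_{t=1}^T \bigl(\ell_t^\top \widehat q_t - \widehat\ell_t^\top \widehat q_t\bigr) \;+\; \sum_{t=1}^T \bigl(\widehat\ell_t^\top q^\st - \ell_t^\top q^\st\bigr) \;+\; \sum_{t=1}^T \ell_t^\top(q_t - \widehat q_t),
\]
modulo the constant shift $T$ that converts $\ell_t = 1 - r_t \mathbb{I}_t\{\cdot\}$ back into $r_t$. The first term is the OMD regret and, with unnormalized KL as Bregman divergence and uniform initialization $\widehat q_1$, is controlled by $B(q^\st\|\widehat q_1)/\eta + \eta \sum_t \widehat q_t^\top \widehat\ell_t^{\,2}$, which evaluates to the balanced sum $O(L\log(L|X||A|/\delta)/\eta + \eta T|X||A|/L)$. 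The second and third terms are handled by the implicit-exploration analysis of \citet{neu,JinLearningAdversarial2019}: the former is bounded by $O(\gamma T|X||A|)$ via the upper-occupancy majorant $u_t \ge q_t$, while the latter concentrates at rate $\widetilde O(\sqrt{T})$ by Azuma on the trajectory filtration since $q^\st$ is fixed. The fourth term is a transition gap bounded by $\widetilde O(L|X|\sqrt{T|A|})$ using the confidence-width argument of \citet{rosenberg19a} applied to the cost $\ell_t$. Plugging in $\eta=\gamma=\sqrt{L\log(L|X||A|/\delta)/(T|X||A|)}$ balances the leading terms and yields the claimed $\widetilde O(L|X|^2\sqrt{T|A|})$.

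For the violation, the core identity is immediate from Line~\ref{algLine:9}: for every $i\in[m]$ and every $t$, $(\widehat g_{t,i}-b_t)^\top \widehat q_t \le 0$, hence by Lemma~\ref{stochbt_ut},
\[
\bar g_i^\top \widehat q_t \;\le\; (\widehat g_{t,i} + b_t)^\top \widehat q_t \;\le\; 2\, b_t^\top \widehat q_t.
\]
Summing up to any $T'\le T$ and splitting
\[
\sum_{t\le T'} g_{t,i}^\top q_t \;=\; \sum_{t\le T'} \bar g_i^\top \widehat q_t + \sum_{t\le T'}\bar g_i^\top (q_t-\widehat q_t) + \sum_{t\le T'}(g_{t,i}-\bar g_i)^\top q_t,
\]
I would bound the first sum by $2\sum_t b_t^\top \widehat q_t = \widetilde O(L|X|\sqrt{T'|A|})$ via the pigeonhole argument on $\sum_t\sum_{x,a} q_t(x,a)/\sqrt{N_t(x,a)}$ (plus another Rosenberg--Mansour correction to pass from $\widehat q_t$ to $q_t$), the second by the transition-confidence bound, and the third by Azuma--Hoeffding on the bounded martingale difference $g_{t,i}^\top q_t - \bar g_i^\top q_t$. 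Taking the maximum over $i\in[m]$ yields the stated uniform-in-$t$ bound.

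The main obstacle is the circular compatibility with Lemma~\ref{stochbt_ut}: that lemma was proved inductively under the hypothesis $\sum_{\tau\le t-1}\sum_{x,a}g_{\tau,i}(x,a)\mathbb{I}_\tau\{x,a\} \le \mathcal{C}_{t-1}$. In the stochastic regime this realized-violation sum differs from $V_{t-1}$ by an Azuma correction only, so the constant $61$ in our violation bound must be large enough to dominate $3\mathcal{C}_t$, i.e., to absorb the $21$ appearing in the definition of $\mathcal{C}_t$ together with the exploration, transition-confidence, and martingale constants. Keeping the bound \emph{uniform in $t\in[T]$} rather than only at horizon $T$ is what actually closes this induction and is the most delicate piece of bookkeeping; everything else is standard OMD/UCB analysis.
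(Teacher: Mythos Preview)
Your regret decomposition is essentially the paper's: the paper packages the four-term split (OMD regret, the two implicit-exploration bias terms, and the $q_t$ vs.\ $\widehat q_t$ transition gap) into Theorem~\ref{thmain_ut}, invokes Corollary~\ref{stc} to ensure $q^\star$ lies in every $\widehat\Delta_t(\mathcal{P}_t)$, and union-bounds. So that half matches.

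For the violation, however, the paper takes a different route. It does \emph{not} use Lemma~\ref{stochbt_ut} for $V_t$ at all. Instead it proves a setting-agnostic bound (Theorem~\ref{thviol_ut}) by contradiction: if some $V_{\bar t,i}$ exceeded $61L|X|\sqrt{2t|A|\ln(\cdot)}$, one could find an interval $[\underline t,\bar t]$ on which the realized observed violation forces $\Gamma_{t,i}$ to saturate at $\mathcal{C}$, hence $\beta_{t,i}(x,a)\ge 21L|X|\sqrt{2|A|\ln(\cdot)/N_t(x,a)}$; then a telescoping bound on $\sum_\tau \beta_{\tau,i}^{-1}(\widehat g_{\tau,i}-\widehat g_{\tau-1,i})$ (Corollary~\ref{cor_ut}) together with the bonus sum (Lemma~\ref{bonus_ut}) caps $V_{[\underline t,\bar t],i}$ small enough to contradict the assumption. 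This argument exploits the adaptive weights mechanism, not the stochastic concentration, and is reused verbatim in the adversarial theorem. Your approach---bounding $\bar g_i^\top \widehat q_t \le 2\,b_{t-1}^\top \widehat q_t$ via Lemma~\ref{stochbt_ut} and summing bonuses---is exactly what the paper does for the \emph{positive} violation $\mathcal{V}_t$ (Theorem~\ref{v+_ut}); it is correct in the stochastic regime and would in fact yield a smaller constant than $61$, but it is stochastic-specific and is not how the paper proves the $V_t$ bound stated here.

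Two corrections. First, the index: the algorithm places $\widehat q_t\in\widehat\Delta_{t-1}(\mathcal{P}_{t-1})$, so the feasibility inequality is $(\widehat g_{t-1,i}-b_{t-1})^\top \widehat q_t\le 0$, not with subscript $t$. Second, and more importantly, your final paragraph misreads the logical structure: there is no circularity. Lemma~\ref{stochbt_ut} is proved by a self-contained induction---the hypothesis $\sum_{\tau\le t-1}\sum_{x,a}g_{\tau,i}(x,a)\mathbb{I}_\tau\{x,a\}\le \mathcal{C}_{t-1}$ is re-established at step $t$ using only feasibility of $\widehat q_\tau$ in $\widehat\Delta_{\tau-1}$, the bonus-sum lemma, and Azuma; the constant $21$ in $\mathcal{C}_t$ is chosen to close \emph{that} induction. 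The violation theorem never feeds back into Lemma~\ref{stochbt_ut}, and the constant $61$ arises from the separate contradiction argument, not from ``dominating $3\mathcal{C}_t$''.
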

Theorem~\ref{stg_ut} follows from the following reasoning. As concerns the regret bound, by Corollary~\ref{stc}, it holds that the safe optimum is included in the per-episode decision space, with high probability. Thus, a standard no-regret argument of OMD with implicit exploration shows that Algorithm~\ref{alg:main} attains sublinear regret with respect to any occupancy that is included in the algorithm decision space at each episode. Differently, in order to prove the violation, we proceed by contradiction, that is, we show that the weights definition does not allow the violation to exceed the threshold defined  by the bound of Theorem~\ref{stg_ut}. We remark that the proof for the violation is equivalent to the one for the adversarial setting, since the definition of $V_t$ is equivalent between the two settings. Indeed, in this case, we do not have to exploit Corollary~\ref{stc}, since even when $\Gamma_{t,i}=\mathcal{C}_t$, the violations are not allowed to exceed the aforementioned value.

We conclude the section by providing the positive violation bound attained by Algorithm~\ref{alg:main}.
\begin{restatable}{theorem}{vut}\label{v+_ut}
	Let $\delta\in(0,1)$. In the stochastic setting, Algorithm~\ref{alg:main} guarantees with probability at least $1 -16 \delta$:  
	$$
	\mathcal{V}_t \leq 18L|X|\sqrt{2t|A|\ln\frac{2mT|X||A|}{\delta}}
	\quad \forall t \in [T].
	$$  
\end{restatable}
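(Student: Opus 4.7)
Since the metric $\mathcal{V}_T$ uses the positive part $[\cdot]^+$, the cancellation arguments that worked for $V_T$ in Theorem~\ref{stg_ut} are no longer available: the analysis must produce a \emph{per-episode} non-negative upper bound on $\bar{g}_i^\top q_t$ whose sum is $\tO(\sqrt{T})$. The plan is to exploit the optimism built into Algorithm~\ref{alg:main}. Since $\widehat{q}_t \in \widehat{\Delta}_{t-1}(\mathcal{P}_{t-1})$ by Line~\ref{algLine:10} at the previous iteration, the algorithm-side feasibility $(\widehat{g}_{t-1,i} - b_{t-1})^\top \widehat{q}_t \leq 0$ is automatic; Lemma~\ref{stochbt_ut} certifies, with high probability, that $|\widehat{g}_{t-1,i} - \bar{g}_i| \leq b_{t-1}$ pointwise. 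Combining these two facts forces $\bar{g}_i^\top \widehat{q}_t \leq 2\, b_{t-1}^\top \widehat{q}_t$; the transition confidence set $\mathcal{P}_t$ then controls the discrepancy between $\widehat{q}_t$ and the true occupancy $q_t$.

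Concretely, I first condition on the event of Lemma~\ref{stochbt_ut} (probability at least $1-11\delta$) together with the standard event $P \in \mathcal{P}_t$ for every $t$ (probability at least $1 - O(\delta)$, by the construction in Equation~\eqref{cset}), giving overall probability at least $1 - 16\delta$ by a union bound. For each $t \geq 2$ and $i \in [m]$, I decompose
\[
\bar{g}_i^\top q_t \;=\; \bar{g}_i^\top (q_t - \widehat{q}_t) \,+\, (\bar{g}_i - \widehat{g}_{t-1,i})^\top \widehat{q}_t \,+\, \widehat{g}_{t-1,i}^\top \widehat{q}_t.
\]
Applying Lemma~\ref{stochbt_ut} pointwise to the middle term gives $(\bar{g}_i - \widehat{g}_{t-1,i})^\top \widehat{q}_t \leq b_{t-1}^\top \widehat{q}_t$, and the algorithm-side feasibility gives $\widehat{g}_{t-1,i}^\top \widehat{q}_t \leq b_{t-1}^\top \widehat{q}_t$. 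Using $\|\bar{g}_i\|_\infty \leq 1$, the positive part therefore satisfies
\[
\bigl[\bar{g}_i^\top q_t\bigr]^+ \;\leq\; 2\, b_{t-1}^\top \widehat{q}_t + \|q_t - \widehat{q}_t\|_1,
\]
with both summands non-negative, so no cancellation is required. The $t=1$ term is absorbed into the bound since $\bar{g}_i^\top q_1 \leq L$.

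To conclude, I sum over $t$ and take the maximum over $i$. For $\sum_t b_{t-1}^\top \widehat{q}_t$, I first pass from $\widehat{q}_t$ to $q_t$ at an additive cost of $\|b_{t-1}\|_\infty \|\widehat{q}_t - q_t\|_1$, and then apply the standard pigeon-hole estimate $\sum_{t,x,a} q_t(x,a)/\sqrt{\max\{1,N_{t-1}(x,a)\}} = \tO(|X|\sqrt{T|A|})$ together with a Freedman-type concentration to pass from expected to realized visits, yielding $\sum_t b_{t-1}^\top q_t = \tO(L|X|\sqrt{T|A|})$. The term $\sum_t \|q_t - \widehat{q}_t\|_1 = \tO(L|X|\sqrt{T|A|})$ follows from the standard transition-estimation bound of~\citet{rosenberg19a}. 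Tracking constants and noting that the per-episode decomposition is independent of the running horizon, the bound holds uniformly in $t$.

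The main obstacle is exactly the per-episode construction above: producing an upper bound on $\bar{g}_i^\top q_t$ that is non-negative episode-by-episode. This step hinges on combining the optimistic feasible set $\widehat{\Delta}_{t-1}(\mathcal{P}_{t-1})$ with the bonus guarantee of Lemma~\ref{stochbt_ut}; once the combination is in place, the remaining summation is routine. A subtle point is that Lemma~\ref{stochbt_ut} internally relies on $\Gamma_{t,i} = 0$ in the stochastic setting, so that $\widehat{g}_{t,i}$ reduces to the empirical mean via Proposition~\ref{meanestim}; this property is established inductively in the proof of Lemma~\ref{stochbt_ut} itself, so there is no circularity in invoking it here.
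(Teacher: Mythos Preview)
Your proposal is correct and follows essentially the same route as the paper's proof. The only cosmetic difference is where the pointwise bound of Lemma~\ref{stochbt_ut} is applied: you bound $(\bar g_i-\widehat g_{t-1,i})^\top \widehat q_t\le b_{t-1}^\top\widehat q_t$ (against the algorithm's occupancy), whereas the paper bounds $(\bar g_i-\widehat g_{\tau-1,i})^\top q_\tau\le b_{\tau-1}^\top q_\tau$ (against the true occupancy); both then absorb the $\widehat q_t$ vs.\ $q_t$ discrepancy via $\|q_t-\widehat q_t\|_1$ and finish with Lemma~\ref{bonus_ut} and Lemma~B.3 of \citet{rosenberg19a}, arriving at the same $1-16\delta$ probability budget.
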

Intuitively, Theorem~\ref{v+_ut} is proved by showing that the positive violation attained by Algorithm~\ref{alg:main} is proportional to the bonus $b_t$ term employed in the decision space definition. Showing that the term concentrate at a $1/\sqrt{T}$ rate concludes the proof.
We finally remark that the results provided in this section strongly improve the ones provided in~\citep{stradi2025policy} for the stochastic setting, as we highlight in the following. First, Algorithm~\ref{alg:main} does not rely on any Slater-like condition to attain the optimal $\widetilde{\mathcal{O}}(\sqrt{T})$ regret and violation bounds. Second, Algorithm~\ref{alg:main} attains the optimal rate for the \emph{positive} constraints violation metric.

\subsection{Adversarial Setting}
In this section, we focus on the adversarial setting, that is, the constraints are allowed to change arbitrarily over episodes. In such a setting,~\citet{Mannor} showed the impossibility to attain sublinear regret and violation, simultaneously. Thus, as is standard in the constrained online learning literature~\citep{castiglioni2022online,stradi2025policy}, we focus on attaining sublinear violation and sublinear $\alpha$-regret.
Similarly to the stochastic setting, we show that the per-episode decision space is well defined. This is done by means of the following theorem.
\begin{restatable}{theorem}{advut}\label{adv_ut}
	In the adversarial setting, let $\delta\in(0,1)$ and $ \Delta^\diamond$ be the interpolation of any point $q \in \Delta(M)$ and $q^{\diamond}$ and let $\rho' = L\cdot \rho$.
	Formally,
	$$ \Delta^\diamond := \frac{L}{L + \rho'} \{q^{\diamond} \}+ \frac{\rho'}{L + \rho'} \Delta(M). $$ 
	Then, with probability at least $1-\delta$, it holds that $ \Delta^\diamond\subseteq \widehat{\Delta}_t(\mathcal{P}_t) $ for all $ t \in[T] $.
\end{restatable}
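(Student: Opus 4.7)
The plan is to verify the two defining conditions of $\widehat{\Delta}_t(\mathcal{P}_t)$ for an arbitrary element $q' \in \Delta^\diamond$: (a) the transition-confidence membership $q' \in \Delta(\mathcal{P}_t)$, and (b) the optimistic-constraint feasibility $(\widehat{g}_{t,i} - b_t)^\top q' \leq 0$ for every $i \in [m]$. For (a) I would invoke the standard transition confidence-set guarantee (in the spirit of~\citet{rosenberg19a}): with probability at least $1-\delta$, $P \in \mathcal{P}_t$ simultaneously for every $t \in [T]$, which implies $\Delta(M) \subseteq \Delta(\mathcal{P}_t)$. Writing $q' = \lambda q^\diamond + (1-\lambda) q$ with $\lambda = L/(L+\rho')$ and $q, q^\diamond \in \Delta(M)$, convexity of $\Delta(M)$ yields $q' \in \Delta(M) \subseteq \Delta(\mathcal{P}_t)$.

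For (b) I would split the inner product linearly into a $q^\diamond$-piece and a $q$-piece and bound them independently. By the very definition of $q^\diamond$ and $\rho$, every $(x,a) \in \mathcal{Q}(q^\diamond)$ satisfies $g_{\tau,i}(x,a) \leq -\rho$ for all $\tau \in [T]$ and all $i \in [m]$; together with the non-negativity of the weights $w_{t,x,a,i}(\tau)$ and with $b_t \geq 0$, this gives, on $\mathcal{Q}(q^\diamond)$, the pointwise bound $\widehat{g}_{t,i}(x,a) - b_t(x,a) \leq -\rho$. Summing against $q^\diamond$ and using $\sum_{x,a} q^\diamond(x,a) = L$ yields $(\widehat{g}_{t,i} - b_t)^\top q^\diamond \leq -\rho L = -\rho'$. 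For the other piece, $\widehat{g}_{t,i}(x,a) \leq 1$ (as a weighted mean of values in $[-1,1]$ with non-negative weights of total mass at most $1$) and $b_t \geq 0$ give the trivial bound $(\widehat{g}_{t,i} - b_t)^\top q \leq L$. Plugging the two estimates into the linear decomposition with the chosen interpolation coefficient gives
\[
(\widehat{g}_{t,i} - b_t)^\top q' \leq \frac{L}{L+\rho'}(-\rho') + \frac{\rho'}{L+\rho'}\, L = 0,
\]
which is exactly the desired inequality; the coefficient $\lambda$ was calibrated so that this cancellation is tight.

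The main obstacle I anticipate is the pointwise bound $\widehat{g}_{t,i}(x,a) \leq -\rho$ on $\mathcal{Q}(q^\diamond)$. The weighted estimator is an exponentially weighted moving average with implicit initial value $0$, so the weights $w_{t,x,a,i}(\tau)$ sum to $1 - \prod_{\tau}(1-\beta_{\tau,i}(x,a))$, which is strictly less than $1$ in general. Bounding each $g_{\tau,i}(x,a)$ on the support of $q^\diamond$ by $-\rho$ therefore only yields $\widehat{g}_{t,i}(x,a) \leq -\rho\bigl(1 - \prod (1-\beta)\bigr)$, not $-\rho$ outright. Closing this gap requires absorbing the residual into the bonus $b_t(x,a)$, which grows as $N_t(x,a)$ shrinks (and is effectively infinite for state-action pairs with $N_t(x,a) = 0$, making the constraint vacuous on unvisited pairs). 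One also has to verify non-negativity of the weights, which follows from the truncation $\Gamma_{t,i} \in [0,\mathcal{C}]$ in the definition of $\beta_{\tau,i}$. Once this pointwise control is in place the rest of the argument is simply the arithmetic of the interpolation coefficients, so the calibration between the residual and the bonus is where the real work lies.
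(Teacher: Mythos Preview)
Your outline matches the paper's proof almost exactly: the paper too invokes the transition confidence-set guarantee of \citet{rosenberg19a} for part~(a), and for part~(b) it splits $\widehat{g}_{t,i}^\top \tilde q$ linearly into the $q^\diamond$-piece and the $\Delta(M)$-piece, bounds the former by $-\rho'$ and the latter by $L$, and observes that the interpolation coefficient is calibrated so that the two cancel. One cosmetic difference: the paper drops the bonus $b_t$ entirely and shows the stronger inequality $\widehat{g}_{t,i}^\top \tilde q \leq 0$, which of course implies $(\widehat{g}_{t,i}-b_t)^\top \tilde q\leq 0$ since $b_t\geq 0$; carrying $b_t$ around as you do is harmless but unnecessary.

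Regarding the obstacle you anticipate: the paper simply asserts that $\sum_{\tau\in\mathcal{T}_{t,x,a}} w_{t,x,a,i}(\tau)=1$ ``by the weights definition'' and uses this to get $\widehat g_{t,i}(x,a)\leq -\rho$ pointwise on $\mathcal{Q}(q^\diamond)$. Your formula $\sum w = 1-\prod_{\tau}(1-\beta_{\tau,i}(x,a))$ is correct, but your worry that this is strictly less than $1$ overlooks a key fact: at the \emph{first} visit $\tau_1$ to $(x,a)$ one has $N_{\tau_1}(x,a)=1$, hence $\beta_{\tau_1,i}(x,a)=1+\Gamma_{\tau_1,i}\geq 1$, so the product contains the factor $(1-\beta_{\tau_1,i})\leq 0$ rather than a strictly positive one. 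In particular there is no positive residual mass sitting on an ``initial value $0$'' that would need to be absorbed by the bonus; the planned detour through $b_t$ is therefore not needed. (Your side remark that unvisited pairs are handled because the constraint becomes vacuous there is correct, and the paper does not spell this out either.)
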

Intuitively, Theorem~\ref{adv_ut} shows that any $\alpha$-optimum is included in the per-episode decision space, with high probability. The result is proved employing the definition of the weights and the one of the problem specific parameter $\rho$. We remark that the quantity $\frac{\rho}{1 +\rho}$ is equivalent to $\frac{\rho'}{L +\rho'}$, by definition.

We conclude providing the final result of the paper.
\begin{restatable}{theorem}{advgut}\label{advg_ut}
	Let $\delta\in(0,1)$. In the adversarial setting, Algorithm \ref{alg:main}, with $\eta = \gamma = \sqrt{\frac{L\ln\left(\nicefrac{L|X||A|}{\delta}\right)}{T|X||A|}}$, guarantees that with probability at least $1 - 19\delta$:  
	$$\alpha\text{-}R_T \leq 14L|X|^2 \sqrt{2T |A| \ln \left( \frac{ T |X|^2 |A|}{\delta} \right)},$$ and
	$$V_t \leq 61L|X|\sqrt{2t|A|\ln\left(\frac{2 mT^2|X||A|}{\delta}\right)}.$$ 
	for all $t\in[T]$, where $\alpha = \frac{\rho}{1 +\rho}$.  
\end{restatable}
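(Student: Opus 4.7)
The plan is to reduce both claims to the machinery already assembled for the stochastic setting, using Theorem~\ref{adv_ut} as the adversarial replacement for Corollary~\ref{stc}. Let $q^\star \in \argmax_{q \in \Delta(M)} \sum_{t=1}^T r_t^\top q$ be the unconstrained optimum and define the comparator
\[
\bar q \;:=\; \tfrac{L}{L+\rho'}\,q^\diamond \;+\; \tfrac{\rho'}{L+\rho'}\,q^\star,
\]
which lies in $\Delta^\diamond$ and therefore, by Theorem~\ref{adv_ut}, in $\widehat{\Delta}_t(\mathcal{P}_t)$ for every $t$ on an event of probability at least $1-\delta$. Since rewards are non-negative and $\alpha = \rho'/(L+\rho') = \rho/(1+\rho)$, we have $\sum_t r_t^\top \bar q \geq \alpha\, T \cdot \textsc{OPT}$, so establishing the $\alpha$-regret bound reduces to bounding the ordinary regret $\sum_t r_t^\top \bar q - \sum_t r_t^\top q_t$ against the fixed comparator $\bar q$.

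For this regret I would reuse the OMD-with-implicit-exploration argument that underlies the proof of Theorem~\ref{stg_ut}, which proceeds in three standard steps: translating from the observed losses $\ell_t$ to the biased estimator $\widehat{\ell}_t$ via the implicit-exploration bound of \citet{neu}; invoking the transition confidence sets of \citet{rosenberg19a} to pass from occupancy estimates to true occupancies $q^{P,\pi_t}$; and applying the unnormalized-KL OMD regret bound on $\widehat{\Delta}_t(\mathcal{P}_t)$ with comparator $\bar q$. Crucially, the fact that $\bar q$ lies in every $\widehat{\Delta}_t(\mathcal{P}_t)$ simultaneously, together with the specific tuning $\eta = \gamma = \sqrt{L\ln(L|X||A|/\delta)/(T|X||A|)}$, reproduces verbatim the same numerical bound $14L|X|^2\sqrt{2T|A|\ln(T|X|^2|A|/\delta)}$ as in the stochastic case. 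A union bound over these auxiliary concentration events, together with the $\delta$ absorbed in Theorem~\ref{adv_ut}, then gives the claimed $1-19\delta$ overall success probability.

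The violation bound is identical in form to the one in Theorem~\ref{stg_ut}, since the definition of $V_t$ does not depend on whether $G_t$ is stochastic or adversarial. I would reuse the contradiction argument sketched there: assume that for some $t$ and $i$ the quantity $\sum_{\tau\leq t} g_{\tau,i}^\top q_\tau$ exceeds $61L|X|\sqrt{2t|A|\ln(2mT^2|X||A|/\delta)}$. Azuma together with the transition confidence set imply that the observed sum $\sum_{\tau\leq t}\sum_{x,a} g_{\tau,i}(x,a)\mathbb{I}_\tau\{x,a\}$ also crosses the threshold $\mathcal{C}_t$, so from some episode $t_0\leq t$ onwards the adaptive weight $\Gamma_{\tau,i}$ is strictly positive, the weights $w_{\tau,x,a,i}$ concentrate on the recent high-cost episodes, and consequently $\widehat g_{\tau,i}$ inherits a large value. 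But the per-episode feasibility $(\widehat g_{\tau,i} - b_\tau)^\top q_\tau \leq 0$ baked into $\widehat{\Delta}_\tau(\mathcal{P}_\tau)$ forces $q_\tau$ to be safe enough to prevent further accumulation of violation, yielding a contradiction with the assumed growth. Importantly, this second step does not invoke Theorem~\ref{adv_ut} at all: it is purely algorithmic and therefore survives the loss of the feasibility-set inclusion one would have in the stochastic case.

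The hardest part will be the regret analysis, where one must verify that the moving adversarial decision space $\widehat{\Delta}_t(\mathcal{P}_t)$ still supports the OMD telescoping when the feasibility set shrinks as $\widehat g_{t,i}$ evolves, and that the $\widetilde{\mathcal{O}}(\sqrt{T})$ slack from implicit exploration, transition confidence, and the Azuma gap between $\widehat{\ell}_t$ and $\ell_t$ all fit inside the constant $14L|X|^2$. Equally delicate is the constant bookkeeping in the violation contradiction: the various Azuma tails must all fit inside the gap between the assumed $61L|X|\sqrt{2t|A|\ln(2mT^2|X||A|/\delta)}$ and $\mathcal{C}_t=21L|X|\sqrt{2t|A|\ln(2mT^2|X||A|/\delta)}$, which is precisely why the leading constant is roughly three times that in $\mathcal{C}_t$.
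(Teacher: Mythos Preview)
Your proposal is correct and follows essentially the same route as the paper: define the convex-combination comparator $\bar q$, use Theorem~\ref{adv_ut} to place it in $\bigcap_t \widehat{\Delta}_t(\mathcal{P}_t)$, invoke the OMD-with-implicit-exploration regret bound (packaged in the paper as Theorem~\ref{thmain_ut}, which already accounts for the moving decision space via Lemma~12 of \citet{JinLearningAdversarial2019}), and combine with the setting-agnostic violation argument (Theorem~\ref{thviol_ut}); the paper's $1-19\delta$ comes from the $15\delta$ of Theorem~\ref{thmain_ut} (which subsumes the $P\in\mathcal{P}_t$ event of Theorem~\ref{adv_ut}) plus the $4\delta$ of Theorem~\ref{thviol_ut}.
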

Theorem~\ref{advg_ut} is proved employing a similar approach to the one of Theorem~\ref{stg_ut}. Specifically, the $\alpha$-regret follows from noticing that, by Theorem~\ref{adv_ut}, the $\alpha$-optimum is contained in the per-episode decision space. Thus, employing the OMD with implicit exploration theoretical guarantees gives the result. For the violation, the analysis is equivalent to the one of Theorem~\ref{stg_ut}.
Comparing the theoretical guarantees of Algorithm~\ref{alg:main} and the ones provided in~\citep{stradi2025policy}, the following remarks are in order. First, the violation bound provided by Algorithm~\ref{alg:main} neither relies on the Slater's condition nor has any dependence on the Slater's parameter. Second, 
our $\alpha$-regret is computed with respect to the \emph{unconstrained} optimum, rather than the constrained one. Moreover, our bound does not rely on the Slater's parameter of the problem, whereas only the definition of $\alpha$-regret does.
\begin{figure}[t]
	\centering 
	\begin{subfigure}{0.45\textwidth}
		\centering
		\includegraphics[width=\linewidth]{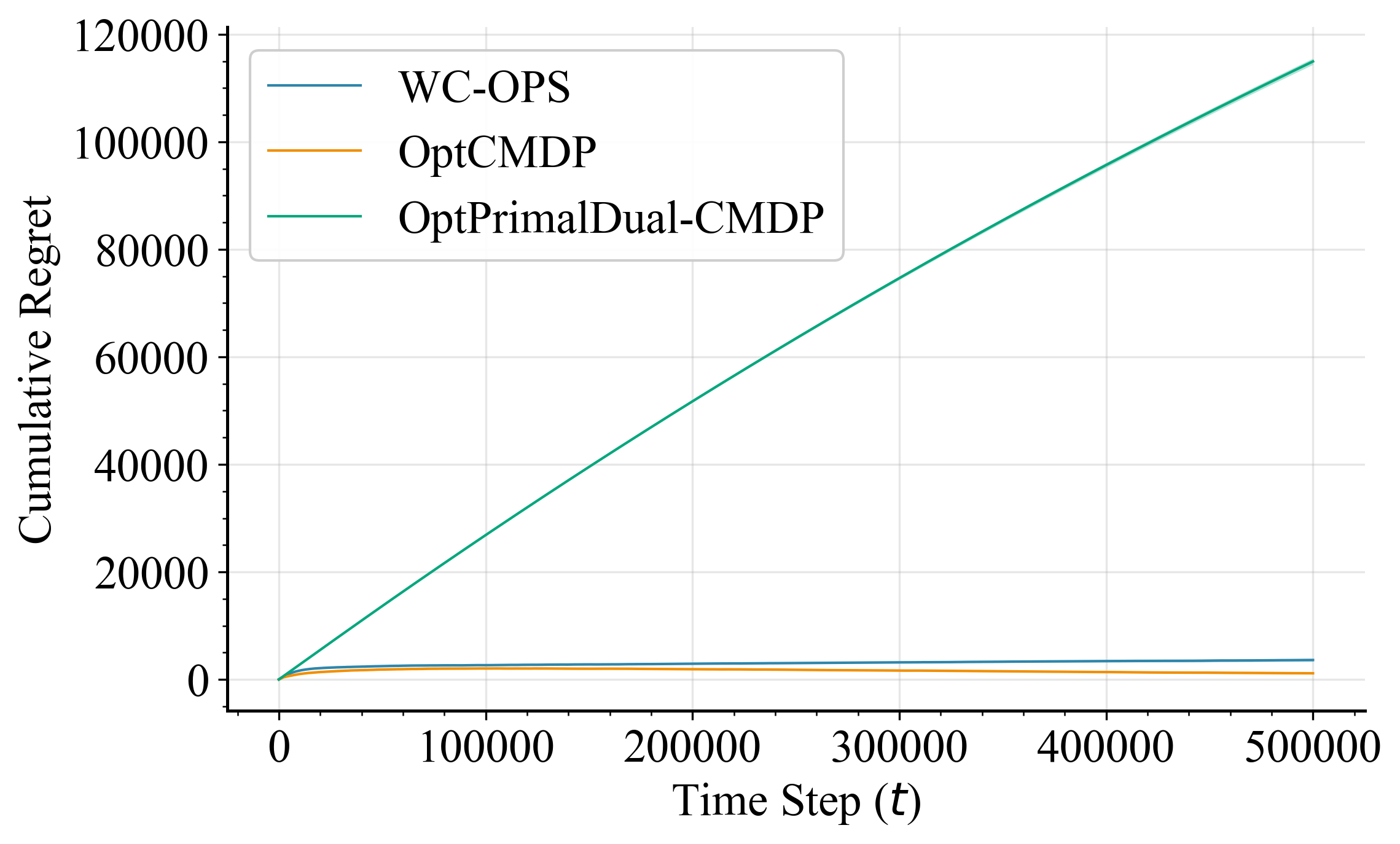}
		\caption{Regret $R_T$}
		\label{fig:regret}
	\end{subfigure}
	\hfill
	\begin{subfigure}{0.45\textwidth}
		\centering
		\includegraphics[width=\linewidth]{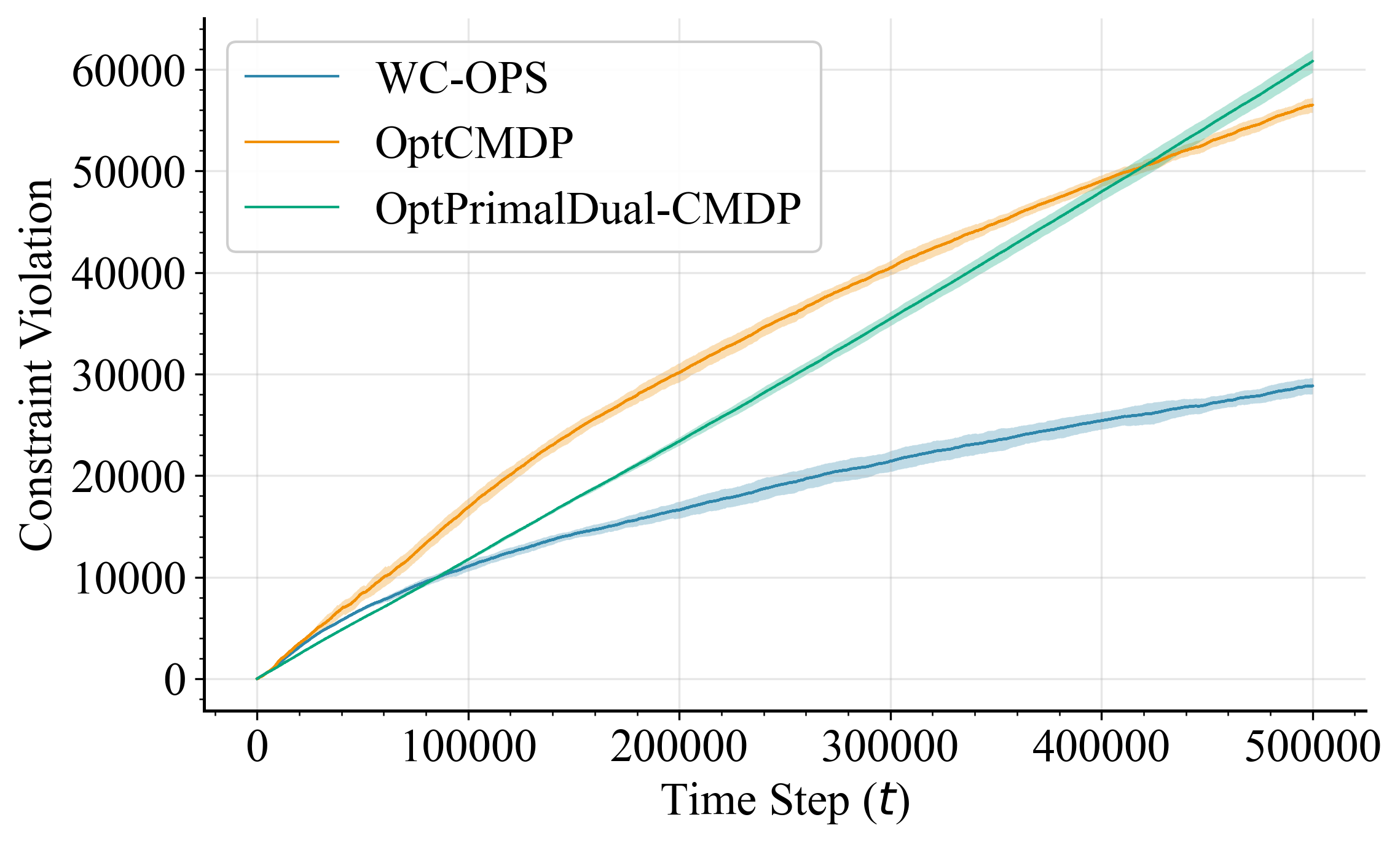}
		\caption{Constraint violation $V_T$}
		\label{fig:violation}
	\end{subfigure}
	\caption{Experimental evaluation of Algorithm~\ref{alg:main} (\texttt{WC-OPS}).}
	\label{fig:main}
\end{figure}
\section{Experimental Evaluation}
\label{sec:exp}
In this section, we evaluate the performance of Algorithm~\ref{alg:main} in a synthetic environment. 
Due to space constraints, we focus on the \emph{stochastic} setting, that is, both the rewards and the constraints are sampled from fixed distributions, while we refer to Appendix~\ref{App:Exp} for the complete experimental evaluation. This choice is primarily motivated by the fact that the stochastic setting is indeed the hardest for algorithms capable of handling stochastic and adversarial constraints simultaneously. Indeed, stochastic environments allow us to employ strong algorithmic benchmarks, that is, algorithms tailored for stochastic settings only, to compare our algorithm with. 
Specifically, we consider the following algorithms.
$(i)$ \texttt{OptCMDP} (Algorithm 1 of \citep{Exploration_Exploitation}). This algorithm solves an optimistic linear programming formulation of the CMDP, at each episode. \texttt{OptCMDP} attains $\widetilde{\mathcal{O}}(\sqrt{T})$ regret and \emph{positive} violation, without Slater's condition, being arguably state-of-the-art in terms of performance for the stochastic setting.
$(ii)$ \texttt{OptPrimalDual-CMDP} (Algorithm 4 of \citep{Exploration_Exploitation}). This algorithm employs a primal-dual approach, performing incremental updates for both the primal (that is, the policy) and dual Lagrange variables.  
\begin{wrapfigure}{r}{0.4\textwidth} 
	\vspace{-0.5cm}
	\centering
	\captionsetup{skip=0pt}
	\includegraphics[width=\linewidth]{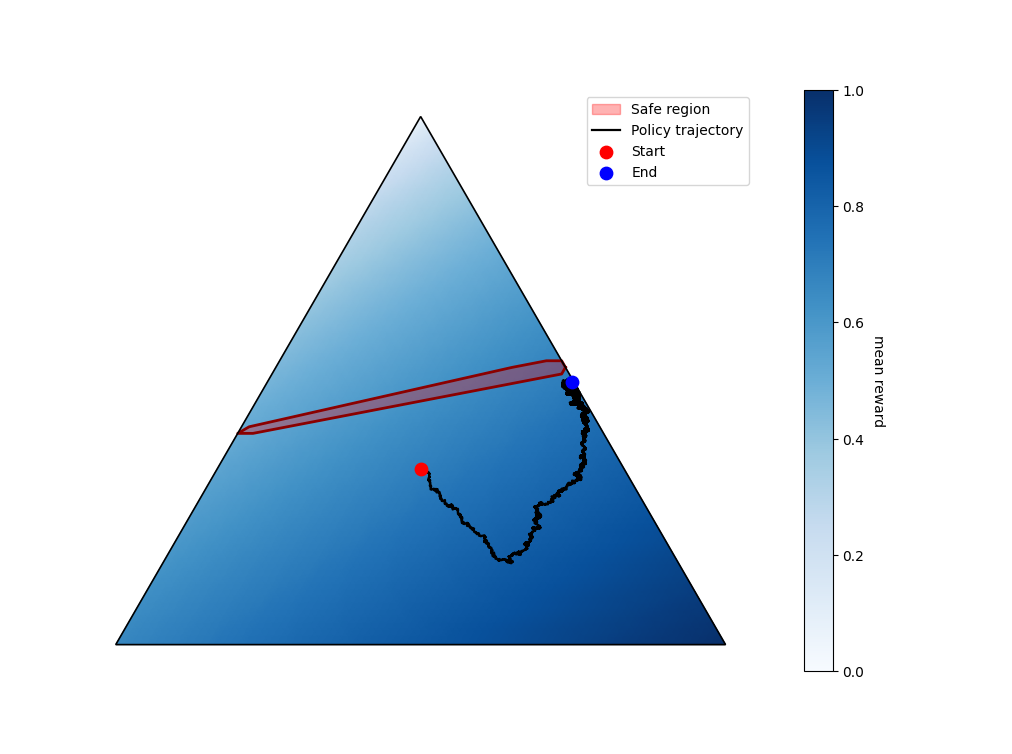}
	\caption{Trajectory of policy $\pi_t$}
	\vspace{-0.5cm}
	\label{fig:trajectory}
\end{wrapfigure}    
\texttt{OptPrimalDual-CMDP} attains $\widetilde{\mathcal{O}}(\frac{1}{\rho}\sqrt{T})$ regret and violation, assuming Slater's condition.
In Figure~\ref{fig:main}, we provide the results of our synthetic evaluation.
Specifically, in Figure~\ref{fig:regret}, we provide the regret attained by Algorithm~\ref{alg:main} and the aforementioned benchmarks. As expected, the performance of \texttt{WC-OPS} is comparable with the one of \texttt{OptCMDP}. Differently, \texttt{OptPrimalDual-CMDP}, which relies on the Slater's parameter of the problem, attains worse regret guarantees. Similarly, in Figure~\ref{fig:violation}, we provide the results in terms of constraints violation. In such a case, Algorithm~\ref{alg:main} attains significantly better performance than both \texttt{OptCMDP} and \texttt{OptPrimalDual-CMDP}.

In Figure~\ref{fig:trajectory}, we show the trajectory of the policy $\pi_t$ over a three-dimensional simplex in the case of a CMDP with a single state and three actions. The figure illustrates how Algorithm~\ref{alg:main} asymptotically converges to the safe decision space, highlighted in \textcolor{red}{red}, while playing as much as possible the optimal action, which is shaded in \textcolor{blue}{blue}.

\bibliographystyle{plainnat}
\bibliography{example_paper}

\newpage
\appendix
\tableofcontents

\section{Additional Related Works}
\label{App:related}
In this section, we provide a brief overview of the main research directions that are relevant to our work. We start by describing works dealing with the more general setting of MDPs and we proceed introducing constraints, first in the single-state case and then in the CMDP case. 
\subsection{Online Learning in MDPs}
MDPs have been widely employed as a framework to model decision-making problems, in particular in online settings. In such a context, different assumptions have been made about the type of feedback received by the learner and how the feedback is generated. Many works, such as \citet{Near_optimal_Regret_Bounds,NIPS2013_68053af2,Minimax_Regret}, consider bandit feedback, i.e. the algorithm only observes the loss/reward for the specific state-action pair visited. In contrast, works such as \citet{even2009online} and \citet{rosenberg19a} consider a full-information feedback, i.e. the algorithm receives the complete loss/reward information. Most of the first works on MDPs are set in a stochastic environment, i.e. the loss is assumed to be generated according to a certain (unknown) distribution (see \citet{Near_optimal_Regret_Bounds,Minimax_Regret}). Other works, such as \citet{even2009online,neu2010online,OnlineStochasticShortest,rosenberg19a,JinLearningAdversarial2019,jin2024no}, consider feedback adversarially generated.

\subsection{Online Learning with Constraints}
Various studies have been made about the single state bandit with constraints problem~\citep{liakopoulos2019cautious,pacchiano2021stochastic,stradi2025resource}. In such a case,  best-of-both-worlds primal-dual algorithms, covering both stochastic and adversarial settings, were designed in \citet{castiglioni2022online}, \citet{Unifying_Framework}. Primal-dual methods have long been the only effective approach to tackle online learning problems in bandits with constraints, although they require strong assumptions. The first best-of-both-worlds solution for constrained bandits that does not rely on a primal-dual approach was proposed by \citet{bernasconi2024primaldualmethodsbanditsstochastic}.

\subsection{Online Learning in CMDPs}
Online Learning in CMDPs has gained increasing attention recently, given its relevance in real-world applications, such as autonomous vehicles~\citep{isele2018safe,wen2020safe}, bidding~\citep{gummadi2012repeated,wu2018budget, he2021unified} and recommendation systems~\citep{singh2020building}.
The existing works about CMDPs cover both the case where the loss/reward is stochastic and the case where it is adversarially chosen. Specifically, \citet{Exploration_Exploitation} deals with finite-horizon CMDPs, with stochastic losses and constraints, unknown transition function and bandit feedback. The authors analyze two approaches, both providing sublinear regret and cumulative constraint violation.
\citet{stradi2025optimal} propose the first primal-dual algorithm capable of attaining sublinear positive violation in the stochastic setting, improving the results previously established in~\citep{ghosh,muller2024truly}. \citet{Constrained_Upper_Confidence} studies episodic CMDPs with stochastic losses and constraints, known transition function and bandit feedback. This algorithm achieves $\widetilde{\mathcal{O}}(T^{\frac{3}{4}})$ regret and guarantees that the cumulative constraint violation remains below a certain threshold with a given probability. 
\citep{Upper_Confidence_Primal_Dual} achieves sublinear regret and violation in episodic CMDPs with adversarial losses, stochastic constraints, unknown transition function and full-information feedback. \citet{stradi2025learning} proposes the first algorithm to handle CMDPs with adversarial losses and bandit feedback.
The constraint functions considered in most of the works are stochastic. As for adversarial settings, \citet{Mannor} prove (for the easier single state setting) the impossibility of attaining both sublinear regret and constraint violation with respect to a policy that satisfies the constraints on average.
The first best-of-both-worlds algorithm for online learning in episodic CMDPs was proposed by \citet{stradi2024}, which employs
a primal-dual approach providing $\widetilde{\mathcal{O}}(\sqrt{T})$ cumulative regret and
constraint violation under a Slater-like satisfiability condition and $\widetilde{\mathcal{O}}(T^{\frac{3}{4}})$ regret and constraint violation without such a condition. This algorithm only works under full-information feedback. 
\citet{stradi2025policy} overcomes this limitation by proving similar guarantees in a setting with bandit feedback, employing a primal-dual policy optimization method.
Finally, \citet{wei2023provably},~\citet{ding_non_stationary} and~\citet{stradi2024learning} consider the case in which rewards and constraints are non-stationary, assuming that their variation is bounded.
Thus, their results are \emph{not} applicable to general adversarial settings.

\section{Omitted Proofs and Lemmas of Section~\ref{sec:theo}}

In this section, we provide the omitted proofs and the additional lemmas for the theoretical analysis of Algorithm~\ref{alg:main}.

\subsection{Results on the Optimization Update}

In this section, we provide the results associated to the optimization update performed by Algorithm~\ref{alg:main}. We start with the following lemma.
\begin{lemma}\label{reg_ut}
	For any $\delta\in(0,1)$ and for any $q \in \bigcap_{t \in [T]} \widehat{\Delta}_t(\mathcal{P}_t)$, Algorithm~\ref{alg:main} attains:
	$$\sum_{t=1}^{T}\widehat{\ell}_t^\top(\widehat{q_t} - q) \leq L\frac{\ln(|X|^2|A|)}{\eta} + \eta|X||A|T  + \frac{\eta L\ln\frac{L}{\delta}}{\gamma},$$
	with probability at least $1- \delta$.
\end{lemma}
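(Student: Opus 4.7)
The plan is to carry out a standard online-mirror-descent (OMD) analysis on the update of Line~\ref{algLine:10}, exploiting the specific uniform initialization of $\widehat{q}_1$ together with the optimistic bound $\widehat{q}_t(x,a) \le u_t(x,a)$, and to convert the resulting stability estimate into a high-probability bound via the implicit-exploration (IX) concentration inequality of~\citet{neu}.

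\textbf{Step 1 (OMD decomposition).} Since $\widehat{q}_{t+1}$ minimizes a convex program over $\widehat{\Delta}_t(\mathcal{P}_t)$ and $q \in \widehat{\Delta}_t(\mathcal{P}_t)$, the three-point property of Bregman divergences, combined with the closed-form exponential step of the unnormalized-KL update and the elementary inequality $y(1-e^{-y}) \le y^2$ for $y \ge 0$, yields the per-round estimate
\[
\eta\,\widehat{\ell}_t^\top(\widehat{q}_t - q) \;\le\; B(q\|\widehat{q}_t) - B(q\|\widehat{q}_{t+1}) \;+\; \eta^2 \sum_{x,a,x'} \widehat{q}_t(x,a,x')\,\widehat{\ell}_t(x,a)^2.
\]
Summing over $t$ and telescoping the Bregman terms leaves the standard decomposition
\[
\sum_{t=1}^{T} \widehat{\ell}_t^\top(\widehat{q}_t - q) \;\le\; \frac{B(q\|\widehat{q}_1)}{\eta} \;+\; \eta \sum_{t=1}^{T} \sum_{x,a,x'} \widehat{q}_t(x,a,x')\,\widehat{\ell}_t(x,a)^2.
\]

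\textbf{Step 2 (Initial divergence).} Because $\widehat{q}_1(x,a,x') = 1/(|X_k||A||X_{k+1}|)$ for $(x,a,x') \in X_k \times A \times X_{k+1}$, and each layer's marginal of the valid occupancy $q$ sums to $1$, a layer-by-layer entropy computation together with the fact that the residual $\sum_{x,a,x'}(q - \widehat{q}_1)$ vanishes gives $B(q\|\widehat{q}_1) \le L\ln(|X|^2|A|)$, producing the first term of the target bound.

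\textbf{Step 3 (Stability term).} Using $\widehat{\ell}_t(x,a) \le \mathbb{I}_t\{x,a\}/(u_t(x,a)+\gamma)$ together with the optimistic bound $\sum_{x'}\widehat{q}_t(x,a,x') = \widehat{q}_t(x,a) \le u_t(x,a)$ (which holds by definition of $u_t$, since $\widehat{q}_t$ corresponds to some transition in $\mathcal{P}_t$), I reduce the stability term to $\sum_{t,x,a} \mathbb{I}_t\{x,a\}/(u_t(x,a)+\gamma)$. Splitting each summand into its conditional mean plus a martingale difference,
\[
\sum_{x,a} \frac{\mathbb{I}_t\{x,a\}}{u_t(x,a)+\gamma} \;=\; \underbrace{\sum_{x,a} \frac{q_t(x,a)}{u_t(x,a)+\gamma}}_{\le\,|X||A|} \;+\; M_t,
\]
the in-expectation piece contributes the $\eta|X||A|T$ term after summation over $t$.

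\textbf{Step 4 (IX concentration — the main obstacle).} The hard part is a high-probability bound on $\sum_t M_t$: a direct Azuma argument only gives $\widetilde{\mathcal{O}}(\sqrt{T}/\gamma)$, which is too weak. Instead, I apply the implicit-exploration exponential inequality of~\citet{neu} separately in each of the $L$ layers. Within layer $k$, the trajectory visits exactly one state-action pair, so the per-layer sum fits the scalar IX template and yields $\sum_t \sum_{(x,a) \in X_k \times A} (\mathbb{I}_t\{x,a\} - q_t(x,a))/(u_t(x,a)+\gamma) \le \ln(L/\delta)/\gamma$ with probability at least $1 - \delta/L$. A union bound over the $L$ layers gives $\sum_t M_t \le L\ln(L/\delta)/\gamma$ with probability at least $1-\delta$, delivering the final $\eta L\ln(L/\delta)/\gamma$ term. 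Combining the three contributions yields the claimed inequality.
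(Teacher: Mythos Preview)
Your argument is essentially correct and reconstructs the content of Lemma~12 of \citet{JinLearningAdversarial2019}, which is precisely what the paper invokes (its own proof is a one-line citation of that lemma, noting only that $q$ may be any point in $\bigcap_t\widehat{\Delta}_t(\mathcal{P}_t)$). One small caveat: the mean/martingale split in your Steps~3--4 implicitly uses $q_t(x,a)\le u_t(x,a)+\gamma$ to cap the ``mean'' part at $|X||A|$, which requires the extra event $P\in\mathcal{P}_t$ and would cost you an additional $\delta$; the cleaner route taken by Jin et al.\ bounds the stability term directly by $\sum_{t,x,a}\widehat{\ell}_t(x,a)$ (using only the deterministic fact $\widehat{q}_t\le u_t$) and then applies the IX inequality with constant weights $2\gamma$ against the \emph{true} losses $\ell_t\le 1$, yielding $\sum_{t,x,a}\widehat{\ell}_t(x,a)\le |X||A|T + L\ln(L/\delta)/(2\gamma)$ without any extra conditioning.
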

\begin{proof}
	The result follows from Lemma 12 of~\citep{JinLearningAdversarial2019}, considering a general $q \in \bigcap_{t \in [T]} \widehat{\Delta}_t(\mathcal{P}_t)$.
\end{proof}

We conclude by showing the following performance bound.

\begin{theorem}\label{thmain_ut}
	For any $\delta\in(0,1)$ and for any $q \in \bigcap_{t \in [T]} \widehat{\Delta}_t(\mathcal{P}_t)$, Algorithm~\ref{alg:main}, with $\eta = \gamma = \sqrt{\frac{L\ln\left(\frac{L|X||A|}{\delta}\right)}{T|X||A|}}$, attains:
	\[\sum_{t=1}^T r_t^\top (q - q_t) \leq  14L|X|^2 \sqrt{2T |A| \ln \left( \frac{T |X|^2 |A|}{\delta} \right)},\]
	with probability at least $1-15\delta$.
\end{theorem}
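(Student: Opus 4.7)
The plan is to bridge from the true reward regret to the OMD regret bound of Lemma~\ref{reg_ut} via the surrogate loss $\widetilde{\ell}_t(x,a) := 1 - r_t(x,a)$. Since every valid occupancy satisfies $\sum_{x,a} q(x,a) = L$, we have $\widetilde{\ell}_t^\top q = L - r_t^\top q$, and hence $\sum_t r_t^\top(q - q_t) = \sum_t \widetilde{\ell}_t^\top(q_t - q)$. Adding and subtracting $\widehat{\ell}_t^\top \widehat{q}_t$ and $\widehat{\ell}_t^\top q$ produces the four-term decomposition
\begin{align*}
\sum_{t=1}^T \widetilde{\ell}_t^\top (q_t - q)
&= \underbrace{\sum_t \widetilde{\ell}_t^\top (q_t - \widehat{q}_t)}_{(A)} + \underbrace{\sum_t (\widetilde{\ell}_t - \widehat{\ell}_t)^\top \widehat{q}_t}_{(B)} \\
&\quad + \underbrace{\sum_t \widehat{\ell}_t^\top (\widehat{q}_t - q)}_{(C)} + \underbrace{\sum_t (\widehat{\ell}_t - \widetilde{\ell}_t)^\top q}_{(D)},
\end{align*}
and the goal becomes bounding each of $(A)$--$(D)$ at the target rate.

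Term $(C)$ is exactly the OMD regret: Lemma~\ref{reg_ut} applied to the comparator $q$ (which by hypothesis lies in $\bigcap_t \widehat{\Delta}_t(\mathcal{P}_t)$), together with the prescribed choice $\eta = \gamma = \sqrt{L\ln(L|X||A|/\delta)/(T|X||A|)}$, gives a $\tO(\sqrt{LT|X||A|})$ rate with probability at least $1-\delta$. Term $(A)$ is the transition estimation error: on the event $\{P \in \mathcal{P}_t\ \forall t\}$, which holds with high probability by the concentration of $\bar{P}_t$ implied by Equation~\eqref{cset}, both $q_t = q^{P,\pi_t}$ and $\widehat{q}_t$ are occupancies of the same policy $\pi_t = \pi^{\widehat{q}_t}$ under transitions in $\mathcal{P}_t$, and the standard layer-by-layer argument of~\citet{rosenberg19a} yields $\sum_t \|q_t - \widehat{q}_t\|_1 = \tO(L|X|\sqrt{|A|T})$, which dominates $(A)$ since $\widetilde{\ell}_t \in [0,1]$.

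The bias terms $(B)$ and $(D)$ are controlled using two key properties of the implicit-exploration estimator: the pointwise bound $\widehat{\ell}_t(x,a) \leq 1/\gamma$, and the conditional expectation $\mathbb{E}_t[\widehat{\ell}_t(x,a)] = \widetilde{\ell}_t(x,a)\,q_t(x,a)/(u_t(x,a)+\gamma)$, which on the event $\{u_t \geq q_t\}$ (implied by $P \in \mathcal{P}_t$ and the pointwise-max definition of $u_t$) is at most $\widetilde{\ell}_t(x,a)$. For $(D)$, the comparator $q$ is fixed across $t$, so Azuma--Hoeffding applied to the martingale $(\widehat{\ell}_t - \widetilde{\ell}_t)^\top q - \mathbb{E}_t[(\widehat{\ell}_t - \widetilde{\ell}_t)^\top q]$, combined with a telescoping bound on the conditional bias, yields $\tO(\sqrt{T})$. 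For $(B)$, the conditional bias unfolds into a transition-error piece of the same order as $(A)$ plus a $T\gamma|X||A|$ piece that is dominated by the target bound under the chosen $\gamma$; the martingale fluctuations are handled by a Neu-style implicit exploration concentration inequality, as in~\citep{neu,JinLearningAdversarial2019}.

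Summing the four bounds, substituting the specified $\eta$ and $\gamma$, and taking a union bound over the high-probability events gives the claim with probability at least $1 - 15\delta$. The main technical obstacle is the careful bookkeeping: each of the concentration events (transition confidence set, the induced event $u_t \geq q_t$, the two bias-term martingales, and the OMD high-probability bound) contributes a $\delta$ loss, and the explicit constants from each must be tracked in order to recover the leading $14L|X|^2\sqrt{2T|A|\ln(T|X|^2|A|/\delta)}$ coefficient—the $|X|^2$ factor arising from combining the transition-error rate from $(A)$ with the $|X|$ dependence inside the OMD bound of Lemma~\ref{reg_ut}.
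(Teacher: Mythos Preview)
Your four-term decomposition and overall strategy coincide with the paper's proof: $(C)$ is handled by Lemma~\ref{reg_ut}, $(A)$ by Lemma~B.3 of~\citet{rosenberg19a}, and $(B)$, $(D)$ by the bias analyses of~\citet{JinLearningAdversarial2019} (the paper's Lemma~\ref{bias1_ut} and Lemma~14 of that reference, respectively), with the same $7\delta+5\delta+\delta+2\delta=15\delta$ bookkeeping.

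One step as written would fail: for term $(D)$ you invoke Azuma--Hoeffding, but the increments $(\widehat{\ell}_t-\widetilde{\ell}_t)^\top q$ have range $\Theta(L/\gamma)=\Theta\!\big(\sqrt{T|X||A|}\big)$, so plain Azuma gives only an $O(T)$ bound. The tool that actually controls $(D)$ is the Neu-style implicit-exploration concentration you mention under $(B)$---in the paper this is Lemma~14 of~\citep{JinLearningAdversarial2019}, which exploits the specific form $\widehat{\ell}_t(x,a)=\ell_t(x,a)\mathbb{I}_t\{x,a\}/(u_t(x,a)+\gamma)$ to obtain $\sum_t(\widehat{\ell}_t-\ell_t)^\top q \le L\ln(|X||A|/\delta)/\gamma$ regardless of the $1/\gamma$ range. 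Conversely, $(B)$ in the paper is handled by Lemma~\ref{bias1_ut} (built on Lemma~6 of~\citep{JinLearningAdversarial2019} plus Lemma~\ref{qest}), whose dominant contribution is the transition-error term rather than a martingale fluctuation. Once you swap these two tools your sketch matches the paper exactly.
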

\begin{proof}
	It holds:
	\begin{align}
		\sum_{t=1}^T\ell_t^\top (q_t - q) &= 
		\sum_{t=1}^{T}(\ell_t - \widehat{\ell_t})^\top \widehat{q_t} +
		\sum_{t=1}^{T}(\widehat{\ell_t} - \ell_t)^\top q +
		\sum_{t=1}^{T}\widehat{\ell}_t^\top(\widehat{q_t} - q) +
		\sum_{t=1}^{T}\ell_t^\top (q_t - \widehat{q_t}) \nonumber \\
		&\leq \gamma|X||A|T + 2L|X|^2 \sqrt{2T \ln \left( \frac{L |X|}{\delta} \right)} \nonumber \\  & \mkern20mu+ 3L|X|^2 \sqrt{2T |A| \ln \left( \frac{T |X|^2 |A|}{\delta} \right)} + \sum_{t=1}^{T}(\widehat{\ell_t} - \ell_t)^\top q \nonumber \\ & \mkern20mu +
		\sum_{t=1}^{T}\widehat{\ell}_t^\top(\widehat{q_t} - q) +
		\sum_{t=1}^{T}\ell_t^\top (q_t - \widehat{q_t}) \label{bias1} \\&\leq \gamma|X||A|T + 2L|X|^2 \sqrt{2T \ln \left( \frac{L |X|}{\delta} \right)} \nonumber \\  & \mkern20mu+ 3L|X|^2 \sqrt{2T |A| \ln \left( \frac{T |X|^2 |A|}{\delta} \right)} + \frac{L \ln(|X||A|/\delta)}{\gamma} \nonumber \\ & \mkern20mu + \sum_{t=1}^{T}\widehat{\ell}_t^\top(\widehat{q_t} - q) + \sum_{t=1}^{T}\ell_t^\top (q_t - \widehat{q_t}) \label{bias2}
		\\&\leq \gamma|X||A|T + 2L|X|^2 \sqrt{2T \ln \left( \frac{L |X|}{\delta} \right)} \nonumber \\  & \mkern20mu+ 3L|X|^2 \sqrt{2T |A| \ln \left( \frac{T |X|^2 |A|}{\delta} \right)} + \frac{L \ln(|X||A|/\delta)}{\gamma} \nonumber  \\ & \mkern20mu +L\frac{\ln(|X|^2|A|)}{\eta} + \eta|X||A|T  + \frac{\eta L\ln\frac{L}{\delta}}{\gamma} + \sum_{t=1}^{T}\ell_t^\top (q_t - \widehat{q_t}) \label{reg}	
		\\&\leq \gamma|X||A|T + 2L|X|^2 \sqrt{2T \ln \left( \frac{L |X|}{\delta} \right)} \nonumber \\  & \mkern20mu+ 3L|X|^2 \sqrt{2T |A| \ln \left( \frac{T |X|^2 |A|}{\delta} \right)} + \frac{L \ln(|X||A|/\delta)}{\gamma} \nonumber \\ & \mkern20mu + L\frac{\ln(|X|^2|A|)}{\eta} + \eta|X||A|T  + \frac{\eta L\ln\frac{L}{\delta}}{\gamma} \nonumber \\  & \mkern20mu+ 2L|X|\sqrt{2T\ln\frac{2L}{\delta}} + 3L|X|\sqrt{2T|A|\ln\frac{2T|X||A|}{\delta}} \label{err}	\\
		& \leq \sqrt{|X||A| T L \ln\left(\frac{L|X||A|}{\delta}\right)} + 2L|X|^2 \sqrt{2T \ln \left( \frac{L |X|}{\delta} \right)} \nonumber \\  & \mkern20mu + 3L|X|^2 \sqrt{2T |A| \ln \left( \frac{T |X|^2 |A|}{\delta} \right)}  + 3\sqrt{|X||A| T L \ln\left(\frac{L|X||A|}{\delta}\right)}	\nonumber \\  & \mkern20mu+ 2L|X|\sqrt{2T\ln\frac{2L}{\delta}} + 3L|X|\sqrt{2T|A|\ln\frac{2T|X||A|}{\delta}} \nonumber \\
		& \leq (4 + 2 + 3 + 2 + 3) L|X|^2 \sqrt{2T |A| \ln \left( \frac{T |X|^2 |A|}{\delta} \right)} \nonumber \\
		& = 14L|X|^2 \sqrt{2T |A| \ln \left( \frac{T |X|^2 |A|}{\delta} \right)}, \nonumber
	\end{align}
	where Inequality~\eqref{bias1} holds by Lemma \ref{bias1_ut} with probability $1-7\delta$, Inequality~\eqref{bias2} holds by Lemma 14 of \citep{JinLearningAdversarial2019} with probability $1-5\delta$, Inequality~\eqref{reg} holds by Lemma \ref{reg_ut} with probability $1-\delta$ and Inequality~\eqref{err} holds by Lemma B.3 of \citep{rosenberg19a} with probability $1-2\delta$. By Union Bound, the final result holds with probability $1-15\delta$.
	Since by definition $\ell_{t}(x,a)=1-r_t(x,a)\mathbb{I}_t\left\{x, a\right\}$ for all $x\in X, a\in A$, it holds: 
	\[\sum_{t=1}^T \ell_t^\top (q_t - q)  = \sum_{t=1}^T  r_t^\top (q - q_t) \leq  14L|X|^2 \sqrt{2T |A| \ln \left( \frac{T |X|^2 |A|}{\delta} \right)},\]
	which concludes the proof.
\end{proof}
\subsection{Results on the Decision Space}
In this section, we provide the results on the decision space definition of Algorithm~\ref{alg:main}.

We start by showing that, in the stochastic setting, the confidence bound plays a central role in the definition of the decision space.
\begin{theorem}\label{stoch_ut}
	In the stochastic setting, let $ \delta \in(0,1) $ and $ b_t(x,a) $ such that with probability at least $ 1 - \delta $, it holds
	$ \left| \widehat{g}_{t,i}(x,a) - \bar{g}_i(x,a) \right| \leq b_t(x,a),  
	$ for all $ (x,a) \in X \times A, i \in [m], t \in [T] $ . 
	Furthermore, let $\Delta^\star = \left\{ q \in \Delta(M): \bar{g}_{i}^\top q\leq 0, \forall i\in[m]\right\}$. Then, with probability at least $ 1 - 2\delta $  it holds: $$ \Delta^\star \subseteq \widehat{\Delta}_t(\mathcal{P}_t), \ \forall t \in[T].$$
\end{theorem}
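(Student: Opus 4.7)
The plan is to unpack the definition of $\widehat{\Delta}_t(\mathcal{P}_t)$ into its two defining pieces, the transition confidence set $\Delta(\mathcal{P}_t)$ and the optimistically safe constraint set $\{q : (\widehat{g}_{t,i} - b_t)^\top q \leq 0 \text{ for all } i\}$, and to show that each one contains $\Delta^\star$ on a high-probability event. The final bound will then follow from a union bound over these two events, which is exactly where the factor of $2\delta$ appears.

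First I would handle the transition side. Since every $q \in \Delta^\star \subseteq \Delta(M)$ is a valid occupancy measure with respect to the true transition function $P$, it suffices to show that $P \in \mathcal{P}_t$ for all $t \in [T]$ with probability at least $1 - \delta$. This is the standard concentration bound for empirical transitions from~\citet{rosenberg19a}: the empirical $\bar{P}_t$ concentrates around $P$ at the rate encoded by $\epsilon_t(x,a)$, so $P \in \mathcal{P}_t$ for every $t$ simultaneously with probability $1-\delta$. On that event, any $q \in \Delta^\star$ is a valid occupancy under some transition in $\mathcal{P}_t$ (namely $P$ itself), hence $q \in \Delta(\mathcal{P}_t)$.

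Next I would handle the constraint side, which is essentially a one-line computation given the assumed bonus bound. Fix $q \in \Delta^\star$ and $i \in [m]$. By the hypothesis on $b_t$, we have $\widehat{g}_{t,i}(x,a) - \bar{g}_i(x,a) \leq b_t(x,a)$ for all $(x,a)$ with probability at least $1-\delta$. Since $q(x,a) \geq 0$, multiplying by $q(x,a)$ and summing gives $(\widehat{g}_{t,i} - \bar{g}_i)^\top q \leq b_t^\top q$, so
\begin{equation*}
(\widehat{g}_{t,i} - b_t)^\top q = (\widehat{g}_{t,i} - \bar{g}_i)^\top q + \bar{g}_i^\top q - b_t^\top q \leq b_t^\top q + \bar{g}_i^\top q - b_t^\top q = \bar{g}_i^\top q \leq 0,
\end{equation*}
where the last inequality uses the definition of $\Delta^\star$. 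This shows $q$ belongs to the optimistically safe part of $\widehat{\Delta}_t(\mathcal{P}_t)$.

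Finally I would conclude by a union bound: on the intersection of the two events above, which holds with probability at least $1 - 2\delta$, both $q \in \Delta(\mathcal{P}_t)$ and $(\widehat{g}_{t,i} - b_t)^\top q \leq 0$ for all $i \in [m]$ and all $t \in [T]$, so $q \in \widehat{\Delta}_t(\mathcal{P}_t)$. Since $q \in \Delta^\star$ was arbitrary, $\Delta^\star \subseteq \widehat{\Delta}_t(\mathcal{P}_t)$ for every $t$. There is no real obstacle here; the only subtlety is being careful about which event supplies which inclusion so that the probabilities add correctly, and making sure to use the nonnegativity of $q$ when passing the pointwise bonus bound to an inner product inequality.
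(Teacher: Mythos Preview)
Your proposal is correct and follows essentially the same approach as the paper: both arguments split into the constraint side (use the assumed pointwise bound $|\widehat g_{t,i}-\bar g_i|\le b_t$ together with $q\ge 0$ and $\bar g_i^\top q\le 0$ to conclude $(\widehat g_{t,i}-b_t)^\top q\le 0$) and the transition side (invoke the concentration result of \citet{rosenberg19a} to get $P\in\mathcal{P}_t$ for all $t$ with probability $1-\delta$), then take a union bound. Your algebraic rearrangement is cosmetically different but equivalent.
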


\begin{proof}
	Assume the condition of the theorem holds. Let $ q \in \Delta^{\star} $ and consider the following inequalities:  
	\begin{align*}
		\widehat{g}_{t,i}^\top q &= (\widehat{g}_{t,i} - \bar{g}_i)^\top q +  \bar{g}_i^\top q
		\\&\leq (\widehat{g}_{t,i} - \bar{g}_i)^\top q  \\&= \sum_{x \in X,a \in A} (\widehat{g}_{t,i}(x,a) - \bar{g}_i(x,a))q(x,a)  \\&\leq b_t^\top q,
	\end{align*}
	where the first inequality holds by definition of $\Delta^\star$ and the second inequality follows from the definition of $b_t$.
	Thus, $(\widehat{g}_{t,i} - b_t)^\top q \leq 0 $, which by definition proves that $ q \in \widehat{\Delta}_t(\mathcal{P}_t)$.  
	The final results follows from noticing that by Lemma 4.1 of \citep{rosenberg19a} $P\in\mathcal{P}_t$ with probability at least $1-\delta$. A final union bound concludes the proof.
\end{proof}

We proceed by proving a similar result for the adversarial setting. The key insight here is that confidence bounds are not necessary.

\advut*

\begin{proof}
	For each $ t \in [T] $, $ i \in [m] $, and $ (x,a) \in X \times A$, it holds: 
	$$ \widehat{g}_{t, i}(x, a) = \sum_{\tau \in \mathcal{T}_{t, x, a}} w_{t,x,a}(\tau) g_{\tau, i}(x, a), $$
	and by the weights definition,
	$$ \sum_{\tau \in \mathcal{T}_{t, x, a}} w_{t,x,a}(\tau) = 1.$$
	Thus notice that, for all $ t \in [T] $ and constraint $ i \in [m] $, we have:
	$$ \max_{(x,a)\in \mathcal{Q}(q^\diamond)}\widehat{g}_{t, i} (x,a)q^{\diamond}(x,a) \leq -\rho,$$
	which implies: 
	$$\widehat{g}_{t,i}^\top q^{\diamond} \leq - L\cdot \rho = -\rho'.$$
	Moreover notice that:
	$$ \widehat{g}_{t,i}^\top q \leq L, \quad \forall q\in\Delta(M). $$ 	
	Thus, for any $\tilde{q} \in \Delta^\diamond$ and $q\in\Delta(M)$, we obtain:  
	\begin{align*}
		\widehat{g}_{t,i}^\top \tilde{q} &= \frac{L}{L+\rho'} \  \widehat{g}_{t,i}^\top q^{\diamond} + \frac{\rho'}{L+\rho'} \ \widehat{g}_{t,i}^\top q 
		\\&\leq \frac{L}{L+\rho'} (-\rho') + \frac{\rho'}{L+\rho'} L
		\\&\leq 0,
	\end{align*} 
	that is, $\tilde{q} \in \widehat{\Delta}_t(P)$. As in the stochastic case, the final result follows from noticing that $\Delta(M) \subseteq \Delta(\mathcal{P}_t)$ since, with probability at least $1-\delta$, $P\in\mathcal{P}_t$, by Lemma 4.1 of \citep{rosenberg19a}.
\end{proof}

\subsection{Results on the Weights}
In this section, we provide some fundamental results on the weights employed by Algorithm~\ref{alg:main}.

\meanestim*

\begin{proof} 
	Consider a pair $ (x,a) \in X \times A $, an index $ i \in [m] $, and $ t \in [T] $. By definition of the weights, it holds:
	\begin{align*}  
		w_{t,x,a,i}(\tau) &= \beta_{\tau,i}(x,a) \prod_{h \in \mathcal{T}_{t,x,a} : h > \tau} \left( 1 - \beta_{h,i}(x,a) \right)
		\\&= \frac{1}{N_\tau(x,a)} \prod_{h \in \mathcal{T}_{t,x,a} : h > \tau} \left( 1 - \frac{1}{N_h(x,a)} \right), \quad \forall \tau \in \mathcal{T}_{t,x,a} .
	\end{align*} 
	We now focus on the term $\prod_{h \in \mathcal{T}_{t,x,a} : h > \tau} \left( 1 - \frac{1}{N_h(x,a)} \right)$: 
	\begin{align*} \prod_{h \in \mathcal{T}_{t,x,a} : h > \tau} \left( 1 - \frac{1}{N_h(x,a)} \right) &= \prod_{h \in \mathcal{T}_{t,x,a} : h > \tau} \frac{N_h(x,a) - 1}{N_h(x,a)}
		\\&= \prod_{j=N_\tau(x,a)+1}^{N_{t}(x,a)} \frac{j-1}{j} 
		\\&= \frac{N_\tau(x,a)}{N_{t}(x,a)}.
	\end{align*}
	Thus: 
	$$ w_{t,x,a,i}(\tau) = \frac{1}{N_\tau(x,a)} \frac{N_\tau(x,a)}{N_{t}(x,a)} = \frac{1}{N_{t}(x,a)}.$$
	This concludes the proof.
\end{proof}

We proceed by relating the violation attained by Algorithm~\ref{alg:main} to the weighted estimators.

\begin{theorem}\label{violations_ut}
	Given an interval $[t_1, t_2] \subseteq [T]$, $i \in [m]$ and $\delta \in  (0,1)$, Algorithm~\ref{alg:main} attains the following bound with probability at least $1 - 3\delta$: 
	\begin{align*} 
		V_{[t_1, t_2],i} &\leq \sum_{x \in X, a \in A} \sum_{\tau \in \mathcal{T}_{t_2,x,a} \cap [t_1, t_2]}  \frac{1}{\beta_{\tau, i}(x,a)}\left(\widehat{g}_{\tau, i}(x,a) - \widehat{g}_{\tau-1,i}(x, a) \right) + \sum_{\tau = t_1}^{t_2} b_{\tau-1}^\top \widehat{q}_\tau  \\ &\mkern100mu+ 7L|X|\sqrt{2(t_2-t_1)|A|\ln\frac{2T|X||A|}{\delta}},
	\end{align*}
	where $V_{[t_1, t_2],i} 
	\coloneqq \sum_{\tau=t_1}^{t_2}g_{\tau, i}^\top q_\tau$.
\end{theorem}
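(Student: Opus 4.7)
The plan is to decompose the cumulative violation $V_{[t_1,t_2],i}$ into three pieces, each matching one term of the claimed bound, using one high-probability event per piece. First, since $g_{\tau,i}\in[-1,1]^{|X\times A|}$ and $q_\tau$ is the true occupancy under the policy $\pi_\tau$ actually played, an MDP trajectory-concentration inequality in the spirit of Lemma~B.3 of~\citep{rosenberg19a} gives, on an event of probability at least $1-\delta$,
\[
\sum_{\tau=t_1}^{t_2} g_{\tau,i}^\top q_\tau \;\leq\; \sum_{\tau=t_1}^{t_2}\sum_{x,a} g_{\tau,i}(x,a)\,\mathbb{I}_\tau\{x,a\} \;+\; \widetilde{\mathcal O}\!\left(L|X|\sqrt{(t_2-t_1)|A|\ln\tfrac{T|X||A|}{\delta}}\right).
\]
This converts the quantity to be bounded into the empirical sum of constraint values along the realized trajectory.

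Next I invert the one-step recursion of the weighted estimator defined in~\eqref{weights}. Whenever $(x,a)$ is visited at episode $\tau$, a direct manipulation of the product defining $w_{\tau,x,a,i}$ yields $\widehat g_{\tau,i}(x,a)=(1-\beta_{\tau,i}(x,a))\widehat g_{\tau-1,i}(x,a)+\beta_{\tau,i}(x,a)g_{\tau,i}(x,a)$, while the estimator is unchanged otherwise. Multiplying by $\mathbb{I}_\tau\{x,a\}$ and solving for $g_{\tau,i}(x,a)\mathbb{I}_\tau\{x,a\}$ gives the identity
\[
g_{\tau,i}(x,a)\,\mathbb{I}_\tau\{x,a\} \;=\; \widehat g_{\tau-1,i}(x,a)\,\mathbb{I}_\tau\{x,a\} \;+\; \tfrac{1}{\beta_{\tau,i}(x,a)}\bigl(\widehat g_{\tau,i}(x,a)-\widehat g_{\tau-1,i}(x,a)\bigr),
\]
where the second summand vanishes whenever $(x,a)$ is not visited. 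Summing over $\tau\in[t_1,t_2]$ and $(x,a)\in X\times A$ splits the empirical sum into (i) the telescoping term $\sum_{x,a}\sum_{\tau\in\mathcal T_{t_2,x,a}\cap[t_1,t_2]}\tfrac{1}{\beta_{\tau,i}(x,a)}(\widehat g_{\tau,i}-\widehat g_{\tau-1,i})(x,a)$, which is exactly the first term of the claimed bound, and (ii) a residual $\sum_{\tau=t_1}^{t_2}\sum_{x,a}\widehat g_{\tau-1,i}(x,a)\mathbb{I}_\tau\{x,a\}$ that must still be controlled.

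To bound the residual I use optimism, in two stages. A second trajectory-concentration inequality (applicable because $\widehat g_{\tau-1,i}$ is $\mathcal F_{\tau-1}$-measurable and bounded by $1$) moves the realized sum to $\sum_\tau \widehat g_{\tau-1,i}^\top q_\tau$ on an event of probability at least $1-\delta$. Then, on the high-probability event $P\in\mathcal P_\tau$ for all $\tau$ guaranteed by Lemma~4.1 of~\citep{rosenberg19a}, a standard $\ell_1$-difference bound between the true occupancy $q_\tau$ and the algorithm's $\widehat q_\tau$ (both induced by $\pi_\tau=\pi^{\widehat q_\tau}$, but with true vs.\ estimated kernel) gives
\[
\sum_{\tau=t_1}^{t_2} \widehat g_{\tau-1,i}^\top q_\tau \;\leq\; \sum_{\tau=t_1}^{t_2} \widehat g_{\tau-1,i}^\top \widehat q_\tau \;+\; \widetilde{\mathcal O}\!\left(L|X|\sqrt{(t_2-t_1)|A|\ln\tfrac{T|X||A|}{\delta}}\right).
\]
Finally, Line~\ref{algLine:10} of Algorithm~\ref{alg:main} forces $\widehat q_\tau\in\widehat\Delta_{\tau-1}(\mathcal P_{\tau-1})$, so by the very definition of that feasible set $\widehat g_{\tau-1,i}^\top \widehat q_\tau\leq b_{\tau-1}^\top \widehat q_\tau$, which produces the second term of the bound. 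A union bound over the three high-probability events yields the $1-3\delta$ guarantee, and summing the three concentration constants (matched to the same logarithmic scale as in the proof of Theorem~\ref{thmain_ut}) gives the factor $7$.

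The main obstacle is the third step: replacing $q_\tau$ by $\widehat q_\tau$ inside a sum integrated against $\widehat g_{\tau-1,i}$, because $\widehat q_\tau$ uses an estimated kernel that is itself being learned. This is where the transition-confidence machinery of~\citep{rosenberg19a} carries the whole argument; the other two reductions are routine Azuma–Hoeffding concentration plus the telescoping identity of the weighted estimator. Note in particular that the bonus $b_{\tau-1}$ need not itself be concentrated here, because the optimistic constraint $(\widehat g_{\tau-1,i}-b_{\tau-1})^\top\widehat q_\tau\leq 0$ holds by construction independently of whether the stochastic or adversarial regime is active, so the lemma applies uniformly in both settings.
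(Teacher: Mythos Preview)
Your proposal is correct and takes essentially the same approach as the paper: both rely on the recursive update identity $\widehat g_{\tau,i}=(1-\beta_{\tau,i})\widehat g_{\tau-1,i}+\beta_{\tau,i}g_{\tau,i}$, the feasibility constraint $(\widehat g_{\tau-1,i}-b_{\tau-1})^\top\widehat q_\tau\le 0$, an Azuma--Hoeffding step between $q_\tau$ and the realized trajectory, and the $\ell_1$-bound on $\|q_\tau-\widehat q_\tau\|_1$ from Lemma~B.3 of~\citep{rosenberg19a}; the paper simply orders the steps differently, inserting the optimism inequality first and then applying a \emph{single} Azuma step directly to the difference $g_{\tau,i}-\widehat g_{\tau-1,i}$ rather than your two separate applications. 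One bookkeeping caveat: your ``third event'' should be the full Lemma~B.3 (which the paper counts as $2\delta$) rather than just $P\in\mathcal P_\tau$ from Lemma~4.1, so with two separate Azuma applications plus Lemma~B.3 you would end up at $1-4\delta$; collapsing your two Azuma steps into one on the predictable difference $g_{\tau,i}-\widehat g_{\tau-1,i}$ recovers exactly the paper's $1-3\delta$ budget.
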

\begin{proof}
	It holds:
	\begin{align}
		V_{[t_1, t_2],i} 
		&= \sum_{\tau=t_1}^{t_2}g_{\tau, i}^\top q_\tau \nonumber \\    
		&\leq \sum_{\tau=t_1}^{t_2}g_{\tau, i}^\top q_\tau + \sum_{\tau = t_1}^{t_2} b_{\tau-1}^\top \widehat{q}_{\tau}  - \sum_{\tau = t_1}^{t_2}\widehat{g}_{\tau-1, i}^\top \widehat{q}_{\tau} \label{optb}
		\\&= 
		\sum_{\tau=t_1}^{t_2}g_{\tau, i}^\top q_\tau + \sum_{\tau = t_1}^{t_2}b_{\tau-1}^\top \widehat{q}_\tau  - \sum_{\tau = t_1}^{t_2}\widehat{g}_{\tau-1, i}^\top \widehat{q}_{\tau} + \sum_{\tau = t_1}^{t_2}\widehat{g}_{\tau-1, i}^\top q_\tau - \sum_{\tau = t_1}^{t_2} \widehat{g}_{\tau-1, i}^\top q_\tau \nonumber
		\\&=
		\sum_{\tau=t_1}^{t_2}(g_{\tau, i} - \widehat{g}_{\tau-1,i})^\top q_\tau + \sum_{\tau = t_1}^{t_2}b_{\tau-1}^\top \widehat{q}_\tau + \sum_{\tau = t_1}^{t_2}\widehat{g}_{\tau-1, i}^\top (q_\tau - \widehat{q_\tau}) \nonumber 
		\\&\leq
		\sum_{\tau = t_1}^{t_2} \sum_{x \in X, a \in A}  (g_{\tau, i}(x,a) - \widehat{g}_{\tau-1,i}(x, a))\mathbb{I}_\tau\{x,a\} + 2L \sqrt{2(t_2 - t_1) \ln\frac{1}{\delta}} \nonumber \\&\mkern30mu+ \sum_{\tau = t_1}^{t_2} b_{\tau-1}^\top \widehat{q}_\tau  +  \| q_\tau - \widehat{q}_\tau \|_1 \label{aux} 
		\\&\leq
		\sum_{\tau = t_1}^{t_2} \sum_{x \in X, a \in A}  (g_{\tau, i}(x,a) - \widehat{g}_{\tau-1,i}(x, a))\mathbb{I}_\tau\{x,a\} \nonumber \\&\mkern30mu+ 2L \sqrt{2(t_2 - t_1) \ln\frac{1}{\delta}} + \sum_{\tau = t_1}^{t_2} b_{\tau-1}^\top \widehat{q}_\tau \nonumber \\&\mkern30mu+ 2L|X|\sqrt{2(t_2 - t_1)\ln\frac{2L}{\delta}} \nonumber \\&\mkern30mu+ 3L|X|\sqrt{2(t_2 - t_1)|A|\ln\frac{2T|X||A|}{\delta}} \label{rm4}
		\\&= \sum_{x \in X, a \in A} \sum_{\tau \in \mathcal{T}_{t_2,x,a} \cap [t_1, t_2]} \frac{ (\widehat{g}_{\tau , i}(x,a) - \widehat{g}_{\tau-1,i}(x, a))}{\beta_{\tau, i}(x,a)} \nonumber \\&\mkern30mu + \sum_{\tau = t_1}^{t_2} b_{\tau-1}^\top \widehat{q}_\tau +2L|X|\sqrt{2(t_2 - t_1) \ln\frac{2L}{\delta}} \nonumber \\&\mkern30mu+ 3L|X|\sqrt{2(t_2 - t_1) |A|\ln\frac{2T|X||A|}{\delta}}  + 2L \sqrt{2(t_2 - t_1) \ln\frac{1}{\delta}} \label{up_ut}
		\\&\leq \sum_{x \in X, a \in A} \sum_{\tau \in \mathcal{T}_{t_2,x,a} \cap [t_1, t_2]} \frac{ (\widehat{g}_{\tau, i}(x,a) - \widehat{g}_{\tau-1,i}(x, a))}{\beta_{\tau, i}(x,a)} \nonumber \\&\mkern30mu+ \sum_{\tau = t_1}^{t_2} b_{\tau-1}^\top \widehat{q}_\tau + 7L|X|\sqrt{2(t_2-t_1)|A|\ln\frac{2T|X||A|}{\delta}} \nonumber,
	\end{align}
	where Equation~\eqref{optb} is due to the fact that $\widehat{q}_{\tau+1} \in \widehat{\Delta}_{\tau}(\mathcal{P}_{\tau})$, Inequality~\eqref{aux} holds by Lemma \ref{aux1}, from which we have, with probability $1-\delta$: 
	\begin{align*}
		\sum_{\tau=t_1}^{t_2}(g_{\tau, i} - \widehat{g}_{\tau-1,i})^\top q_\tau  \leq
		\sum_{\tau = t_1}^{t_2} \sum_{x \in X, a \in A}  (g_{\tau, i}(x,a) - \widehat{g}_{\tau-1,i}(x, a))\mathbb{I}_\tau\{x,a\} + L \sqrt{8(t_2 - t_1) \ln\frac{1}{\delta}}.
	\end{align*}
	Inequality~\eqref{rm4} follows from Lemma B.3 of \citep{rosenberg19a}, with probability at least $1-2\delta$---notice that all the results mentioned above can be trivially extended to hold in the interval $[t_1,t_2]$---.
	Equation~\eqref{up_ut} holds by the definition of the update: $$\widehat{g}_{\tau, i}(x,a) = \left(1-\beta_{\tau, i}(x,a)\right)\widehat{g}_{\tau-1, i}(x,a) + \beta_{\tau, i}(x,a)g_{\tau, i}(x,a),$$ for all $(x,a)$ such that $\ \mathbb{I}_\tau\{x,a\} = 1.$
	A final Union Bound concludes the proof.
\end{proof}

We proceed with the following corollary.

\begin{corollary}\label{cor_ut}
	Given an interval $[t_1, t_2] \subseteq [T]$, $i \in [m]$ and $\delta > 0$, assume that for any $(x,a) \in X \times A$ it holds $\beta_{\tau,i}(x,a) \geq \beta_{\tau',i}(x,a)$ for each $\tau <\tau' \in \mathcal{T}_{t_2,x,a} \cap [t_1, t_2]$. Then, with probability at least $1 - 3\delta$ it holds: 
	\[V_{[t_1, t_2],i} \leq \sum_{x \in X, a \in A} \frac{2}{\beta_{\ell(x,a,[t_1,t_2]), i}(x,a)} + \sum_{\tau = t_1}^{t_2} b_{\tau-1}^\top \widehat{q}_\tau + 7L|X|\sqrt{2(t_2-t_1)|A|\ln\frac{2T|X||A|}{\delta}},\]
	where $\ell(x,a,[t_1,t_2])$ are the last rounds in the interval $[t_1, t_2]$ in which the pair $(x,a)$ is visited.
\end{corollary}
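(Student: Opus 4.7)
The plan is to refine the bound from Theorem~\ref{violations_ut} by analysing the inner sum
$\sum_{\tau \in \mathcal{T}_{t_2,x,a}\cap[t_1,t_2]} \frac{1}{\beta_{\tau,i}(x,a)}\bigl(\widehat{g}_{\tau,i}(x,a)-\widehat{g}_{\tau-1,i}(x,a)\bigr)$
separately for each state-action pair $(x,a)$. Fix such a pair and enumerate the visit times inside the window as $\tau_1 < \tau_2 < \cdots < \tau_K$, so that $\tau_K = \ell(x,a,[t_1,t_2])$. Introduce the shorthand $\alpha_k \coloneqq 1/\beta_{\tau_k,i}(x,a)$ and $G_k \coloneqq \widehat{g}_{\tau_k,i}(x,a)$, and set $G_0 \coloneqq \widehat{g}_{\tau_1 - 1,i}(x,a)$. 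Since $\widehat{g}_{\tau,i}(x,a)$ is constant in $\tau$ between consecutive visits to $(x,a)$, the inner sum collapses to $\sum_{k=1}^{K}\alpha_k(G_k - G_{k-1})$, a one-dimensional object indexed by visit number.

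The core step is summation by parts (Abel's identity),
\[
\sum_{k=1}^{K}\alpha_k\,(G_k - G_{k-1}) \;=\; \alpha_K\, G_K \,-\, \alpha_1\, G_0 \,+\, \sum_{k=1}^{K-1}(\alpha_k - \alpha_{k+1})\,G_k.
\]
Because each $\widehat{g}_{\tau,i}(x,a)$ is a convex combination of constraint values in $[-1,1]$, we have $|G_k|\leq 1$ for every $k \in \{0,1,\dots,K\}$. Under the monotonicity hypothesis on $\{\beta_{\tau_k,i}(x,a)\}_k$, the increments $\alpha_k - \alpha_{k+1}$ share a common sign, so the middle sum telescopes and its absolute value is bounded by $|\alpha_1 - \alpha_K|$. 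Combining the two boundary contributions with the telescoping of the interior yields the bound $2\alpha_K = 2/\beta_{\ell(x,a,[t_1,t_2]),i}(x,a)$ on the per-pair sum.

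Summing this per-pair bound over all $(x,a)\in X\times A$ produces the first summand of the corollary. The remaining terms $\sum_{\tau=t_1}^{t_2} b_{\tau-1}^\top \widehat{q}_\tau$ and $7L|X|\sqrt{2(t_2-t_1)|A|\ln(2T|X||A|/\delta)}$ are carried over verbatim from Theorem~\ref{violations_ut}, and the probability $1-3\delta$ is inherited directly, as the Abel manipulation is purely deterministic and introduces no new stochastic events.

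The main obstacle I anticipate is the sign bookkeeping in the summation by parts: I need to carefully use the monotonicity direction on $\beta_{\tau,i}(x,a)$ together with $|G_k|\leq 1$ to localise the dominant contribution at $\tau_K$, so that the resulting bound involves $\beta_{\ell(x,a,[t_1,t_2]),i}(x,a)$ rather than $\beta_{\tau_1,i}(x,a)$. A small auxiliary check is also needed to confirm that $\widehat{g}_{\tau,i}(x,a)$ is indeed unchanged at rounds where $(x,a)$ is not visited, which is what justifies restricting the summation index to visit times and using $G_{k-1}=\widehat{g}_{\tau_k-1,i}(x,a)$ in the telescoping.
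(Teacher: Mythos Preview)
Your approach is essentially the same as the paper's: both bound the inner per-$(x,a)$ sum via a summation-by-parts/telescoping argument, using $|\widehat{g}_{\tau,i}(x,a)|\le 1$ together with monotonicity of $\beta_{\tau,i}(x,a)$ to collapse everything onto the last visit, and then carry over the remaining terms and probability from Theorem~\ref{violations_ut}. Your Abel-identity formulation is in fact a cleaner variant of the paper's manipulation (which replaces $1/\beta_{\tau(j)}$ by $1/\beta_{\tau(j+1)}$ before telescoping); note that for the dominant contribution to land at $\tau_K$ you need the learning rates \emph{decreasing} over the interval (so $\alpha_k=1/\beta_{\tau_k}$ is increasing), which is the direction the paper uses in its proof and in the downstream application (Theorem~\ref{thviol_ut}), even though the inequality in the corollary's hypothesis is written the other way.
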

\begin{proof}
	Assuming Theorem~\ref{violations_ut} holds with probability $1-3\delta$, it is sufficient to show:
	\[
	\sum_{x \in X, a \in A} \sum_{\tau \in \mathcal{T}_{t_2,x,a} \cap [t_1,t_2]} \frac{1}{\beta_{\tau,i}(x,a)}
	\left(\widehat{g}_{\tau,i}(x,a)-\widehat{g}_{\tau-1,i}(x,a)\right)
	\leq
	\sum_{x \in X, a \in A} \frac{2}{\beta_{\ell(x,a,[t_1,t_2]), i}(x,a)}.
	\]
	Fix $(x,a) \in X \times A$ and define $h = |\mathcal{T}_{t_2,x,a} \cap [t_1,t_2]|$ as the number of times the pair $(x,a)$ is visited in the interval $[t_1,t_2]$. If $h=0$, the corresponding contribution is zero. Otherwise, let $\tau(j)$ be the round in which the pair $(x,a)$ is visited for the $j^{\text{th}}$ time in $[t_1,t_2]$. Since the estimate of $(x,a)$ does not change between two consecutive visits, it holds that $\widehat{g}_{\tau(j)-1,i}(x,a)=\widehat{g}_{\tau(j-1),i}(x,a)$ for every $j=2,\ldots,h$.
	
	Then, we have:
	\begin{align}
		&\sum_{\tau \in \mathcal{T}_{t_2,x,a} \cap [t_1,t_2]} \frac{1}{\beta_{\tau,i}(x,a)}
		\left(\widehat{g}_{\tau,i}(x,a)-\widehat{g}_{\tau-1,i}(x,a)\right) \nonumber\\
		&\mkern80mu=
		\frac{1}{\beta_{\tau(1),i}(x,a)}
		\left(\widehat{g}_{\tau(1),i}(x,a)-\widehat{g}_{\tau(1)-1,i}(x,a)\right)
		\nonumber \\&\mkern100mu+
		\sum_{j=2}^{h} \frac{1}{\beta_{\tau(j),i}(x,a)}
		\left(\widehat{g}_{\tau(j),i}(x,a)-\widehat{g}_{\tau(j-1),i}(x,a)\right) \nonumber\\
		&\mkern80mu=
		\frac{\widehat{g}_{\tau(1),i}(x,a)}{\beta_{\tau(1),i}(x,a)}
		-
		\frac{\widehat{g}_{\tau(1)-1,i}(x,a)}{\beta_{\tau(1),i}(x,a)}
		+
		\sum_{j=2}^{h} \frac{\widehat{g}_{\tau(j),i}(x,a)}{\beta_{\tau(j),i}(x,a)}
		-
		\sum_{j=2}^{h} \frac{\widehat{g}_{\tau(j-1),i}(x,a)}{\beta_{\tau(j),i}(x,a)} \nonumber\\
		&\mkern80mu=
		\frac{\widehat{g}_{\tau(h),i}(x,a)}{\beta_{\tau(h),i}(x,a)}
		-
		\frac{\widehat{g}_{\tau(1)-1,i}(x,a)}{\beta_{\tau(1),i}(x,a)}
		\nonumber\\ & \mkern 100mu-
		\sum_{j=1}^{h-1}
		\left(
		\frac{1}{\beta_{\tau(j+1),i}(x,a)}
		-
		\frac{1}{\beta_{\tau(j),i}(x,a)}
		\right)
		\widehat{g}_{\tau(j),i}(x,a) \label{tel}\\
		&\mkern80mu\leq
		\frac{1}{\beta_{\tau(h),i}(x,a)}
		+
		\frac{1}{\beta_{\tau(1),i}(x,a)}
		+
		\sum_{j=1}^{h-1}
		\left(
		\frac{1}{\beta_{\tau(j+1),i}(x,a)}
		-
		\frac{1}{\beta_{\tau(j),i}(x,a)}
		\right) \label{dec1}\\
		&\mkern80mu=
		\frac{1}{\beta_{\tau(h),i}(x,a)}
		+
		\frac{1}{\beta_{\tau(1),i}(x,a)}
		+
		\frac{1}{\beta_{\tau(h),i}(x,a)}
		-
		\frac{1}{\beta_{\tau(1),i}(x,a)} \nonumber\\
		&\mkern80mu=
		\frac{2}{\beta_{\tau(h),i}(x,a)}
		\nonumber\\
		&\mkern80mu=
		\frac{2}{\beta_{\ell(x,a,[t_1,t_2]),i}(x,a)}
		, \label{dec2}
	\end{align}
	where Equation~\eqref{tel} follows by rearranging the terms in the previous line, Inequality~\eqref{dec1} follows from $|\widehat{g}_{t,i}(x,a)|\leq 1$ and from the hypothesis that the learning rates are non-increasing in the interval, which implies
	$
	\frac{1}{\beta_{\tau(j+1),i}(x,a)}-\frac{1}{\beta_{\tau(j),i}(x,a)}\geq 0, 
	$
	and Equation~\eqref{dec2} follows from using $\tau(h)=\ell(x,a,[t_1,t_2])$. 
	Summing over all $(x,a)\in X\times A$ concludes the proof.
\end{proof}
\subsection{Violation Bound}

In this section, we provide the violation bound of Algorithm~\ref{alg:main}.

\begin{theorem}\label{thviol_ut}
	Let $\delta\in(0,1)$. Both in stochastic and adversarial setting, with probability at least $1- 4\delta$, Algorithm~\ref{alg:main} attains:
	$$V_t \leq 61L|X|\sqrt{2t|A|\ln\left(\frac{2mT^2|X||A|}{\delta}\right)},$$
	for all $t\in[T]$.
\end{theorem}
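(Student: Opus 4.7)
\textbf{Proof plan for Theorem~\ref{thviol_ut}.}
I would proceed by contradiction in the spirit sketched in the text. Fix any $t^*\in[T]$ and $i^*\in[m]$ and suppose that
$V_{t^*,i^*}:=\sum_{\tau=1}^{t^*} g_{\tau,i^*}^\top q_\tau > 61L|X|\sqrt{2t^*|A|\ln(2mT^2|X||A|/\delta)}$.
First I would translate this assumption into a lower bound on the \emph{sampled} empirical violation $S_{t^*,i^*}:=\sum_{\tau\le t^*}\sum_{x,a}g_{\tau,i^*}(x,a)\,\mathbb I_\tau\{x,a\}$ using an Azuma–Hoeffding step (as in Lemma~\ref{aux1}): with high probability $|V_{t^*,i^*}-S_{t^*,i^*}|\le O(L\sqrt{t^*\ln(1/\delta)})$, so $S_{t^*,i^*}$ exceeds $\mathcal{C}_{t^*}=21L|X|\sqrt{2t^*|A|\ln(2mT^2|X||A|/\delta)}$ by an explicit amount. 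In particular $\Gamma_{t^*,i^*}>0$, and the excess $S_{t^*,i^*}-\mathcal{C}_{t^*}$ is of the same $\sqrt{t^*|A|}$-order as the claimed bound.

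Next, let $t_0$ be the latest episode $\le t^*$ at which $S_{t_0,i^*}\le \mathcal{C}_{t_0}$, so that $\Gamma_{\tau,i^*}>0$ for every $\tau\in(t_0,t^*]$. On the critical interval $[t_0+1,t^*]$ the constraint learning rate $\beta_{\tau,i^*}(x,a)=(1+\Gamma_{\tau,i^*})/N_\tau(x,a)$ is ``boosted'' by the factor $(1+\Gamma_{\tau,i^*})$. I would then apply Theorem~\ref{violations_ut} (or, after ensuring the monotonicity hypothesis, Corollary~\ref{cor_ut}) on this interval, obtaining
\[
V_{[t_0+1,t^*],i^*}\le \sum_{x,a}\frac{2}{\beta_{\ell(x,a,[t_0+1,t^*]),i^*}(x,a)}+\sum_{\tau=t_0+1}^{t^*}b_{\tau-1}^\top\widehat q_\tau+7L|X|\sqrt{2(t^*-t_0)|A|\ln(2T|X||A|/\delta)}.
\]
The first sum is at most $\sum_{x,a} 2N_{\ell}(x,a)/(1+\Gamma)\le 2L(t^*-t_0)/(1+\Gamma_{\cdot,i^*})$, which because $\Gamma_{\cdot,i^*}\ge S-\mathcal{C}$ is of order $\sqrt{t^*|A|}$, is bounded by $O(L|X|\sqrt{t^*|A|\ln(\cdot)})$. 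The bonus sum is controlled by Lemma~\ref{bonus_ut} applied with $c=\sqrt{2\ln(2m|X||A|T/\delta)}$ and $\alpha=1/2$, giving a bound of the same $\widetilde O(L|X|\sqrt{t^*|A|})$ order.

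Finally, writing $V_{t^*,i^*}=V_{t_0,i^*}+V_{[t_0+1,t^*],i^*}$ and using the definition of $t_0$ together with Azuma–Hoeffding to bound $V_{t_0,i^*}\le\mathcal{C}_{t_0}+O(L\sqrt{t_0\ln(1/\delta)})$, I combine the previous displays to obtain an overall upper bound on $V_{t^*,i^*}$ strictly smaller than the assumed $61L|X|\sqrt{2t^*|A|\ln(2mT^2|X||A|/\delta)}$, producing the contradiction. A union bound over the finitely many high-probability events (Azuma, Theorem~\ref{violations_ut}, Lemma~\ref{bonus_ut} and the transition confidence) yields the claimed $1-4\delta$ probability; taking a maximum over $i\in[m]$ (already absorbed in the $\ln m$ term) and a sup over $t\in[T]$ (absorbed by the $\ln T$ factor in $\mathcal{C}_t$) gives the statement uniformly in $t$. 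The main obstacle I anticipate is verifying the monotonicity hypothesis of Corollary~\ref{cor_ut} when $\Gamma_{\tau,i^*}$ stays saturated at the cap $\mathcal{C}$ while $N_\tau(x,a)$ keeps increasing (so that $\beta_{\tau,i^*}$ actually \emph{decreases}); handling this may require further partitioning the interval into pieces on which $\Gamma$ is non-decreasing, or working directly from Theorem~\ref{violations_ut} via the identity $\widehat g_{\tau,i^*}-\widehat g_{\tau-1,i^*}=\beta_{\tau,i^*}(g_{\tau,i^*}-\widehat g_{\tau-1,i^*})$ to absorb the factors of $1/\beta$ cleanly.
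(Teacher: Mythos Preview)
Your route is the paper's route: suppose the bound fails, isolate a critical interval on which $\Gamma_{\tau,i}$ is large, apply Corollary~\ref{cor_ut} there, bound the three terms, and add back $V$ at the left endpoint to reach a contradiction. Two execution issues stand between your sketch and a working proof, and the paper's argument shows how to close both.

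First, defining $t_0$ by $S_{t_0,i^*}\le\mathcal C_{t_0}$ only gives $\Gamma_{\tau,i^*}>0$ on $(t_0,t^*]$; that is not enough to control $\sum_{x,a}2/\beta_{\ell(x,a)}$, and your estimate $\sum_{x,a} N_\ell(x,a)\le L(t^*-t_0)$ is false (the $N_\ell$ count visits from episode~$1$, so $\sum_{x,a}N_\ell(x,a)\le Lt^*$). The paper instead picks an \emph{intermediate} threshold: it lets $\underline t$ be the last time $V_{t,i}\le 44L|X|\sqrt{2t|A|\ln(\cdot)}$, so that $V_{t,i}\ge 44L|X|\sqrt{\cdot}$ on $[\underline t,\bar t]$. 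Via Lemma~\ref{aux1} this forces the sampled violation to exceed $42L|X|\sqrt{\cdot}\ge 2\mathcal C_t$, whence $\Gamma_{t,i}$ hits its cap $\mathcal C$ on the whole interval. With $\Gamma$ pinned at the cap one gets $\beta_{t,i}(x,a)\ge 21L|X|\sqrt{2|A|\ln(\cdot)/N_t(x,a)}$, and the first sum becomes $\tfrac{2}{21L|X|\sqrt{\cdot}}\sum_{x,a}\sqrt{N_{\bar t}(x,a)}\le \tfrac{2\sqrt{|X||A|L\bar t}}{21L|X|\sqrt{\cdot}}$ by Jensen.

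Second, and this is exactly the obstacle you flagged: once $\Gamma$ saturates at the fixed cap $\mathcal C$, the learning rate $\beta_{t,i}(x,a)=(1+\mathcal C)/N_t(x,a)$ is genuinely decreasing along visits to $(x,a)$, so the hypothesis of Corollary~\ref{cor_ut} is satisfied without any further partitioning of the interval. Your worry about $\Gamma$ growing while $N$ stalls disappears because the cap makes the numerator constant.
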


\begin{proof}
	Given an $i \in [m]$, we assume that Corollary \ref{cor_ut} holds with probability $1 - 3\delta$ for any interval.
	
	If $V_{t,i} \leq 61L\sqrt{|X||A|t\ln\left(\frac{T^2}{\delta}\right)}$ then the statement is trivially satisfied.
	
	Otherwise, let us suppose that there exists a $\bar{t} \in T$ for which $V_{\bar{t},i} \geq 61L\sqrt{|X||A|t\ln\left(\frac{T^2}{\delta}\right)}$. This implies that there exists a $\underline{t} < \bar{t}$ such that $V_{t,i} \geq 44L\sqrt{|X||A|t\ln\left(\frac{T^2}{\delta}\right)}$ for all $t \in [\underline{t}, \bar{t}]$ and $V_{\underline{t}-1,i} \leq 44L\sqrt{|X||A|t\ln\left(\frac{T^2}{\delta}\right)}$. By Lemma \ref{aux1} it holds: 
	$$V_{t,i} = \sum_{\tau \in [t]} g_{\tau,i}^\top q_\tau \leq \sum_{\tau\in[t]}\sum_{x,a}g_{\tau,i}(x,a)\mathbb{I}_\tau\{x,a\} + 2L\sqrt{2t\ln\frac{1}{\delta}}.$$
	with probability at least $1-\delta$. Therefore, since $V_{t,i} \geq 44L\sqrt{|X||A|t\ln\left(\frac{T^2}{\delta}\right)}$ for all $t \in [\underline{t}, \bar{t}]$, it holds: 
	\begin{align*}
		\sum_{\tau\in[t]}\sum_{x,a}g_{\tau,i}(x,a)\mathbb{I}_\tau\{x,a\} &\geq 44L\sqrt{|X||A|t\ln\left(\frac{T^2}{\delta}\right)} -  2L\sqrt{2t\ln\frac{1}{\delta}} \\&\geq  44L\sqrt{|X||A|t\ln\left(\frac{T^2}{\delta}\right)} - 2L\sqrt{|X||A|t\ln\left(\frac{T^2}{\delta}\right)} \\&= 42L\sqrt{|X||A|t\ln\left(\frac{T^2}{\delta}\right)}.
	\end{align*}
	Thus, we can write: 
	\begin{align}
		\sum_{\tau\in[t]}\sum_{x,a}g_{\tau,i}(x,a)\mathbb{I}_\tau\{x,a\} &- 21
		L|X|\sqrt{2t|A|\ln\frac{2mT^2|X||A|}{\delta}} \nonumber\\ &\geq 42L|X|\sqrt{2t|A|\ln\frac{2mT^2|X||A|}{\delta}} - 21L|X|\sqrt{2t|A|\ln\frac{2mT^2|X||A|}{\delta}} \nonumber \\&\geq 21L|X|\sqrt{2t|A|\ln\frac{2mT^2|X||A|}{\delta}} \nonumber.
	\end{align}
	and thus $\Gamma_{t,i} = 21L|X|\sqrt{2t|A|\ln\frac{2mT^2|X||A|}{\delta}}$ for all $t \in [\underline{t}, \bar{t}]$.
	
	Therefore on $t \in [\underline{t}, \bar{t}]$ the learning rate can be lower-bounded as: 
	$$\beta_{t,i}(x,a) = \frac{(1+\Gamma_t)}{N_t(x,a)}= \frac{1 + 21L|X|\sqrt{2t|A|\ln\frac{2mT^2|X||A|}{\delta}}}{N_t(x,a)} \geq  21L|X|\sqrt{\frac{2|A|\ln\frac{2mT^2|X||A|}{\delta}}{N_t(x,a)}},$$
	exploiting the fact that $N_t(x,a) \leq t$ for all $t\in[T]$.
	
	Therefore, by Corollary \ref{cor_ut}, since the constraints learning rates are decreasing in the interval, the following holds: 
	\begin{align}
		V_{[\underline{t}, \bar{t}],i} &\leq \frac{2}{21L\sqrt{|X||A|t\ln\left(\frac{T^2}{\delta}\right)}} \sum_{x \in X, a \in A} \sqrt{N_{\bar{t}}(x,a)} + \sum_{\tau=\underline{t}}^{\bar{t}} b_{\tau-1}^\top \widehat{q}_\tau + 7L|X|\sqrt{2t|A|\ln\frac{2T^2|X||A|}{\delta}}\nonumber \\
		&\leq \frac{2\sqrt{|X||A|L\bar{t}}}{21L\sqrt{|X||A|t\ln\left(\frac{T^2}{\delta}\right)}} + \sum_{\tau=\underline{t}}^{\bar{t}} b_{\tau-1}^\top \widehat{q}_\tau + 7L|X|\sqrt{2t|A|\ln\frac{2T^2|X||A|}{\delta}} \label{jen_ut} \\
		&\leq \frac{2\sqrt{|X||A|L\bar{t}}}{21L\sqrt{|X||A|t\ln\left(\frac{T^2}{\delta}\right)}} + 2\sqrt{2|X||A|Lt\ln\left(\frac{2T^2|X||A|}{\delta}\right)} + 14L|X|\sqrt{2t|A|\ln\frac{2T^2|X||A|}{\delta}} \label{dot_ut}
		\\&\leq \left(\frac{1}{10} + 2 + 14\right)L|X|\sqrt{2t|A|\ln\left(\frac{2T^2|X||A|}{\delta}\right)} \nonumber,  
	\end{align}  
	where Inequality~\eqref{jen_ut} holds by Jensen's Inequality and Inequality~\eqref{dot_ut} holds by Lemma \ref{bonus_ut}, under the same event of Corollary~\ref{cor_ut}.
	Thus, we have: 
	\begin{align*}
		V_{\bar{t},i} &\leq V_{\underline{t},i} + V_{[\underline{t}, \bar{t}],i} \\ &\leq \left(44 + \frac{1}{10} + 2 + 14\right)L|X|\sqrt{2t|A|\ln\left(\frac{2T^2|X||A|}{\delta}\right)} \\ &< 61L|X|\sqrt{2t|A|\ln\left(\frac{2T^2|X||A|}{\delta}\right)}.
	\end{align*}
	This shows a contradiction, so there is no such $\bar{t}$. Taking a Union Bound on all $i \in [m]$ concludes the proof. 
\end{proof}

\subsection{Towards the Regret Bound in the Stochastic Setting}

In this section we provide some preliminary results for the stochastic setting. Specifically, throughout the section we show that the violations are kept small during the learning dynamic, thus making $\widehat{g}_{t,i}$ the empirical mean estimator of the constraints functions. This step is fundamental to show that the decision space of Algorithm~\ref{alg:main} is suited to guarantee sublinear regret.

\begin{lemma} \label{h2_ut}
	Let $\delta\in(0,1)$. With probability at least $1-2\delta$ it holds: 
	$$V_{t,i} \leq \sum_{\tau=1}^{t} g_{\tau,i}^\top \widehat{q}_\tau + 5L|X|\sqrt{2T|A|\ln\frac{2mT|X||A|}{\delta}}, 
	\quad \forall t\in[T], i\in[m].$$
\end{lemma}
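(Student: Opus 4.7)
The plan is to decompose $V_{t,i}$ by inserting the optimistic occupancy $\widehat{q}_\tau$ as a pivot:
\[
V_{t,i} \;=\; \sum_{\tau=1}^{t} g_{\tau,i}^\top q_\tau \;=\; \sum_{\tau=1}^{t} g_{\tau,i}^\top \widehat{q}_\tau \;+\; \sum_{\tau=1}^{t} g_{\tau,i}^\top (q_\tau - \widehat{q}_\tau),
\]
so that it suffices to upper bound the residual $\sum_{\tau=1}^{t} g_{\tau,i}^\top (q_\tau - \widehat{q}_\tau)$ by $5L|X|\sqrt{2T|A|\ln(2mT|X||A|/\delta)}$, uniformly in $(t,i) \in [T]\times[m]$, with probability at least $1-2\delta$.

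To bound the residual, I would first apply H\"older's inequality together with the a priori bound $\|g_{\tau,i}\|_\infty \leq 1$ (as $G_\tau \in [-1,1]^{|X\times A|\times m}$), yielding
\[
\sum_{\tau=1}^{t} g_{\tau,i}^\top (q_\tau - \widehat{q}_\tau) \;\leq\; \sum_{\tau=1}^{t} \|q_\tau - \widehat{q}_\tau\|_1.
\]
Crucially, the right-hand side is \emph{independent} of $i$, which will make the union bound over constraints essentially free (only affecting logarithmic factors).

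The remaining task is to control $\sum_{\tau=1}^{t}\|q_\tau - \widehat{q}_\tau\|_1$. This is exactly the quantity handled by Lemma B.3 of~\citep{rosenberg19a}, which, under the global event $\{P \in \mathcal{P}_\tau\ \forall \tau\in[T]\}$ guaranteed by the confidence-set construction in Equation~\eqref{cset} (holding w.p.\ $\geq 1-\delta$ by Lemma~4.1 of~\citep{rosenberg19a}), gives with probability at least $1-2\delta$
\[
\sum_{\tau=1}^{t}\|q_\tau - \widehat{q}_\tau\|_1 \;\leq\; 2L|X|\sqrt{2T\ln\tfrac{2L}{\delta}} + 3L|X|\sqrt{2T|A|\ln\tfrac{2T|X||A|}{\delta}},
\]
exactly as used in Inequalities~\eqref{rm3}, \eqref{rm4}, and~\eqref{err} earlier in the paper. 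Summing the two factors yields a bound of the form $5L|X|\sqrt{2T|A|\ln(\cdot/\delta)}$.

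The last step is to make the bound uniform over $i\in[m]$. Since the residual bound already holds uniformly in $i$ once the transition confidence event holds (H\"older erases the $i$-dependence), no additional union bound is strictly needed; however, the $\ln m$ factor inside the logarithm in the claimed statement is comfortably absorbed by rescaling $\delta \to \delta/m$ in the concentration inequality, maintaining the $1-2\delta$ overall failure probability. The main (and essentially only) obstacle is ensuring the uniformity in $t$ of the Lemma~B.3 bound, which is handled by the global nature of the confidence-set event. Combining all pieces, with probability at least $1-2\delta$,
\[
V_{t,i} \;\leq\; \sum_{\tau=1}^{t} g_{\tau,i}^\top \widehat{q}_\tau \;+\; 5L|X|\sqrt{2T|A|\ln\tfrac{2mT|X||A|}{\delta}} \qquad \forall\, t\in[T],\, i\in[m],
\]
as desired.
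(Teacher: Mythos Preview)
Your proposal is correct and follows essentially the same route as the paper: decompose $V_{t,i}$ by adding and subtracting $\sum_{\tau} g_{\tau,i}^\top \widehat{q}_\tau$, use H\"older with $\|g_{\tau,i}\|_\infty\le 1$ to reduce the residual to $\sum_\tau \|q_\tau-\widehat{q}_\tau\|_1$, and then invoke Lemma~B.3 of~\citep{rosenberg19a} to obtain the two $\sqrt{T}$ terms that combine into the $5L|X|\sqrt{\cdot}$ bound. Your observation that the $i$-dependence disappears after H\"older (so the union over $i\in[m]$ is essentially free and only affects the logarithm) is exactly what the paper relies on implicitly.
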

\begin{proof} It holds:
	\begin{align}
		V_{t,i} &= \sum_{\tau=1}^{t} g_{\tau,i}^\top q^{P,\pi_\tau} \nonumber
		\\&= \sum_{\tau=1}^{t}  g_{\tau,i}^\top q^{P,\pi_\tau}  + \sum_{\tau=1}^{t} g_{\tau, i}^\top \widehat{q}_\tau - \sum_{\tau=1}^{t-1} g_{\tau, i}^\top \widehat{q}_\tau \nonumber 
		\\&\leq \sum_{\tau=1}^{t} g_{\tau,i}^\top \widehat{q}_\tau + \| q_\tau - \widehat{q}_\tau \|_1 \nonumber
		\\&\leq \sum_{\tau=1}^{t} g_{\tau, i}^\top \widehat{q}_\tau + 2L|X|\sqrt{2t\ln\frac{2L}{\delta}} + 3L|X|\sqrt{2t|A|\ln\frac{2T|X||A|}{\delta}} \label{rm}
		\\&\leq\sum_{\tau=1}^{t} g_{\tau, i}^\top \widehat{q}_\tau + (2+3)L|X|\sqrt{2t|A|\ln\frac{2T|X||A|}{\delta}} \nonumber
		\\&= \sum_{\tau=1}^{t} g_{\tau, i}^\top \widehat{q}_\tau  + 5L|X|\sqrt{2t|A|\ln\frac{2T|X||A|}{\delta}}, \nonumber
	\end{align}
	where Inequality~\eqref{rm} follows from Lemma B.3 of \citep{rosenberg19a}, with probability at least $1-2\delta$.
\end{proof}

\begin{lemma} \label{h3_ut}
	Let $\delta\in(0,1)$. With probability at least $1-3\delta$ it holds: 
	$$    \sum_{\tau=1}^{t} \bar{g}_i^\top \widehat{q}_\tau \leq \sum_{\tau=1}^{t} \sum_{x \in X, a \in A} \bar{g}_{i}(x, a)\mathbb{I}_\tau\{x,a\} + 7L|X|\sqrt{2t|A|\ln\frac{2mT|X||A|}{\delta}},
	\quad \forall t\in[T], i\in[m].$$
\end{lemma}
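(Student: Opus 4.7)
The plan is to decompose the left-hand side into the contribution of the true occupancies $q_\tau$ and then into the contribution of the realized trajectory indicators, controlling each gap by an already-available concentration tool. Concretely, I would write
\[
\sum_{\tau=1}^{t}\bar{g}_i^\top \widehat{q}_\tau \;=\; \sum_{\tau=1}^{t}\bar{g}_i^\top q_\tau \;+\; \sum_{\tau=1}^{t}\bar{g}_i^\top (\widehat{q}_\tau - q_\tau),
\]
and bound the two sums separately.

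For the second sum, I would use $\|\bar{g}_i\|_\infty \le 1$ together with the occupancy-distance bound of Lemma~B.3 of~\citet{rosenberg19a}: under the good event for the transition confidence set (which has probability at least $1-2\delta$), one has
\[
\sum_{\tau=1}^{t} \bar{g}_i^\top(\widehat{q}_\tau - q_\tau) \;\le\; \sum_{\tau=1}^{t}\|\widehat{q}_\tau-q_\tau\|_1 \;\le\; 2L|X|\sqrt{2t\ln\tfrac{2L}{\delta}} + 3L|X|\sqrt{2t|A|\ln\tfrac{2T|X||A|}{\delta}},
\]
exactly as in the derivation of Lemma~\ref{h2_ut}.

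For the first sum, I would observe that $\bar{g}_i^\top q_\tau = \mathbb{E}_\tau\!\left[\sum_{x,a}\bar{g}_i(x,a)\mathbb{I}_\tau\{x,a\}\right]$ is the conditional expectation (given the history before episode $\tau$) of the per-episode statistic $\sum_{x,a}\bar{g}_i(x,a)\mathbb{I}_\tau\{x,a\}$, and that the latter is bounded in $[-L,L]$ since at most $L$ state-action pairs are visited per episode and $|\bar g_i(x,a)|\le 1$. Hence the differences form a bounded martingale difference sequence, and a standard Azuma–Hoeffding application (in the spirit of Lemma~\ref{aux1}) yields, for a fixed $i$, with probability at least $1-\delta/m$,
\[
\sum_{\tau=1}^t \bar{g}_i^\top q_\tau \;\le\; \sum_{\tau=1}^{t}\sum_{x,a}\bar{g}_i(x,a)\mathbb{I}_\tau\{x,a\} + 2L\sqrt{2t\ln\tfrac{m}{\delta}}.
\]
A union bound over the $m$ constraints then makes the inequality uniform in $i\in[m]$, at the cost of replacing $\delta$ by $\delta/m$, which is what produces the $\ln(m\cdot)$ factor visible in the stated bound.

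Finally, combining the two displays and adding the coefficients $(2+3)L|X|\sqrt{2t|A|\ln\tfrac{2T|X||A|}{\delta}}$ from the $\ell_1$ bound with the extra $2L\sqrt{2t|A|\ln\tfrac{2mT|X||A|}{\delta}}$ from the martingale deviation produces the desired $7L|X|\sqrt{2t|A|\ln\tfrac{2mT|X||A|}{\delta}}$ constant (after coarsening all logarithmic factors to the dominant one). A union bound over the two good events yields the $1-3\delta$ failure probability. The only delicate part I anticipate is bookkeeping the constants and the logarithmic factors so that the coefficient is exactly $7$ and the log is exactly $\ln\tfrac{2mT|X||A|}{\delta}$; the conceptual content is entirely Azuma–Hoeffding plus the transition-occupancy deviation bound.
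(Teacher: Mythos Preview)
Your proposal is correct and follows essentially the same approach as the paper: decompose via $\widehat{q}_\tau \to q_\tau$ using the $\ell_1$ bound from Lemma~B.3 of \citet{rosenberg19a}, then pass from $q_\tau$ to the trajectory indicators via the Azuma--Hoeffding argument of Lemma~\ref{aux1}, and finally collect the constants $2+2+3=7$. You are slightly more careful than the paper in making the union bound over $i\in[m]$ explicit, which is indeed what justifies the $m$ inside the logarithm.
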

\begin{proof} It holds:
	\begin{align}
		\sum_{\tau=1}^{t} \bar{g}_i^\top \widehat{q}_\tau &= \sum_{\tau=1}^{t} \bar{g}_i^\top \widehat{q}_\tau + \sum_{\tau=1}^{t} \bar{g}_i^\top q^{P,\pi_\tau} - \sum_{\tau=1}^{t} \bar{g}_i^\top q^{P,\pi_\tau} \nonumber 
		\\&\leq \sum_{\tau=1}^{t} \bar{g}_i^\top q^{P,\pi_\tau}  + \| q_\tau - \widehat{q}_\tau \|_1 \nonumber
		\\&\leq \sum_{\tau=1}^{t} \bar{g}_i^\top q^{P,\pi_\tau} + 2L|X|\sqrt{2t\ln\frac{2L}{\delta}} + 3L|X|\sqrt{2t|A|\ln\frac{2T|X||A|}{\delta}} \label{rm2}
		\\&\leq \sum_{\tau=1}^{t} \sum_{x \in X, a \in A} \bar{g}_{i}(x, a)\mathbb{I}_\tau\{x,a\}+ 2L\sqrt{2t\ln\frac{1}{\delta}}  + 2L|X|\sqrt{2t\ln\frac{2L}{\delta}} + 3L|X|\sqrt{2t|A|\ln\frac{2T|X||A|}{\delta}} \label{au}
		\\&\leq \sum_{\tau=1}^{t} \sum_{x \in X, a \in A} \bar{g}_{i}(x, a)\mathbb{I}_\tau\{x,a\} + (2+2+3)L|X|\sqrt{2t|A|\ln\frac{2T|X||A|}{\delta}} \nonumber
		\\&= \sum_{\tau=1}^{t} \sum_{x \in X, a \in A} \bar{g}_{i}(x, a)\mathbb{I}_\tau\{x,a\} + 7L|X|\sqrt{2t|A|\ln\frac{2T|X||A|}{\delta}}, \nonumber
	\end{align}
	where Inequality~\eqref{rm2} follows from Lemma B.3 of \citep{rosenberg19a} with probability at least $1-2\delta$, Inequality~\eqref{au} follows from Lemma \ref{aux1}, with probability at least $1-2\delta$.
	A Union Bound concludes the proof.
\end{proof}

We conclude the section with the following lemma and the associated corollary, which allow us to state that the employment of the bonus quantity $b_t$ is necessary and sufficient to attain sublinear regret (and violation).

\stochbtut*

\begin{proof}
	To get the final result, it is sufficient to prove that for each $t\in[T]$ and $i\in[m]$, it holds:  \[	\sum_{\tau\in[t]}\sum_{x,a}g_{\tau,i}(x,a)\mathbb{I}_\tau\{x,a\} \leq 21
	L|X|\sqrt{2t|A|\ln\frac{2mT|X||A|}{\delta}}.\]
	Our proof works by induction on $t$. It is trivial to show the inequality holds for $t = 1$. Indeed,  \[\sum_{x,a}g_{1,i}(x,a)\mathbb{I}_1\{x,a\} \leq L \leq 21L|X|\sqrt{2|A|\ln\frac{2mT|X||A|}{\delta}}.\] 
	Assuming that the inequality holds for all $\tau \leq t-1$, we now show that it holds also for $t$. By definition of $\Gamma_{\tau,i}$, the induction assumption implies that for $\tau \leq t-1$, we have $\beta_{\tau,i}(x,a) = \frac{1}{N_{\tau}(x,a)}$ for all $(x,a) \in X \times A$, $i\in[m]$. Then by Proposition \ref{meanestim} we have that:
	$$\widehat{g}_{\tau,i}(x,a) = \frac{1}{N_{\tau}(x,a)} \sum_{\widehat{t} \in \mathcal{T}_{\tau,a}} g_{\widehat{t},i} (x,a)$$
	Hence, by Lemma \ref{h1_ut}, it holds, with probability at least $1-\delta$:
	$$ \left| \widehat{g}_{\tau,i} (x,a) - \bar{g}_{i} (x,a) \right| \leq \sqrt{\frac{2 \ln\frac{2m|X||A|T}{\delta}}{N_t(x,a)}}, \quad \forall (x,a) \in X \times A, \tau \leq t-1. $$ 
	Assuming that the event above holds, we consider the following inequalities:
	\begin{align}
		\sum_{\tau\in[t]}\sum_{x,a}&g_{\tau,i}(x,a)\mathbb{I}_\tau\{x,a\} \nonumber \\&\leq V_{t,i} + 2L\sqrt{2t\ln\frac{1}{\delta}} \label{a1}
		\\&= V_{t-1,i} + g_{t,i}^\top q_t + 2L\sqrt{2t\ln\frac{1}{\delta}} \nonumber \\
		&\leq \sum_{\tau=1}^{t-1} g_{\tau,i}^\top \widehat{q}_\tau + g_{t,i}^\top q_t + 2L\sqrt{2t\ln\frac{1}{\delta}} + 5L|X|\sqrt{2t|A|\ln\frac{2mT|X||A|}{\delta}} \quad \label{e1} \\
		&\leq \sum_{\tau=1}^{t-1} (g_{\tau, i} - \widehat{g}_{\tau-1,i})^\top \widehat{q}_\tau + \sum_{\tau=1}^{t-1} b_{\tau-1}^\top \widehat{q}_\tau + g_{t,i}^\top q_t + 2L\sqrt{2t\ln\frac{1}{\delta}} \nonumber\\ &\mkern30mu+5L|X|\sqrt{2t|A|\ln\frac{2mT|X||A|}{\delta}} \label{e2}\\
		&\leq \sum_{\tau=1}^{t-1} (g_{\tau, i} - \widehat{g}_{\tau-1,i})^\top \widehat{q}_\tau +2\sqrt{2|X||A|Lt\ln\frac{2T|X||A|}{\delta}} + g_{t,i}^\top q_t \nonumber\\ &\mkern30mu+ 2L\sqrt{2t\ln\frac{1}{\delta}} +  12L|X|\sqrt{2t|A|\ln\frac{2mT|X||A|}{\delta}} \quad \label{e3} \\
		&\leq \sum_{\tau=1}^{t-1} (g_{\tau, i} - \widehat{g}_{\tau-1,i})^\top \widehat{q}_\tau +2\sqrt{2|X||A|Lt\ln\frac{2T|X||A|}{\delta}} + L \nonumber\\& \mkern30mu+ 2L\sqrt{2t\ln\frac{1}{\delta}} + 12L|X|\sqrt{2t|A|\ln\frac{2mT|X||A|}{\delta}}  \nonumber \\
		&\leq \sum_{\tau=1}^{t-1} (\bar{g}_{i} - \widehat{g}_{\tau-1,i})^\top \widehat{q}_\tau +2\sqrt{2|X||A|Lt\ln\frac{2T|X||A|}{\delta}} +  L \nonumber\\&\mkern30mu+ 12L|X|\sqrt{2t|A|\ln\frac{2mT|X||A|}{\delta}} + 4L \sqrt{2t\ln\frac{1}{\delta}} \quad \label{e4} \\
		&\leq \sum_{\tau=1}^{t-1}\sum_{x \in X, a \in A} \left(\bar{g}_{i}(x, a) - \widehat{g}_{\tau-1,i}(x,a) \right)\mathbb{I}_\tau\{x,a\} \nonumber\\ &\mkern30mu+2\sqrt{2|X||A|Lt\ln\frac{2T|X||A|}{\delta}} + L \nonumber \\&\mkern30mu+    12L|X|\sqrt{2t|A|\ln\frac{2mT|X||A|}{\delta}} + 4L \sqrt{2t\ln\frac{1}{\delta}} \quad \label{e5} \\
		&\leq \sqrt{2\ln\frac{2m|X||A|T}{\delta}} \sum_{x \in X, a \in A} \sum_{\tau=1}^{t-1} \frac{1}{\sqrt{N_{\tau-1}(x,a)}}\mathbb{I}_\tau\{x,a\} \nonumber\\ &\mkern30mu + 2\sqrt{2|X||A|Lt\ln\frac{2T|X||A|}{\delta}} + L \nonumber \\&\mkern30mu+  12L|X|\sqrt{2t|A|\ln\frac{2mT|X||A|}{\delta}} + 4L \sqrt{2t\ln\frac{1}{\delta}} \nonumber \\
		&\leq 2\sqrt{2|X||A|t\ln\frac{2m|X||A|T}{\delta}} + 2\sqrt{2|X||A|Lt\ln\frac{2T|X||A|}{\delta}} + 2L \nonumber \\&\mkern30mu+  12L|X|\sqrt{2t|A|\ln\frac{2mT|X||A|}{\delta}} +  4L \sqrt{2t\ln\frac{1}{\delta}} \nonumber \\
		&\leq (4+1+12+4) L|X|\sqrt{2t|A|\ln\frac{2mT|X||A|}{\delta}}, \nonumber
	\end{align}
	where Inequality~\eqref{a1} holds by Lemma~\ref{aux1} with probability $1-\delta$, Inequality~\eqref{e1} holds by Lemma~\ref{h2_ut} with probability at least $1-2\delta$, Inequality~\eqref{e2} holds because $\widehat{q}_{\tau+1} \in \widehat{\Delta}_{\tau}(\mathcal{P}_{\tau})$, Inequality~\eqref{e3} holds by Lemma \ref{bonus_ut} with probability at least $1-3\delta$, taking $\alpha = \frac{1}{2}$ and $c = \sqrt{2\ln\left(\frac{2|X||A|T}{\delta}\right)}$, Inequality~\eqref{e4} holds by Lemma \ref{aux2} with probability at least $1-\delta$, Inequality~\eqref{e5} holds by Lemma~\ref{h3_ut} with probability at least $1 - 3\delta$.
	
	Thus $\sum_{\tau\in[t]}\sum_{x,a}g_{\tau,i}(x,a)\mathbb{I}_\tau\{x,a\} \leq 21
	L|X|\sqrt{2t|A|\ln\frac{2mT|X||A|}{\delta}}$, $\Gamma_{t,i} = 0$ and $\widehat{g}_{t,i}(x,a)$ is the empirical mean of past observations. Therefore, by Lemma \ref{h1_ut} we have with probability at least $1- \delta$: 	
	$$\left| \widehat{g}_{t,i} (x,a) - \bar{g}_{i} (x,a) \right| \leq \sqrt{\frac{2 \ln\left(\frac{2|X||A|mT}{\delta}\right)}{N_t(x,a)}} \quad \forall (x,a) \in X \times A, i\in[m], t \in[T].$$
	A final Union Bound concludes the proof.
\end{proof}

Thus, the following corollary holds.

\stc*

\subsection{Final Results}

We provide the theoretical guarantees of Algorithm~\ref{alg:main} in the stochastic setting.

\stgut*

\begin{proof}
	By Corollary \ref{stc} with probability at least $1 - 11\delta$, it holds $\Delta^\star \subseteq \cap_{t \in [T]} \widehat{\Delta}_t(\mathcal{P}_t)$. By Theorem \ref{thmain_ut}, we have that for any $q \in \bigcap_{t \in [T]} \widehat{\Delta}_t(\mathcal{P}_t)$, with probability at least $1 - 15\delta$, it holds:  
	$$\sum_{t \in [T]} r_t^\top (q - {q}_t) \leq 14L|X|^2 \sqrt{2T |A| \ln \left( \frac{T |X|^2 |A|}{\delta} \right)}.
	$$  
	Let $q^* = \argmax_{q \in \Delta^\star} \sum_{t=1}^{T} r_t^\top q$. Then, by Union Bound we have that with probability at least $1 - 26\delta$ it holds:  
	$$
	\sum_{t \in [T]} r_t^\top (q^* - {q}_t) \leq 14L|X|^2 \sqrt{2T |A| \ln \left( \frac{T |X|^2 |A|}{\delta} \right)},
	$$  
	Similarly, with probability at least $1 - 4\delta$ by Theorem \ref{thviol_ut}:
	$$V_t \leq 61L|X|\sqrt{2t|A|\ln\left(\frac{2mT^2|X||A|}{\delta}\right)}$$ 
	By a Union Bound on all the events, this holds with probability at least $1 - 30\delta$.
	
	This concludes the proof.  
\end{proof}

We conclude the section by providing the theoretical guarantees of Algorithm~\ref{alg:main} in the adversarial setting.

\advgut*

\begin{proof}
	It is sufficient to combine Theorem \ref{thmain_ut} and Theorem \ref{adv_ut}. Specifically, with probability at least $1 - 15\delta$, for all $\tilde{q} \in \Delta^{\diamond} \subset \widehat{\Delta}_t(\mathcal{P}_t)$, we have:  
	$$
	\sum_{t \in [T]} r_t^\top (\tilde{q} - q_t) \leq 14L|X|^2 \sqrt{2T |A| \ln \left( \frac{T |X|^2 |A|}{\delta} \right)}.
	$$  
	Let $q^{\dagger} = \argmax_{q \in \Delta(M)} \sum_{t=1}^{T} r_t^\top q$. We observe that: 
	$$\bar{q} = \frac{L}{L+\rho'} q^{\diamond} + \frac{\rho'}{L + \rho'} q^{\dagger} \in \Delta^{\diamond}.$$ Thus, it holds:
	$$\sum_{t = 1}^T r_t^\top \bar{q} = \sum_{t = 1}^T r_t^\top \left(\frac{L}{L+\rho'} q^{\diamond} + \frac{\rho'}{L+\rho'} q^{\dagger} \right)   
	\geq \frac{\rho'}{L+\rho'} \sum_{t = 0}^T r_t^\top q^{\dagger}.
	$$  
	This proves that with probability at least $1 - 15\delta$:  
	$$
	\left( \frac{\rho'}{L+\rho'} \right) \text{-}R_T \leq  14L|X|^2 \sqrt{2T |A| \ln \left( \frac{T |X|^2 |A|}{\delta} \right)}.
	$$  
	Similarly to the stochastic case, employing Theorem~\ref{thviol_ut}, with probability at least $1 - 4\delta$, we have:  
	$$V_t \leq 61L|X|\sqrt{2t|A|\ln\left(\frac{2mT^2|X||A|}{\delta}\right)}.$$   
	By Union Bound this holds with probability $1 - 19\delta$. 
	
	Noticing that $\frac{\rho}{1 +\rho'}=\frac{\rho'}{L +\rho}$concludes the proof.  
\end{proof}

\subsection{Positive Violation Bound}

In this section, we provide the results on the positive violation bound attained by Algorithm~\ref{alg:main}.

\vut*

\begin{proof}
	Define for each $i \in [m]$ and $t \in [T]$ the following quantity: 
	$$
	\mathcal{V}_{t,i} := \sum_{\tau=1}^{t} \left[ \bar{g}_i^\top q_\tau \right]^+. $$  
	Given an $i \in [m]$ and a $t \in [T]$ we have:  
	\begin{align}
		\mathcal{V}_{t,i} &= \sum_{\tau=1}^{t} \left[\bar{g}_i^\top q_\tau \right]^+ \nonumber\\  
		&= \sum_{\tau=1}^{t} \left[ (\bar{g}_i - \widehat{g}_{\tau-1,i} + \widehat{g}_{\tau-1,i})^\top q_\tau \right]^+ \nonumber\\  
		&= \sum_{\tau=1}^{t} \left[ (\bar{g}_i - \widehat{g}_{\tau-1, i})^\top q_\tau + \widehat{g}_{\tau-1, i}^\top q_\tau \right]^+ \nonumber\\  
		&= \sum_{\tau=1}^{t} \left[ (\bar{g}_i - \widehat{g}_{\tau-1, i})^\top q_\tau + \widehat{g}_{\tau-1, i}^\top q_\tau - \widehat{g}_{\tau-1, i}^\top \widehat{q}_\tau + \widehat{g}_{\tau-1, i}^\top \widehat{q}_\tau \right]^+ \nonumber\\ 
		&\leq \sum_{\tau=1}^{t} \left[ (\bar{g}_i - \widehat{g}_{\tau-1, i})^\top q_\tau + \widehat{g}_{\tau-1, i}^\top q_\tau - \widehat{g}_{\tau-1, i}^\top \widehat{q}_\tau + b_{\tau-1}^\top \widehat{q}_\tau \right]^+ \label{bt} \\ 
		&\leq \sum_{\tau=1}^{t} \left[ (\bar{g}_i - \widehat{g}_{\tau-1, i})^\top q_\tau + b_{\tau-1}^\top \widehat{q}_\tau \right]^+ + \| q_\tau - \widehat{q}_\tau \|_1 \nonumber\\ 
		&\leq \sum_{\tau=1}^{t} \left[ (\bar{g}_i - \widehat{g}_{\tau-1, i})^\top q_\tau + b_{\tau-1}^\top \widehat{q}_\tau \right]^+ + 2L|X|\sqrt{2t\ln\frac{2L}{\delta}} + 3L|X|\sqrt{2t|A|\ln\frac{2T|X||A|}{\delta}} \label{b_ut_pre}\\ 
		&\leq \sum_{\tau=1}^{t} \left[ b_{\tau-1}^\top q_\tau + b_{\tau-1}^\top \widehat{q}_\tau \right]^+ + 5L|X|\sqrt{2t|A|\ln\frac{2T|X||A|}{\delta}}, \label{b_ut} 
	\end{align}  
	where Inequality~\eqref{bt} holds since $\widehat{q}_{\tau+1} \in \widehat{\Delta}_t(\mathcal{P}_t)$, Inequality~\eqref{b_ut_pre} follows from Lemma B.3 of \citep{rosenberg19a} with probability at least $1-2\delta$ and Inequality~\eqref{b_ut} holds by Lemma \ref{stochbt_ut} with probability $1 - 11\delta$ jointly for each $i$ and $t$.
	
	Since $b_t = \sqrt{\frac{2\ln\left(\frac{2m|X||A|T}{\delta}\right)}{N_{t}(x,a)}}$ by Lemma \ref{bonus_ut} with probability at least $1-3\delta$, employing a Union Bound we have, with probability at least $1- 16\delta$: 
	\begin{align*}
		\mathcal{V}_{t,i} &\leq 2L \sqrt{2T\ln{\frac{1}{\delta}}} + 4\sqrt{2|X||A|Lt\ln\left(\frac{2mT|X||A|}{\delta}\right)} + 12L|X|\sqrt{2t|A|\ln\frac{2mT|X||A|}{\delta}}\\
		&\leq (2 + 4 + 12)L|X|\sqrt{2t|A|\ln\frac{2mT|X||A|}{\delta}} \\&= 18L|X|\sqrt{2t|A|\ln\frac{2mT|X||A|}{\delta}} ,
	\end{align*}
	for all $i\in [m], t\in[T]$.
	This concludes the proof.
\end{proof}

\section{Technical Lemmas}
In this section we provide some auxiliary lemmas which are needed throughout the paper.

We start by the following application of the Hoeffding inequality on the constraints.
\begin{lemma} \label{h1_ut}
	Let $\delta\in(0,1)$. With probability at least $1-\delta$ it holds, for all $(x,a)\in X\times A$, $i\in[m]$, $t\in[T]$: 
	$$    \left| \frac{1}{N_{t}(x,a)} \sum_{\tau \in \mathcal{T}_{t,x,a}} g_{\tau,i}(x,a) - \bar{g}_{i}(x,a) \right| \leq \sqrt{\frac{2 \ln\frac{2m|X||A|T}{\delta}}{N_{t}(x,a)}}.$$
\end{lemma}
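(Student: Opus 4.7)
The plan is to apply Hoeffding's inequality to a deterministic number of i.i.d.\ samples, then union bound over all relevant indices. First I would fix a tuple $(x,a,i)$ together with a deterministic $n\in[T]$. In the stochastic setting, each $G_\tau$ is an i.i.d.\ draw from $\mathcal G$, so the marginal of $g_{\tau,i}(x,a)$ is supported in $[-1,1]$ with mean $\bar g_i(x,a)$. If $Y_1,\dots,Y_n$ are i.i.d.\ copies of this marginal, Hoeffding's inequality (with range $2$) gives
\begin{equation*}
\Pr\!\left(\left|\frac{1}{n}\sum_{k=1}^{n} Y_k-\bar g_i(x,a)\right|>\sqrt{\frac{2\ln(2m|X||A|T/\delta)}{n}}\right)\le \frac{\delta}{m\,|X||A|\,T}.
\end{equation*}
Taking a union bound over $n\in[T]$, $i\in[m]$ and $(x,a)\in X\times A$ bounds the total failure probability by $\delta$, so on the good event the inequality above holds simultaneously for all $n,i,(x,a)$.

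Next, I would relate the sample mean over the random index set $\mathcal T_{t,x,a}$ to the deterministic-$n$ event. Since in the stochastic setting $G_\tau$ is drawn independently of the history $\mathcal F_{\tau-1}$, and the $k$-th visit time $\tau_k(x,a)$ is a stopping time with respect to $\{\mathcal F_\tau\}_\tau$, a standard coupling (or the strong Markov-type argument that $G_{\tau_k}$ is independent of $\mathcal F_{\tau_k-1}$) shows that the sub-sequence $\{g_{\tau_k,i}(x,a)\}_{k\ge 1}$ has the same joint law as an i.i.d.\ sequence from the marginal of $g_i(x,a)$. Therefore, instantiating the concentration event at $n=N_t(x,a)\in[T]$ yields exactly the claimed bound $\sqrt{2\ln(2m|X||A|T/\delta)/N_t(x,a)}$, uniformly in $t\in[T]$.

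The main obstacle is the handling of the random sample size $N_t(x,a)$: one has to be careful that conditioning on being a visit time does not distort the distribution of $g_{\tau,i}(x,a)$. I plan to sidestep this by the coupling above, which is why I union bound over deterministic $n\in[T]$ rather than over $t$ directly; this way the bound automatically applies at $n=N_t(x,a)$ for every $t$. Once this is set up, the rest is routine: the log factor $\ln(2m|X||A|T/\delta)$ matches because the three index sets contribute $m\cdot|X||A|\cdot T$ terms to the union bound, and the factor $2$ inside the square root comes from Hoeffding's bound for $[-1,1]$-valued random variables.
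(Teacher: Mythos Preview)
Your proposal is correct and follows exactly the approach the paper intends: the paper's own proof is simply the sentence ``a simple application of Hoeffding's inequality and a union bound,'' and your write-up fills in precisely those details, including the (careful) justification that union bounding over deterministic $n\in[T]$ handles the random sample size $N_t(x,a)$.
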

\begin{proof}
	The proof is a simple application of Hoeffding's inequality and a union bound.
\end{proof}

We proceed with a concentration result on the transition functions.
\begin{lemma}[Lemma J.6 of \citep{stradi2025policy}]\label{qest} 
	For any $\delta\in(0,1)$, let $\{\pi_t\}_{t=1}^{T}$ be policies, then for any collection of transition $P_t^x \in \mathcal{P}_{t}$ with probability at least $1 - 2\delta$, it holds:
	$$
	\sum_{t=1}^{T} \| q^{P, \pi_t} - q^{P_t^x, \pi_t} \|_1 \leq 2L|X|^2 \sqrt{2T \ln \left( \frac{L |X|}{\delta} \right)} + 3L|X|^2 \sqrt{2T |A| \ln \left( \frac{T |X|^2 |A|}{\delta} \right)}.
	$$
\end{lemma}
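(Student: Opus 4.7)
The plan is to prove this occupancy-measure deviation bound by combining a standard per-layer decomposition of $q^{P,\pi_t}-q^{P_t^x,\pi_t}$ with the construction of the confidence set $\mathcal{P}_t$ given in Equation~\eqref{cset}, and then converting the resulting counter-based sums into $\widetilde{\mathcal{O}}(\sqrt{T})$ quantities via Azuma--Hoeffding. Throughout, I condition on the ``good event'' that the true transition $P$ lies in $\mathcal{P}_t$ for all $t$, which holds with probability at least $1-\delta$ by Lemma~4.1 of \citet{rosenberg19a}; on this event, both $P$ and $P_t^x$ lie in $\mathcal{P}_t$, and therefore satisfy $|P(x'|x,a)-P_t^x(x'|x,a)|\le 2\epsilon_t(x'|x,a)$ for all $(x,a,x')$.

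The first step is to unroll the occupancy difference layer by layer. Writing $q^{P,\pi_t}(x)$ as a product over layers of transition probabilities weighted by $\pi_t$, I get the standard identity
\[
\|q^{P,\pi_t}-q^{P_t^x,\pi_t}\|_1 \;\le\; \sum_{k=0}^{L-1}\sum_{x\in X_k, a\in A} q^{P,\pi_t}(x,a)\,\bigl\|P(\cdot|x,a)-P_t^x(\cdot|x,a)\bigr\|_1,
\]
(this follows from a telescoping argument over the $L$ layers, analogous to the simulation lemma used in Lemma~B.2 of~\citet{rosenberg19a}). Substituting the $2\epsilon_t$ bound above and summing the $|X_{k+1}|$ coordinates of the $\ell_1$ norm yields
\[
\sum_{t=1}^{T}\|q^{P,\pi_t}-q^{P_t^x,\pi_t}\|_1 \;\le\; 2|X|\sum_{t=1}^{T}\sum_{x,a} q^{P,\pi_t}(x,a)\sqrt{\frac{2|X|\ln(T|X||A|/\delta)}{\max\{1,N_t(x,a)\}}}.
\]

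The second step replaces $q^{P,\pi_t}(x,a)$ inside this sum by the realized visit indicator $\mathbb{I}_t\{x,a\}$. By Azuma--Hoeffding applied to the martingale $\sum_{\tau\le t}\bigl(\mathbb{I}_\tau\{x,a\}-q^{P,\pi_\tau}(x,a)\bigr)$, together with a union bound over $(x,a)$ and the standard layer-wise aggregation of the Azuma slack (one summation yielding the $|X|$ factor per layer), this substitution costs at most
\[
2L|X|^2\sqrt{2T\ln(L|X|/\delta)},
\]
which accounts for the first term on the right-hand side of the claim, holding with probability at least $1-\delta$. After substitution, the remaining deterministic sum is
\[
2|X|\sqrt{2|X|\ln(T|X||A|/\delta)}\sum_{t=1}^{T}\sum_{x,a}\frac{\mathbb{I}_t\{x,a\}}{\sqrt{\max\{1,N_t(x,a)\}}},
\]
which I bound using the standard pigeonhole identity $\sum_{t}\mathbb{I}_t\{x,a\}/\sqrt{N_t(x,a)}\le 2\sqrt{N_T(x,a)}$, followed by Cauchy--Schwarz in $(x,a)$ using $\sum_{x,a}N_T(x,a)\le LT$. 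This gives a term of order $L|X|^2\sqrt{T|A|\ln(T|X|^2|A|/\delta)}$, matching the second term on the right-hand side up to the stated constant.

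The main obstacle is purely bookkeeping: keeping the $L$, $|X|$, $|A|$ factors aligned through the layered decomposition and the two successive concentration steps, so that the constants $2$ and $3$ (rather than larger ones) appear in front of the two summands. A final union bound over the $P\in\mathcal{P}_t$ event and the Azuma event gives the overall $1-2\delta$ probability stated in the lemma.
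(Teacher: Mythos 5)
First, note that the paper does not actually prove this statement: Lemma~\ref{qest} is imported verbatim as Lemma J.6 of \citet{stradi2025policy}, so there is no in-paper argument to compare against. Your reconstruction follows the standard route for such bounds (good event from the confidence sets, a simulation-lemma decomposition, Azuma to pass from $q^{P,\pi_t}(x,a)$ to $\mathbb{I}_t\{x,a\}$, then pigeonhole plus Cauchy--Schwarz), which is certainly the right skeleton.

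There is, however, a concrete gap in the first step. The inequality
$\|q^{P,\pi_t}-q^{P_t^x,\pi_t}\|_1 \le \sum_{k}\sum_{x\in X_k,a} q^{P,\pi_t}(x,a)\|P(\cdot|x,a)-P_t^x(\cdot|x,a)\|_1$
is not the correct form of the telescoping argument for the \emph{full} $\ell_1$ norm over all $(x,a,x')$ triples. The error committed at layer $k$ propagates to every subsequent layer: the layer-$j$ contribution to the $\ell_1$ norm is bounded by $\sum_{k\le j}\sum_{x\in X_k,a}q^{P,\pi_t}(x,a)\|P(\cdot|x,a)-P_t^x(\cdot|x,a)\|_1$, so summing over $j$ puts a weight of order $L-k$ on the layer-$k$ term, not weight $1$. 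Your subsequent accounting (global Cauchy--Schwarz with $\sum_{x,a}N_T(x,a)\le LT$) yields a bound of order $\sqrt{L}\,|X|^2\sqrt{|A|T\ln(\cdot)}$ for the second term, which is actually \emph{smaller} than the claimed $L|X|^2\sqrt{|A|T\ln(\cdot)}$ --- a sign that something has been dropped. If you simply reinstate the missing factor of $L$ and keep your global pigeonhole, you land at $L^{3/2}|X|^2\sqrt{|A|T}$, which overshoots the lemma. Recovering the stated $L$-dependence requires doing both steps layer by layer: keep the $(L-k)$ weights from the propagation, and apply the pigeonhole/Cauchy--Schwarz per layer using $\sum_{x\in X_k,a}N_T(x,a)=T$ for each $k$ (rather than $\sum_{x,a}N_T(x,a)\le LT$ globally); the per-layer sums then combine to give exactly one factor of $L$ and one factor of $|X|^2$. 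This is precisely the bookkeeping carried out in Lemma~B.2--B.3 of \citet{rosenberg19a} and Lemma~8 of \citet{JinLearningAdversarial2019}, which you cite ``analogously'' but do not actually invoke; as written, your chain of inequalities does not establish the stated bound.
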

Thus, we provide concentration results for the constraints.
\begin{lemma}\label{aux1} 
	For any $\delta\in(0,1)$, let $f_t: X \times A \to [-1,1]$ be a sequence of functions that is $t-1$ predictable, and let $\pi_t$ be a randomized policy. Then, with probability at least $1 - \delta$, it holds:
	
	$$\left| \sum_{t \in [T]}\sum_{x \in X, a \in A} f_t(x, a)\mathbb{I}_t\{x,a\}  - \sum_{t \in [T]} f_t^\top q^{P, \pi_t} \right| \leq 2L \sqrt{2T \ln\frac{1}{\delta}},$$
	
	where $\mathbb{I}_t\{x,a\} = 1$ if and only if the pair $(x,a)$ is visited in episode $t$.
\end{lemma}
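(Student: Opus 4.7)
The plan is to recognize this as a standard martingale concentration statement. Define the filtration $\{\mathcal{F}_{t-1}\}_{t \geq 1}$ generated by the learner–environment interaction up through episode $t-1$. Since $f_t$ is $(t-1)$-predictable and $\pi_t$ is also chosen from $\mathcal{F}_{t-1}$, the quantity
\[
Y_t := \sum_{x \in X, a \in A} f_t(x,a)\, \mathbb{I}_t\{x,a\}
\]
satisfies $\mathbb{E}[Y_t \mid \mathcal{F}_{t-1}] = \sum_{x,a} f_t(x,a)\,\mathbb{P}\{x_{k(x)}=x, a_{k(x)}=a \mid P,\pi_t\} = f_t^\top q^{P,\pi_t}$, because the visit indicator $\mathbb{I}_t\{x,a\}$ has conditional mean $q^{P,\pi_t}(x,a)$ given $\mathcal{F}_{t-1}$. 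Therefore the sequence $Z_t := Y_t - f_t^\top q^{P,\pi_t}$ is a martingale difference sequence with respect to $\{\mathcal{F}_t\}$.

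Next, I would bound $|Z_t|$ using the loop-free structure of the CMDP. In each episode exactly one state-action pair is visited per layer, so $\sum_{x,a} \mathbb{I}_t\{x,a\} = L$, which together with $\|f_t\|_\infty \leq 1$ yields $|Y_t| \leq L$. Since the occupancy measure $q^{P,\pi_t}$ also has layer-wise mass one (so $\sum_{x,a} q^{P,\pi_t}(x,a) = L$), the same bound gives $|f_t^\top q^{P,\pi_t}| \leq L$. Combining these two bounds yields $|Z_t| \leq 2L$ almost surely.

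Finally I would apply the Azuma--Hoeffding inequality to the martingale $\sum_{t=1}^T Z_t$ with increments bounded by $2L$. This yields
\[
\mathbb{P}\!\left(\left|\sum_{t=1}^T Z_t\right| \geq \varepsilon \right) \;\leq\; 2\exp\!\left(-\frac{\varepsilon^2}{8L^2 T}\right),
\]
and choosing $\varepsilon = 2L\sqrt{2T\ln(1/\delta)}$ (absorbing the factor of $2$ in the tail, as is standard in the surrounding analysis) gives the stated high-probability bound.

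There is no real obstacle here: the only subtlety worth double-checking is the predictability of $\pi_t$ and $f_t$ relative to $\mathcal{F}_{t-1}$, which is immediate from Protocol~\ref{alg: Learner-Environment Interaction}, and the uniform bound $|Z_t|\leq 2L$, which uses the loop-free convention (exactly $L$ transitions per episode) rather than any nontrivial probabilistic argument.
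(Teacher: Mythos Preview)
Your proposal is correct and follows essentially the same approach as the paper's proof: identify $Z_t=\sum_{x,a}f_t(x,a)\mathbb{I}_t\{x,a\}-f_t^\top q^{P,\pi_t}$ as a martingale difference sequence with $|Z_t|\le 2L$ (using the loop-free structure and $\|f_t\|_\infty\le 1$), then apply Azuma--Hoeffding. If anything, your write-up is more explicit about the filtration and the source of the $2L$ bound than the paper's own proof.
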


\begin{proof}
	By definition of the occupancy measure, it holds:
	\[
	\mathbb{E}\left[f_t(x, a) \mathbb{I}_t\{x,a\} | P,\pi_t\right] = \sum_{x \in X}\sum_{a \in A} q_t(x,a)f_t(x, a) = f_t^\top q_t.
	\]
	We defined the following sequence:
	$$
	X_t = \sum_{\tau=1}^{t} \left[ \sum_{x \in X, a \in A} f_\tau(x, a)\mathbb{I}_\tau\{x,a\} - f_\tau^\top q^{P, \pi_\tau} \right].
	$$
	$X_t$ is a Martingale difference sequence and $|X_t - X_{t-1}| \leq 2L$. Applying the Azuma inequality, we obtain that with probability at least $1 - \delta$:
	$$\left| \sum_{t \in [T]}\sum_{x \in X, a \in A} f_t(x, a)\mathbb{I}_t\{x,a\}  - \sum_{t \in [T]} f_t^\top q^{P, \pi_t} \right| \leq 2L \sqrt{2T \ln\frac{1}{\delta}}.$$
	This concludes the proof.
\end{proof}

\begin{lemma}\label{aux2}
	For any $\delta\in(0,1)$, for any sequence of occupancy measure $\bar{q}_t \in \widehat{\Delta}_t(\mathcal{P}_t)$ and any function $f_t(x, a)$ sampled from a distribution with mean $\bar{f}(x,a)$, i.e., $\mathbb{E}[f_t(x,a)] = \bar{f}(x,a)$ and $\mathbb{P}(|f_t(x,a)| \leq 1) = 1$, it holds that with probability at least $1 - \delta$:
	$$
	\left| \sum_{t \in [T]} \bar{f}^\top \bar{q}_t - \sum_{t \in [T]} f_t^\top \bar{q}_t \right| \leq 2L \sqrt{2T \ln\frac{1}{\delta}}.
	$$
\end{lemma}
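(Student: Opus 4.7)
The plan is to recognize the difference inside the absolute value as the partial sum of a bounded martingale difference sequence and then apply Azuma--Hoeffding. Let $\mathcal{F}_{t-1}$ be the $\sigma$-algebra generated by all randomness up to (but not including) the sampling of $f_t$: this includes the trajectories, loss/constraint realizations, and the internal random variables used by the algorithm up through episode $t-1$. The key structural observation is that $\bar q_t$ lies in $\widehat\Delta_t(\mathcal{P}_t)$, a set built from counters and estimators available at the start of episode $t$, so $\bar q_t$ is $\mathcal{F}_{t-1}$-measurable. The independent fresh randomness in $f_t$ therefore decouples cleanly from $\bar q_t$.

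Define
\[
Y_t := (\bar f - f_t)^\top \bar q_t = \sum_{x\in X, a\in A}\bigl(\bar f(x,a) - f_t(x,a)\bigr)\,\bar q_t(x,a).
\]
By the hypothesis $\mathbb{E}[f_t(x,a)] = \bar f(x,a)$, and since $\bar q_t$ is $\mathcal{F}_{t-1}$-measurable, we have
\[
\mathbb{E}[Y_t \mid \mathcal{F}_{t-1}] = \sum_{x,a}\bigl(\bar f(x,a) - \mathbb{E}[f_t(x,a) \mid \mathcal{F}_{t-1}]\bigr)\,\bar q_t(x,a) = 0,
\]
so $\{Y_t\}_{t=1}^T$ is a martingale difference sequence with respect to the filtration $\{\mathcal{F}_t\}$.

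Next I would bound $|Y_t|$. Since $|f_t(x,a)|\leq 1$ and $|\bar f(x,a)|\leq 1$, the per-entry difference satisfies $|\bar f(x,a) - f_t(x,a)|\leq 2$. Because $\bar q_t$ is a valid occupancy measure for a loop-free CMDP with $L$ layers, $\sum_{x,a}\bar q_t(x,a) = L$ (one unit of probability mass in each of the $L$ layers). Hence
\[
|Y_t| \leq 2 \sum_{x,a}\bar q_t(x,a) = 2L.
\]
Applying the (two-sided) Azuma--Hoeffding inequality to the martingale $\sum_{t=1}^T Y_t$ with bounded increments $|Y_t|\leq 2L$ yields
\[
\mathbb{P}\!\left(\left|\sum_{t\in[T]} Y_t\right| \geq 2L\sqrt{2T\ln(1/\delta)}\right) \leq \delta,
\]
which is precisely the stated inequality.

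The argument is essentially routine, so there is no genuine obstacle. The only non-obvious points are (i) verifying the predictability of $\bar q_t$ with respect to the natural filtration, which follows from how $\widehat\Delta_t(\mathcal{P}_t)$ is constructed at the start of episode $t$ in Algorithm~\ref{alg:main}, and (ii) remembering that the occupancy measure mass is $L$ rather than $1$ in the loop-free formulation, which is what produces the $L$ factor in the final bound.
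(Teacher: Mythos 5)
Your proof is correct and follows essentially the same route as the paper, which simply observes that $(\bar f - f_t)^\top \bar q_t$ is a martingale difference sequence with increments bounded by $2L$ (since the occupancy measure has total mass $L$) and applies Azuma--Hoeffding exactly as in the companion Lemma~\ref{aux1}. Your write-up just makes explicit the predictability of $\bar q_t$ and the $|Y_t|\leq 2L$ bound that the paper leaves implicit.
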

\begin{proof}
	The proof follows the one of Lemma~\ref{aux1}, after noticing that the quantity of interest is a Martingale difference sequence.
\end{proof}

Thus, we provide an auxiliary result on the concentration of the optimistic loss estimator.
\begin{lemma}\label{bias1_ut} 
	For any $\delta\in(0,1)$
	, Algorithm~\ref{alg:main} attains, with probability at least $1 - 7\delta$:
	$$\sum_{t=1}^{T} (\ell_t - \widehat{\ell}_t)^\top \widehat{q}_t \leq \gamma|X||A|T + 2L|X|^2 \sqrt{2T \ln \left( \frac{L |X|}{\delta} \right)} + 3L|X|^2 \sqrt{2T |A| \ln \left( \frac{T |X|^2 |A|}{\delta} \right)}$$
\end{lemma}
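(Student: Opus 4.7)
The natural approach is a standard bias/variance split of the optimistic loss estimator. Let $\mathcal{F}_{t-1}$ be the $\sigma$-field generated by all data up to and including the start of episode $t$, under which $\widehat{q}_t$, $u_t$, $\pi_t$, and $r_t$ are measurable, and the only new randomness in the summand is the trajectory indicator $\mathbb{I}_t\{x,a\}$ with conditional mean $q^{P,\pi_t}(x,a)=:q_t(x,a)$. Writing $A_t:=\mathbb{E}\!\left[(\ell_t-\widehat{\ell}_t)^\top\widehat{q}_t\,\big|\,\mathcal{F}_{t-1}\right]$, I decompose
\[
\sum_{t=1}^{T}(\ell_t-\widehat{\ell}_t)^\top\widehat{q}_t \;=\; \sum_{t=1}^{T}\bigl[(\ell_t-\widehat{\ell}_t)^\top\widehat{q}_t-A_t\bigr] \;+\; \sum_{t=1}^{T}A_t,
\]
where the first sum is a martingale difference sequence and the second is a deterministic (on the right event) bias.

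For the martingale piece, the key observation is that $\widehat{q}_t\in\Delta(\mathcal{P}_t)$ by construction, so $\widehat{q}_t(x,a)\le u_t(x,a)$ pointwise; combined with $\ell_t,\widehat{\ell}_t\ge 0$ this yields $\widehat{\ell}_t^\top\widehat{q}_t\le \sum_{x,a}\mathbb{I}_t\{x,a\}=L$ and $\ell_t^\top\widehat{q}_t\le L$, so each increment is bounded by $O(L)$ and Azuma--Hoeffding gives an $O(L\sqrt{T\log(1/\delta)})$ bound with probability at least $1-\delta$, which is dominated by the second term of the stated inequality. For the bias, a direct computation (using $\widehat{\ell}_t(x,a)=\ell_t(x,a)\mathbb{I}_t\{x,a\}/(u_t(x,a)+\gamma)$ and evaluating the expectation of $\mathbb{I}_t\{x,a\}$) reduces $A_t$ to a weighted sum over $(x,a)$ of a factor of the form $\bigl(u_t(x,a)+\gamma-q_t(x,a)\bigr)/(u_t(x,a)+\gamma)$. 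Conditioning on the event $P\in\mathcal{P}_t$ for all $t$ (probability at least $1-\delta$, by Lemma~4.1 of \citet{rosenberg19a}), this factor is nonnegative, and I split its numerator into $\gamma$ and $u_t-q_t$: the $\gamma$-piece, weighted by $\widehat{q}_t/(u_t+\gamma)\le 1$, sums to at most $\gamma|X||A|T$, giving the first term of the bound; the $(u_t-q_t)$-piece collapses to $\sum_t\sum_{x,a}\bigl(u_t(x,a)-q^{P,\pi_t}(x,a)\bigr)$, which, by picking the per-state maximizer $P_t^x\in\mathcal{P}_t$, is precisely of the form controlled by Lemma~\ref{qest}, yielding the two remaining $L|X|^2\sqrt{T}$ terms with probability at least $1-2\delta$.

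A final union bound over the Azuma step, the transition-confidence event $\{P\in\bigcap_t\mathcal{P}_t\}$, Lemma~\ref{qest}, and any auxiliary concentration needed to reconcile the trajectory indicator appearing inside $\ell_t$ (via Lemma~\ref{aux1}) delivers the stated probability $1-7\delta$, with some slack absorbed by loosening the logarithmic factors to match the form in the statement. The main obstacle is the careful bias reduction: one must exploit the pointwise inequality $\widehat{q}_t\le u_t$ (which rests on $\widehat{q}_t$ lying in $\Delta(\mathcal{P}_t)$), use $P\in\mathcal{P}_t$ to guarantee $u_t\ge q_t$ so the $\gamma+(u_t-q_t)$ split is clean and both pieces are nonnegative, and identify $\sum_t\sum_{x,a}(u_t-q_t)$ as the $L^1$-style occupancy deviation controlled by Lemma~\ref{qest}. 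Once these are in place, the martingale concentration and the union-bound bookkeeping are routine.
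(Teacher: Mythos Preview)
Your proposal is correct and takes essentially the same approach as the paper, which simply defers to Lemma~6 of \citet{JinLearningAdversarial2019} together with Lemma~\ref{qest}; you have faithfully unrolled that argument---the bias/martingale split, the pointwise inequality $\widehat{q}_t\le u_t$ (from $\widehat{q}_t\in\Delta(\mathcal{P}_{t-1})$) to bound the martingale increments, the split of the bias numerator into $\gamma$ and $u_t-q_t$, and the identification of $\sum_t\sum_{x,a}(u_t-q_t)$ with the occupancy deviation controlled by Lemma~\ref{qest}. The only slight looseness is your hand-wave around the indicator sitting inside the paper's definition of $\ell_t$ and the precise $7\delta$ accounting, but the paper's own one-line proof is no more explicit on either point.
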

\begin{proof}
	The result follows from the proof of Lemma 6 from \citep{JinLearningAdversarial2019} and employing Lemma~\ref{qest}.
\end{proof}

\section{Additional Experiments}
\label{App:Exp}
In this section, we describe the experiments that show the theoretical guarantees of our algorithm in practice. Our goal is to assess \texttt{WC-OPS}'s performance in both stochastic and adversarial setting and to compare it with state-of-the-art algorithms from literature. Specifically, the algorithms we consider are: 
\begin{itemize}
	\item \texttt{OptCMDP} (Algorithm 1 of \citep{Exploration_Exploitation}). This algorithm solves an optimistic linear programming formulation of the CMDP, for each episode. \texttt{OptCMDP} attains $\widetilde{\mathcal{O}}(\sqrt{T})$ regret and \emph{positive} violation, without Slater's condition, being arguably state-of-the-art in terms of performance for the stochastic setting.
	\item \texttt{OptPrimalDual-CMDP} (Algorithm 4 of \citep{Exploration_Exploitation}). This algorithm employs a primal-dual approach, performing incremental updates for both the primal (that is, the policy) and dual Lagrange variables. \texttt{OptPrimalDual-CMDP} attains $\widetilde{\mathcal{O}}(\frac{1}{\rho}\sqrt{T})$ regret and violation, assuming Slater's condition.
	\item \texttt{Greedy}, a greedy-like algorithm which employs the empirical average for every estimate. It works similarly to \texttt{OptCMDP}, without relying on confidence intervals.
\end{itemize}

All experiments are conducted in a finite-horizon CMDP with a layered structure. To achieve a fair comparison, we have aligned all the algorithms to our environment; in particular, we make the following assumptions: 
\begin{itemize}
	\item The CMDP has a layered structure and is loop-free. The "length" of the episodes $H$ of \citep{Exploration_Exploitation} corresponds in our setting to the number of layers $L$. 
	\item Each layer has its own states and the first and last layer only contain one state, differently from \citep{Exploration_Exploitation} where set of states that can be visited by the algorithm is the same at each episode's step.
	\item The episode always starts from layer 0, thus there is no need to keep the initial state distribution $\mu$ mentioned in \citep{Exploration_Exploitation}. 
	\item Rewards are in $[0,1]$ and constraints are in $[-1, 1]$ for each $i = 1,...,m$, differently from \citep{Exploration_Exploitation}, where constraints are non-negative.
\end{itemize}

The parameter $\delta$ is set to $0.01$ for all experiments. In the stochastic setting, the values of reward and constraints are sampled from a Bernoulli distribution (rescaled to $[-1,1]$ in the case of constraints). In the adversarial setting, reward and constraints are generated by an OGD algorithm~\citep{Orabona} which receives as a gradient a vector containing for each state the negative product of the policy played at that round and a fixed initial vector of rewards (or constraints).

To obtain statistically robust results, each experiment is repeated a certain number of times ($n \simeq 10$). The runs are executed in parallel using a process pool to reduce computational time. For each algorithm, we report the average performance together with $95\%$ confidence intervals.

The experiments were conducted in three different settings: 
\begin{itemize}
	\item With stochastic reward and stochastic constraints, we compared our algorithm with \texttt{OptCMDP} and \texttt{OptPrimalDual-CMDP}. 
	\item With adversarial reward and stochastic constraints, we compared our algorithm with \texttt{OptCMDP} and \texttt{Greedy}. 
	\item With adversarial reward and adversarial constraints, we compared our algorithm with \texttt{Greedy}.
\end{itemize}

\texttt{OptCMDP} and \texttt{OptPrimalDual-CMDP} both attain $\widetilde{\mathcal{O}}(\sqrt{T})$ regret and violation in the stochastic setting. \texttt{Greedy} has no guarantees of sublinearity. 
In the stochastic case, we expect our algorithm to perform similarly to \texttt{OptCMDP}, which is taylored to this setting, and better than \texttt{OptPrimalDual-CMDP}. In the adversarial case we expect \texttt{WC-OPS} to outperform \texttt{Greedy}.

\subsection{Stochastic reward and stochastic constraints}

In this section, we provide the experiments in the fully stochastic environment.

\begin{figure}[!htp]
	\centering 
	\begin{subfigure}{0.45\textwidth}
		\centering
		\includegraphics[width=\linewidth]{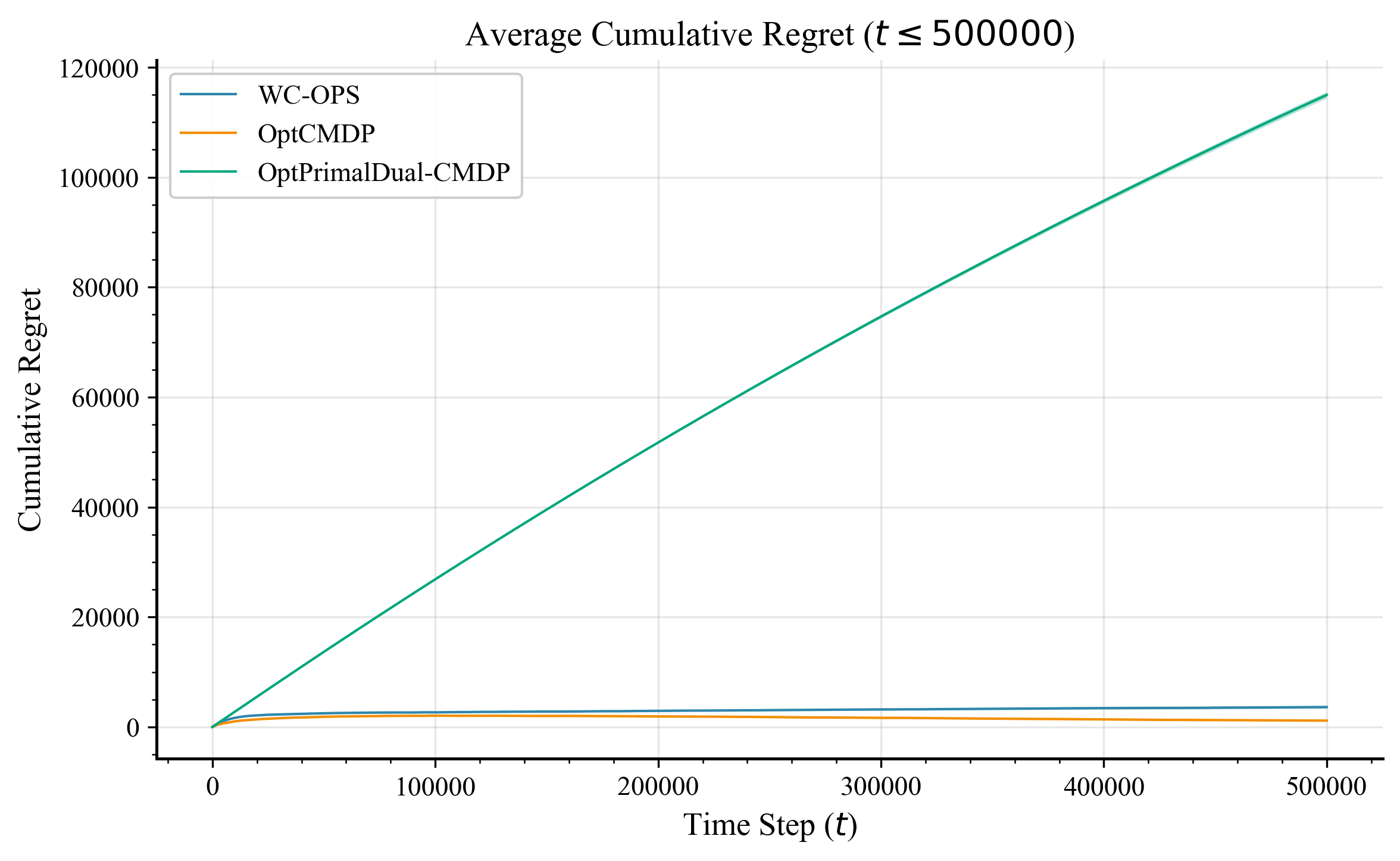}
		\caption{Regret $R_T$}
		\label{fig:regret0}
	\end{subfigure}
	\hfill
	\begin{subfigure}{0.45\textwidth}
		\centering
		\includegraphics[width=\linewidth]{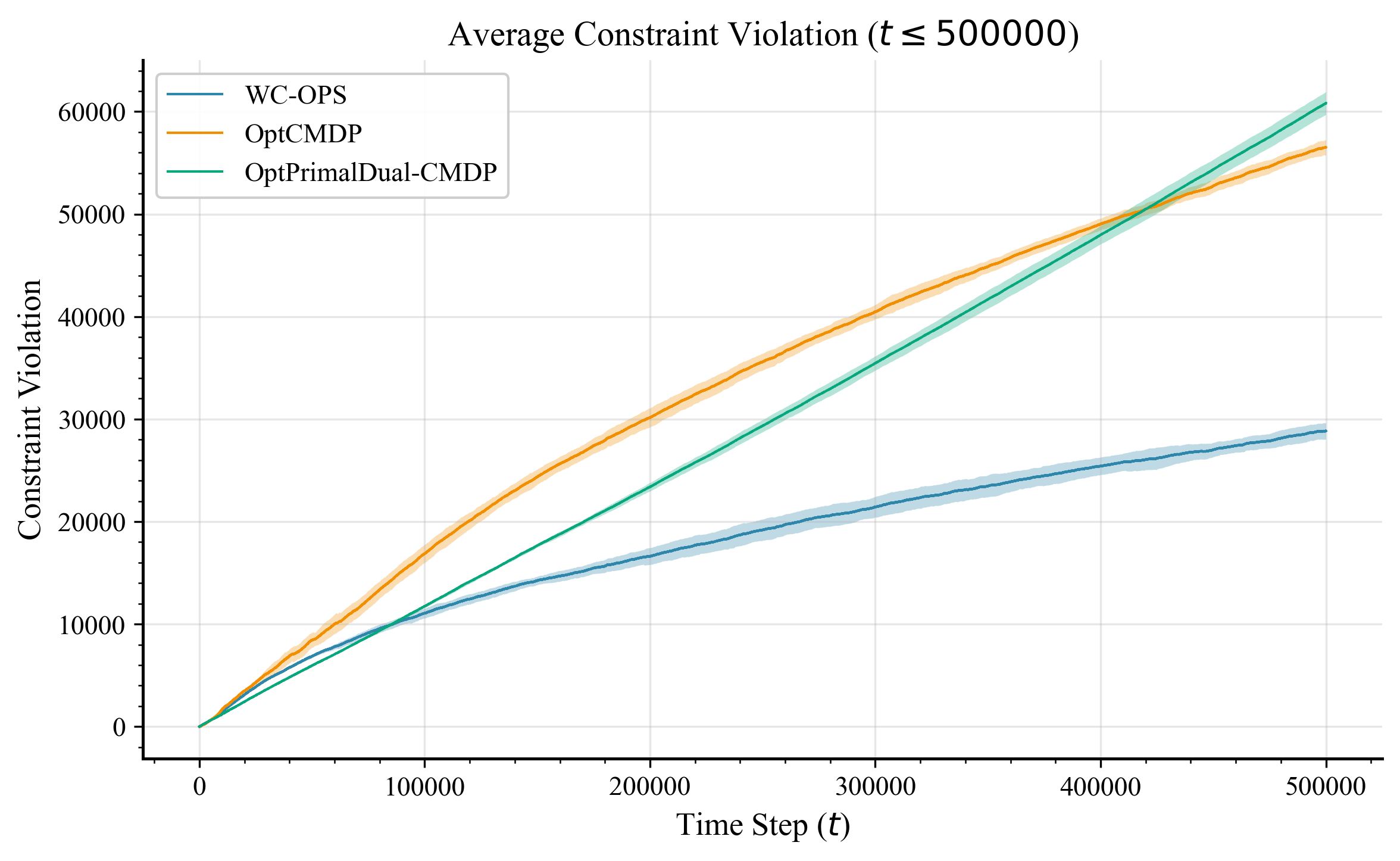}
		\caption{Constraint violation $V_T$}
		\label{fig:violation0}
	\end{subfigure}
	\caption{Stochastic reward and stochastic constraints.}
\end{figure}

In Figures \ref{fig:regret0}~-~\ref{fig:violation0}, we provide the experiments presented in the main paper.
\begin{figure}[!htp]
	\centering 
	\begin{subfigure}{0.45\textwidth}
		\centering
		\includegraphics[width=\linewidth]{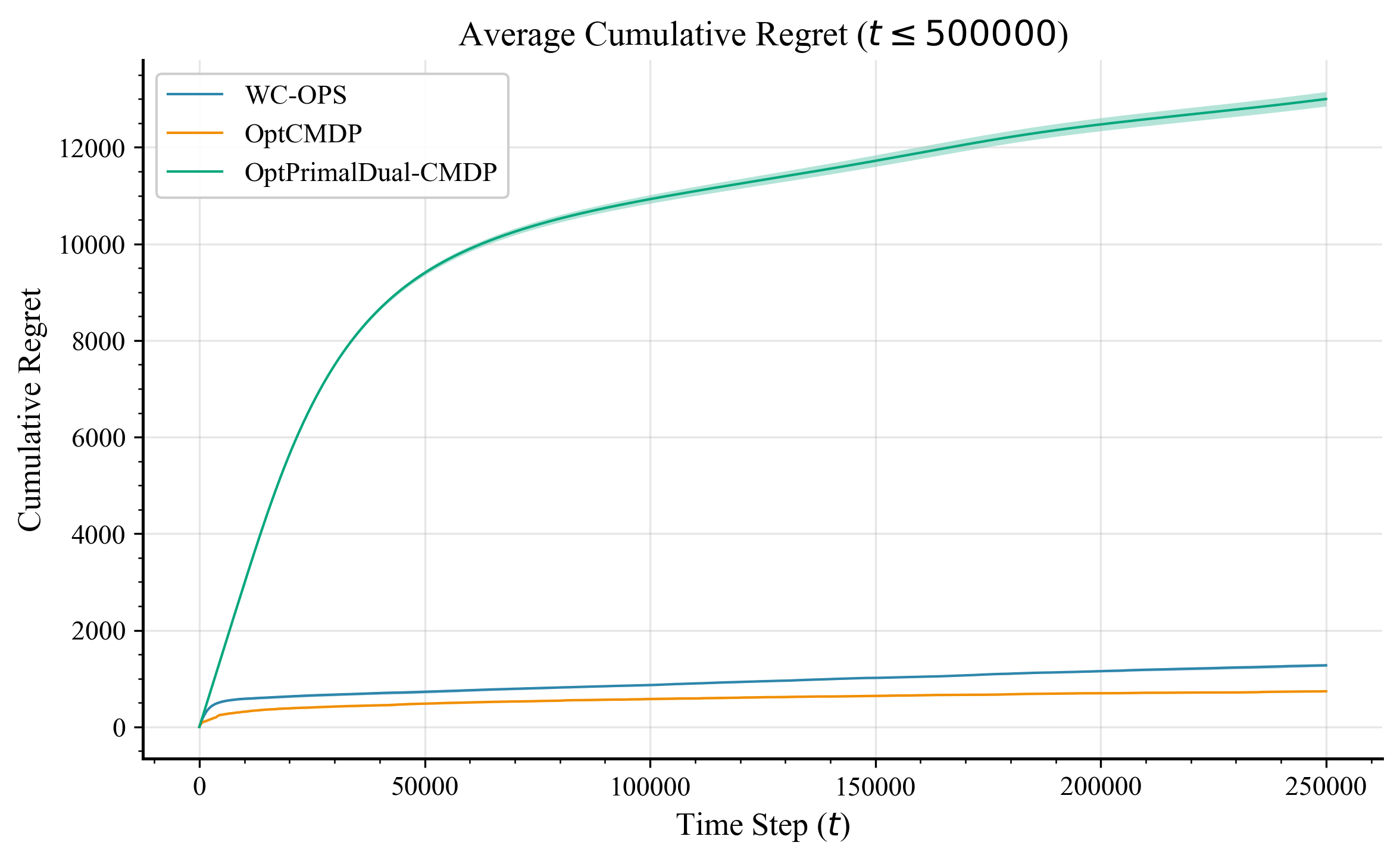}
		\caption{Regret $R_T$}
		\label{fig:regret1}
	\end{subfigure}
	\hfill
	\begin{subfigure}{0.45\textwidth}
		\centering
		\includegraphics[width=\linewidth]{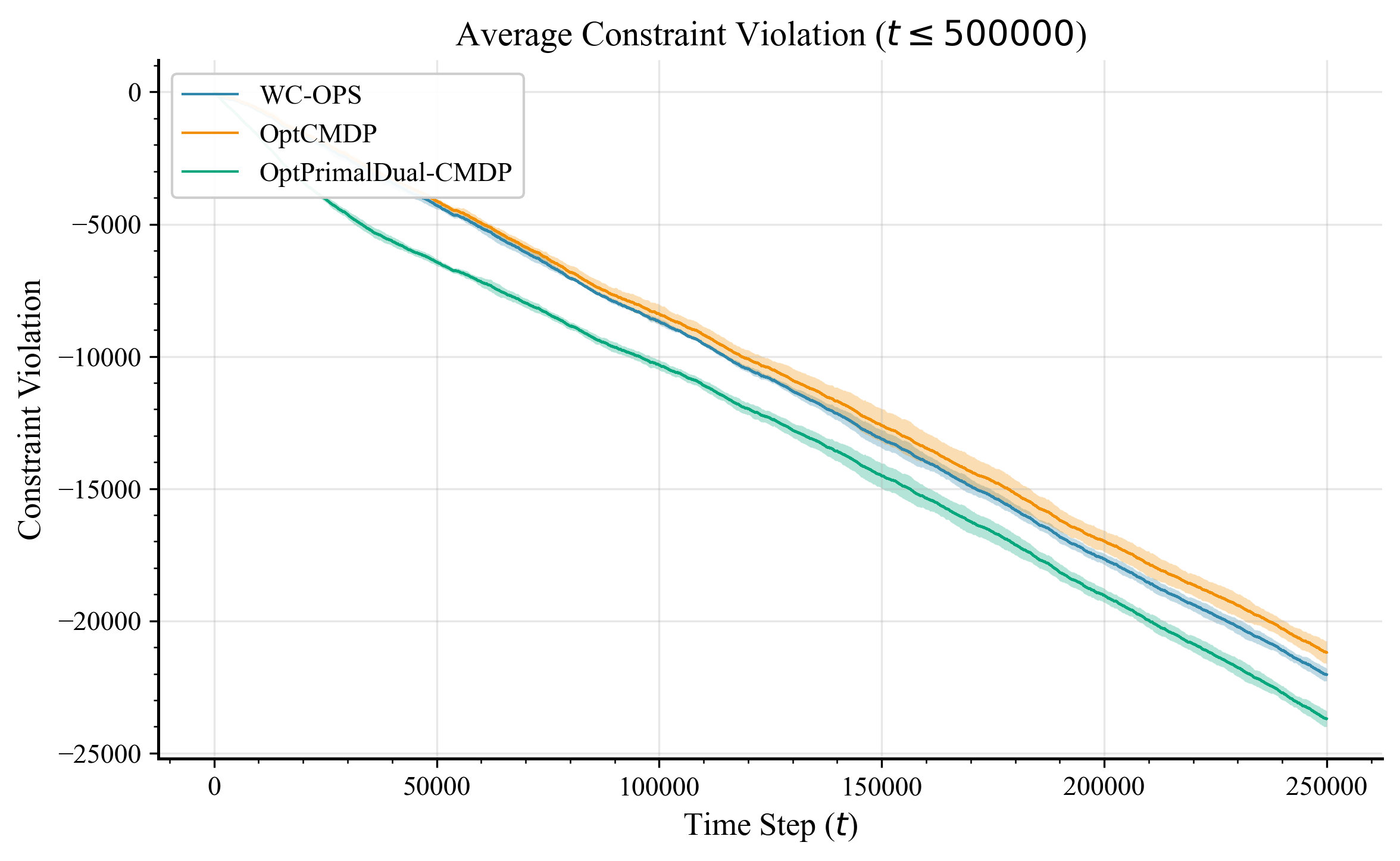}
		\caption{Constraint violation $V_T$}
		\label{fig:violation1}
	\end{subfigure}
	\caption{Stochastic reward and stochastic constraints.}
\end{figure}
In Figures \ref{fig:regret1}~-~\ref{fig:violation1}, we provide a novel experiment. As in the previous one, \texttt{WC-OPS} achieves a performance similar to the one of \texttt{OptCMDP} and better with respect to \texttt{OptPrimalDual-CMDP} in terms of regret. The violation performance is similar across the algorithms. 
\begin{figure}[!htp]
	\centering
	\begin{subfigure}{0.45\textwidth}
		\centering
		\includegraphics[width=\linewidth]{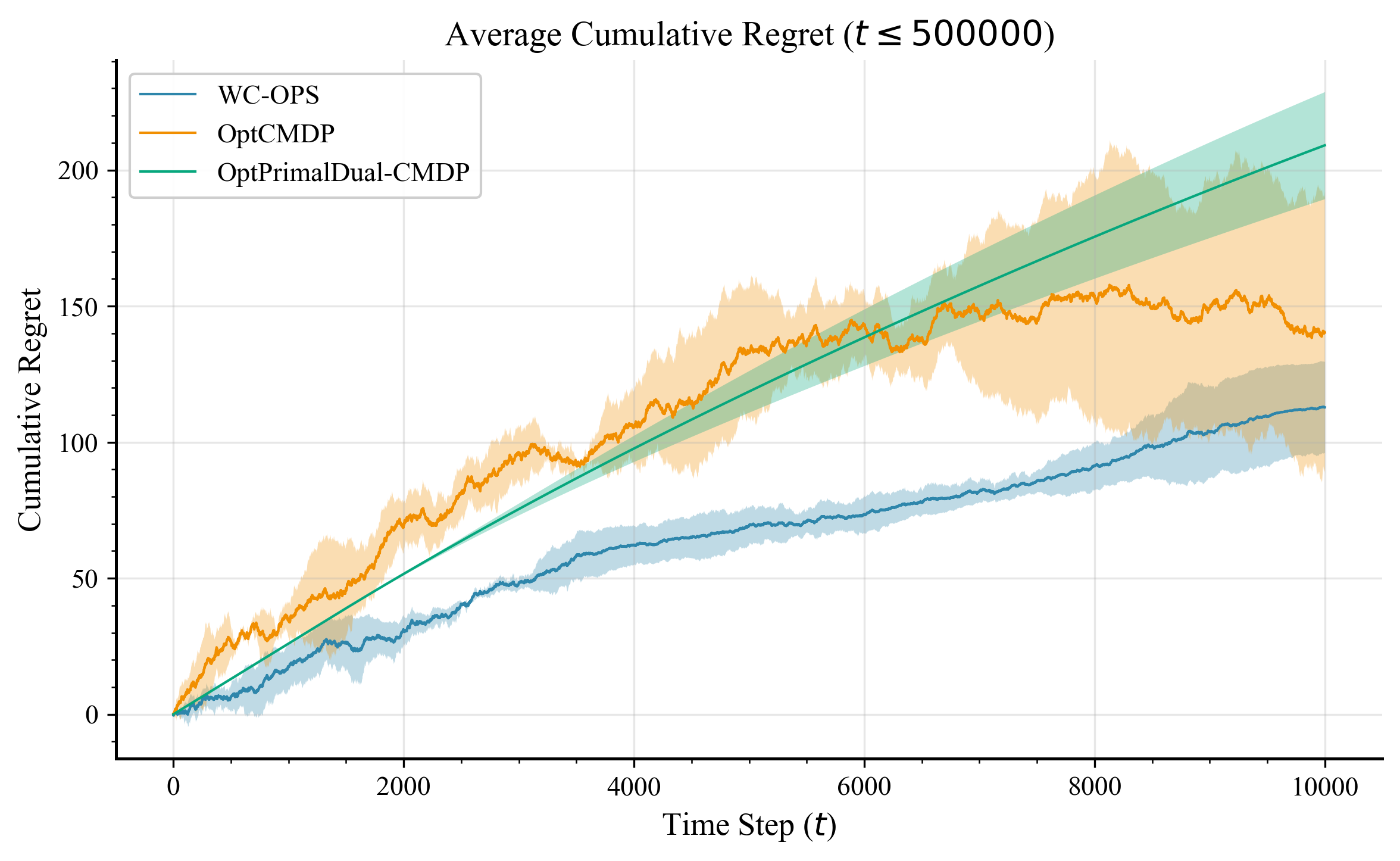}
		\caption{Regret $R_T$}
		\label{fig:regret2}
	\end{subfigure}
	\hfill
	\begin{subfigure}{0.45\textwidth}
		\centering
		\includegraphics[width=\linewidth]{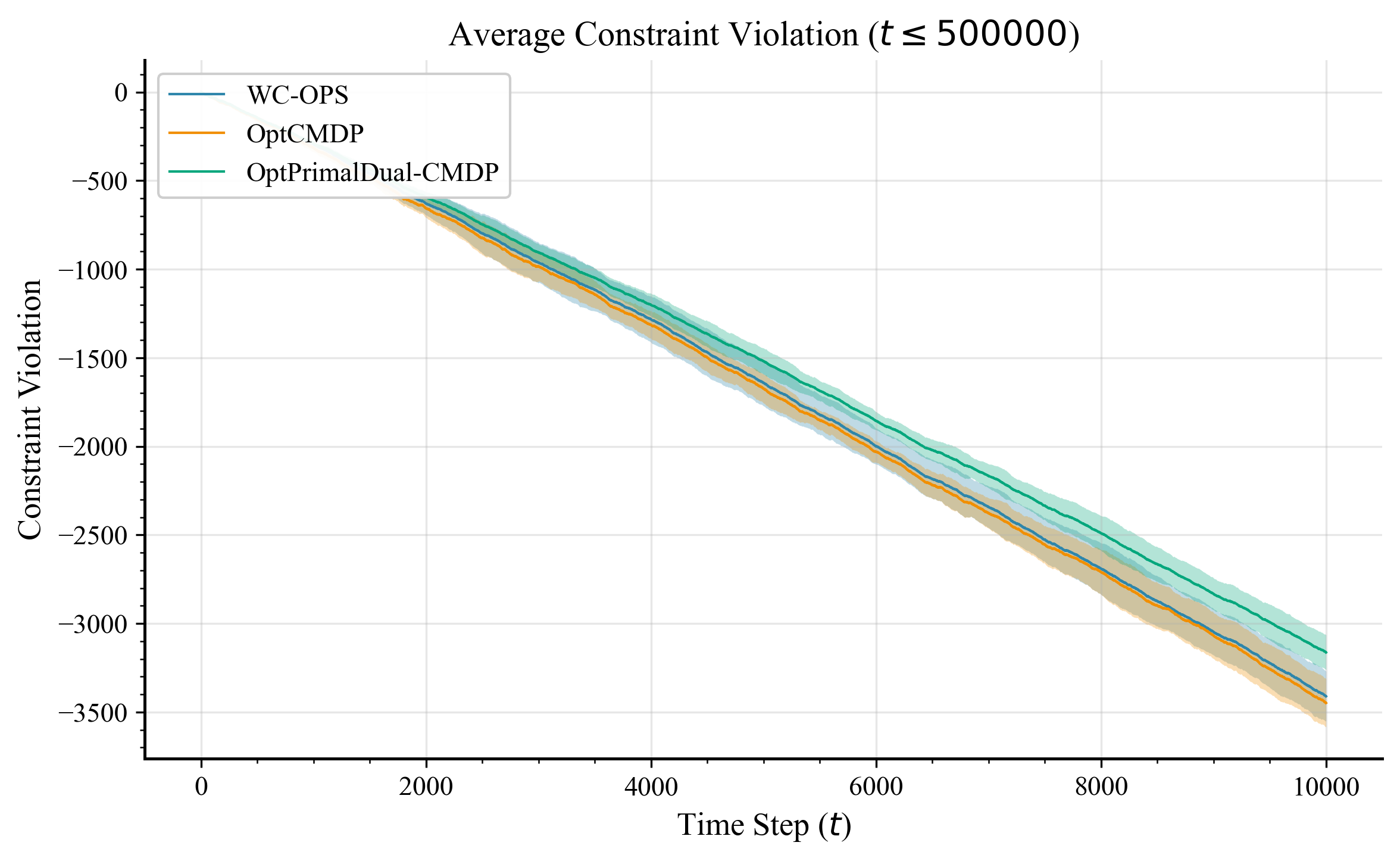}
		\caption{Constraint violation $V_T$}
		\label{fig:violation2}
	\end{subfigure}
	\caption{Stochastic reward and stochastic constraints.}
\end{figure}
Finally, in Figures \ref{fig:regret2}~-~\ref{fig:violation2}, we show the results from a final experiment in the stochastic setting. 

\subsection{Adversarial reward and stochastic constraints}
In this section, we provide the experiments when the environment encompasses adversarial rewards and stochastic constraints.
\begin{figure}[!htp]
	\centering
	\begin{subfigure}{0.45\textwidth}
		\centering
		\includegraphics[width=\linewidth]{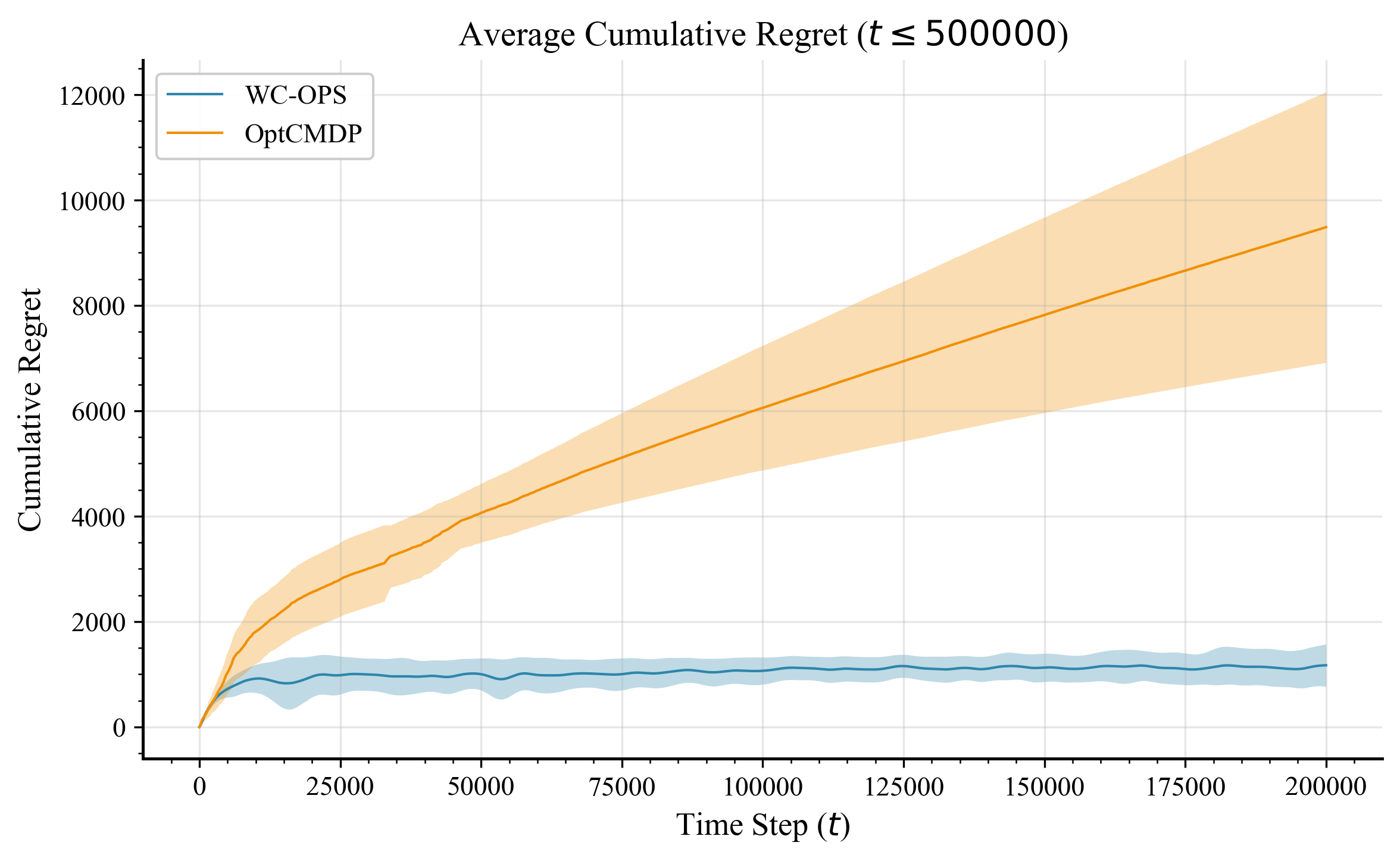}
		\caption{Regret $R_T$}
		\label{fig:regret3}
	\end{subfigure}
	\hfill
	\begin{subfigure}{0.45\textwidth}
		\centering
		\includegraphics[width=\linewidth]{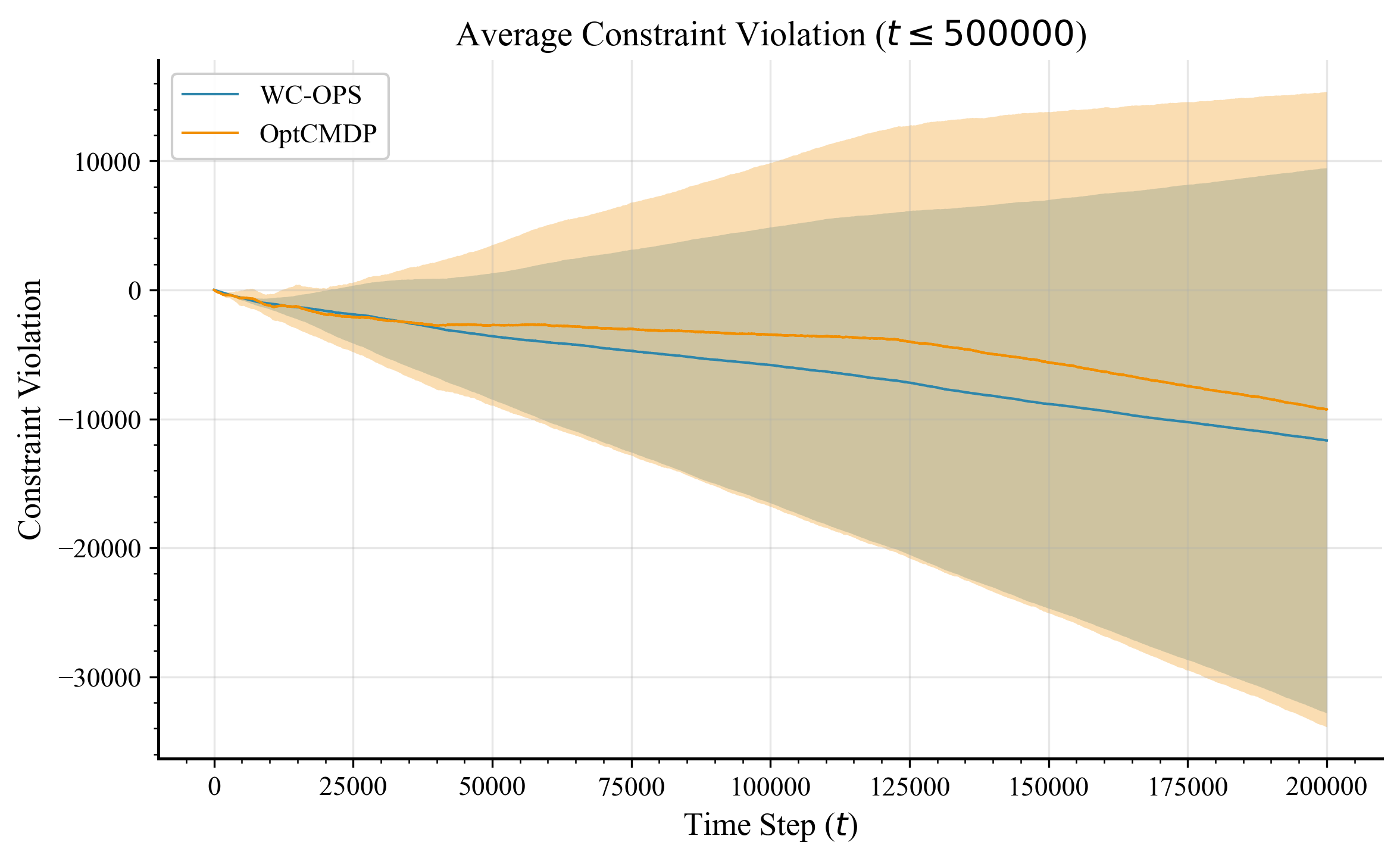}
		\caption{Constraint violation $V_T$}
		\label{fig:violation3}
	\end{subfigure}
	\caption{Adversarial reward and stochastic constraints.}
\end{figure}

In Figures \ref{fig:regret3}~-~\ref{fig:violation3}, we provide our first experiment for the setting. As expected, \texttt{WC-OPS} outperforms in terms of regret \texttt{OptCMDP}, while attaining similar constraints violation guarantees. Finally, Figures \ref{fig:regret4}~-~\ref{fig:violation4} provide a similar experiment.

\begin{figure}[!htp]
	\centering
	\begin{subfigure}{0.45\textwidth}
		\centering
		\includegraphics[width=\linewidth]{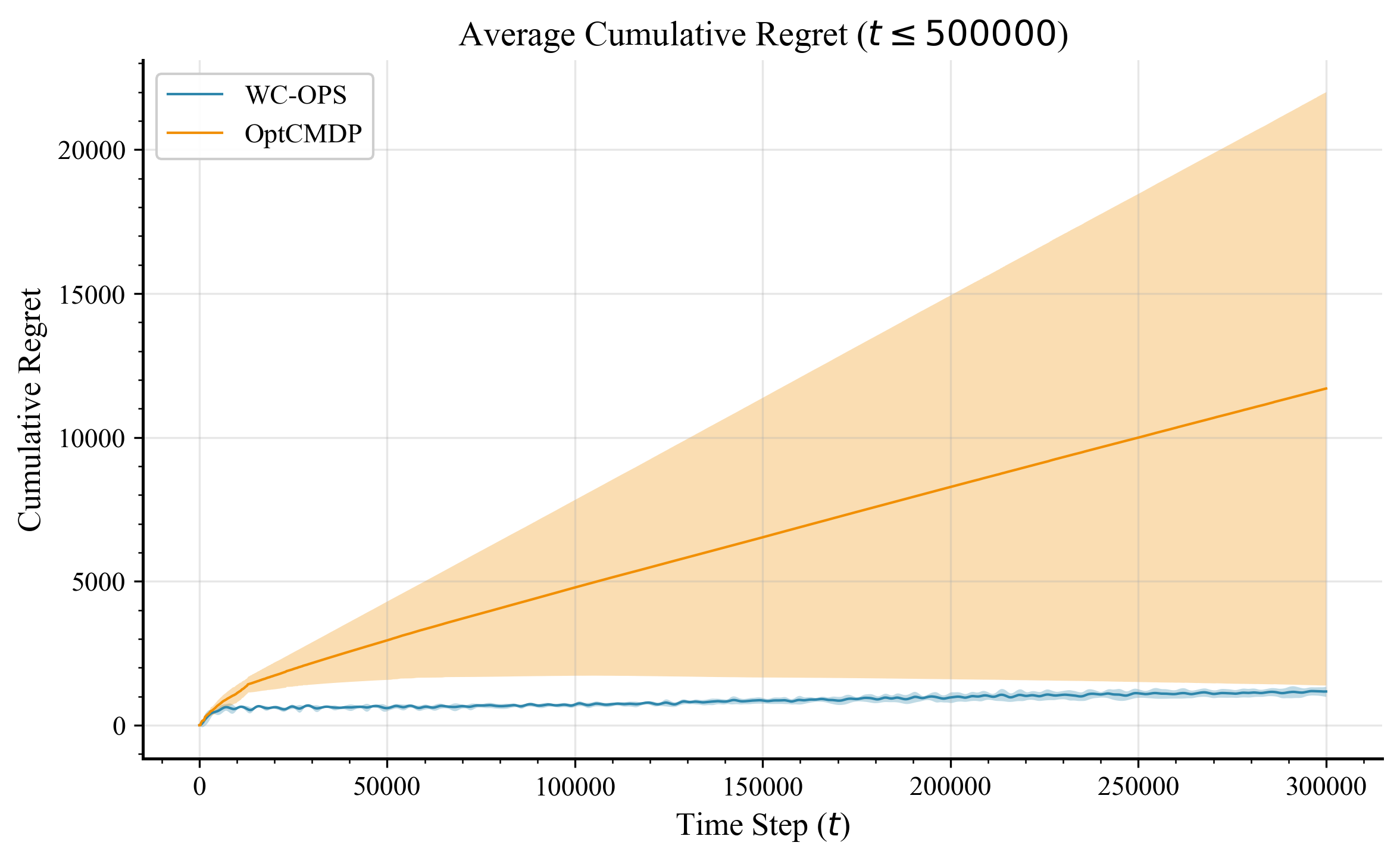}
		\caption{Regret $R_T$}
		\label{fig:regret4}
	\end{subfigure}
	\hfill
	\begin{subfigure}{0.45\textwidth}
		\centering
		\includegraphics[width=\linewidth]{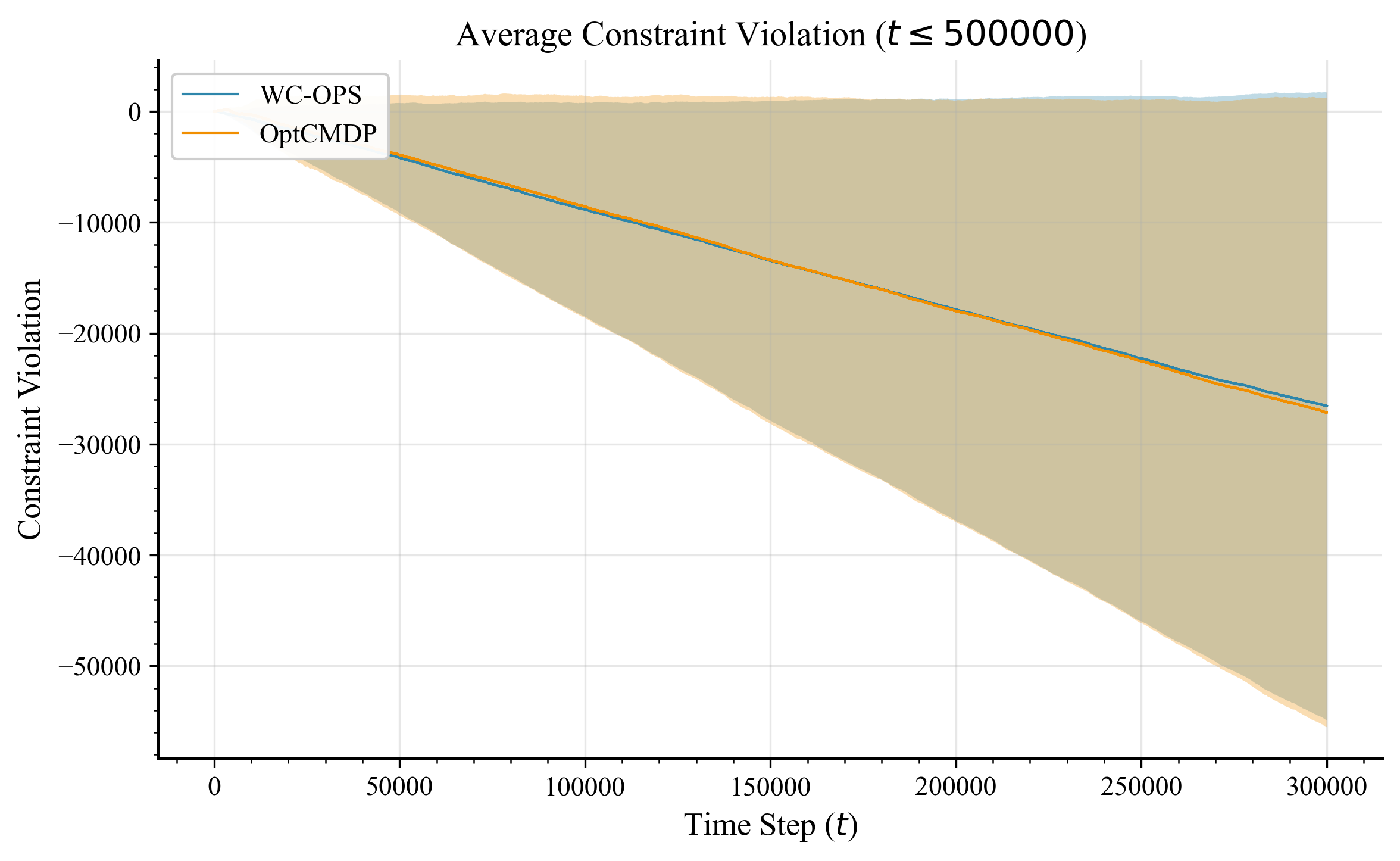}
		\caption{Constraint violation $V_T$}
		\label{fig:violation4}
	\end{subfigure}
	\caption{Adversarial reward and stochastic constraints.}
\end{figure}
\subsection{Adversarial reward and adversarial constraints}
In this section, we provide the experiments when the environment is completely adversarial.
\begin{figure}[!htp]
	\centering
	\begin{subfigure}{0.45\textwidth}
		\centering
		\includegraphics[width=\linewidth]{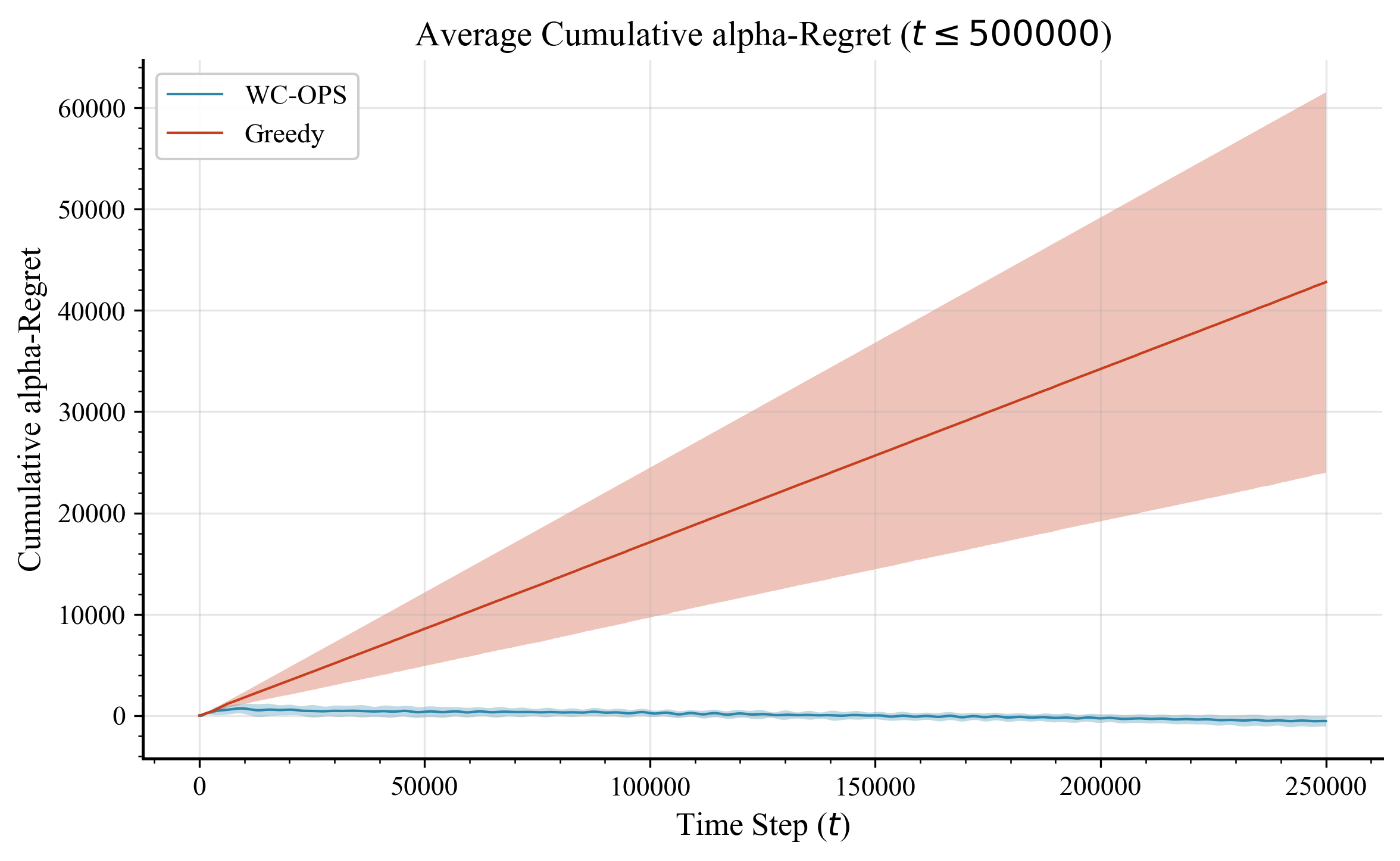}
		\caption{Regret $\alpha\text{-}R_T$}
		\label{fig:regret5}
	\end{subfigure}
	\hfill
	\begin{subfigure}{0.45\textwidth}
		\centering
		\includegraphics[width=\linewidth]{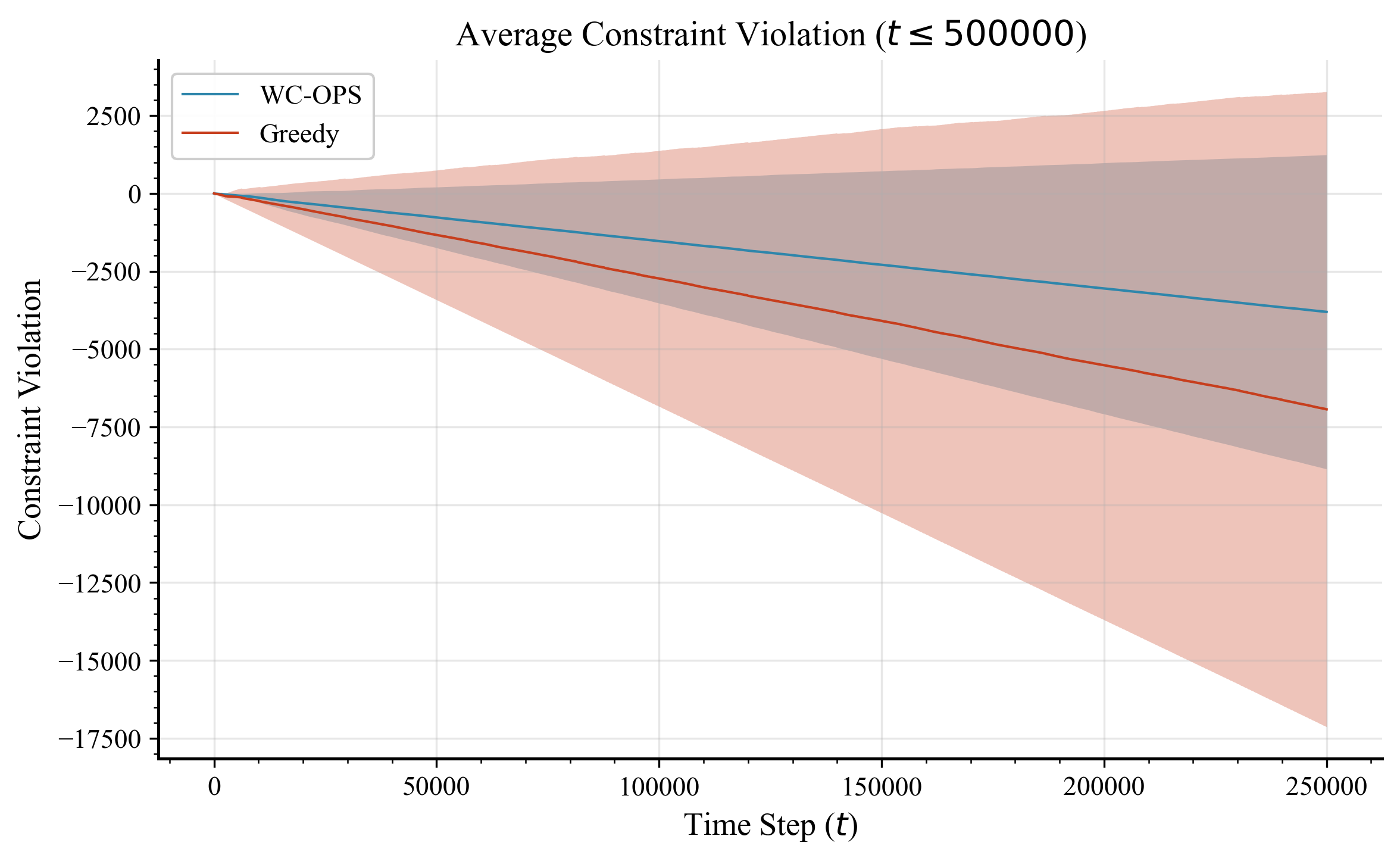}
		\caption{Constraint violation $V_T$}
		\label{fig:violation5}
	\end{subfigure}
	\caption{Adversarial reward and adversarial constraints.}
\end{figure}

\begin{figure}[!htp]
	\centering
	\begin{subfigure}{0.45\textwidth}
		\centering
		\includegraphics[width=\linewidth]{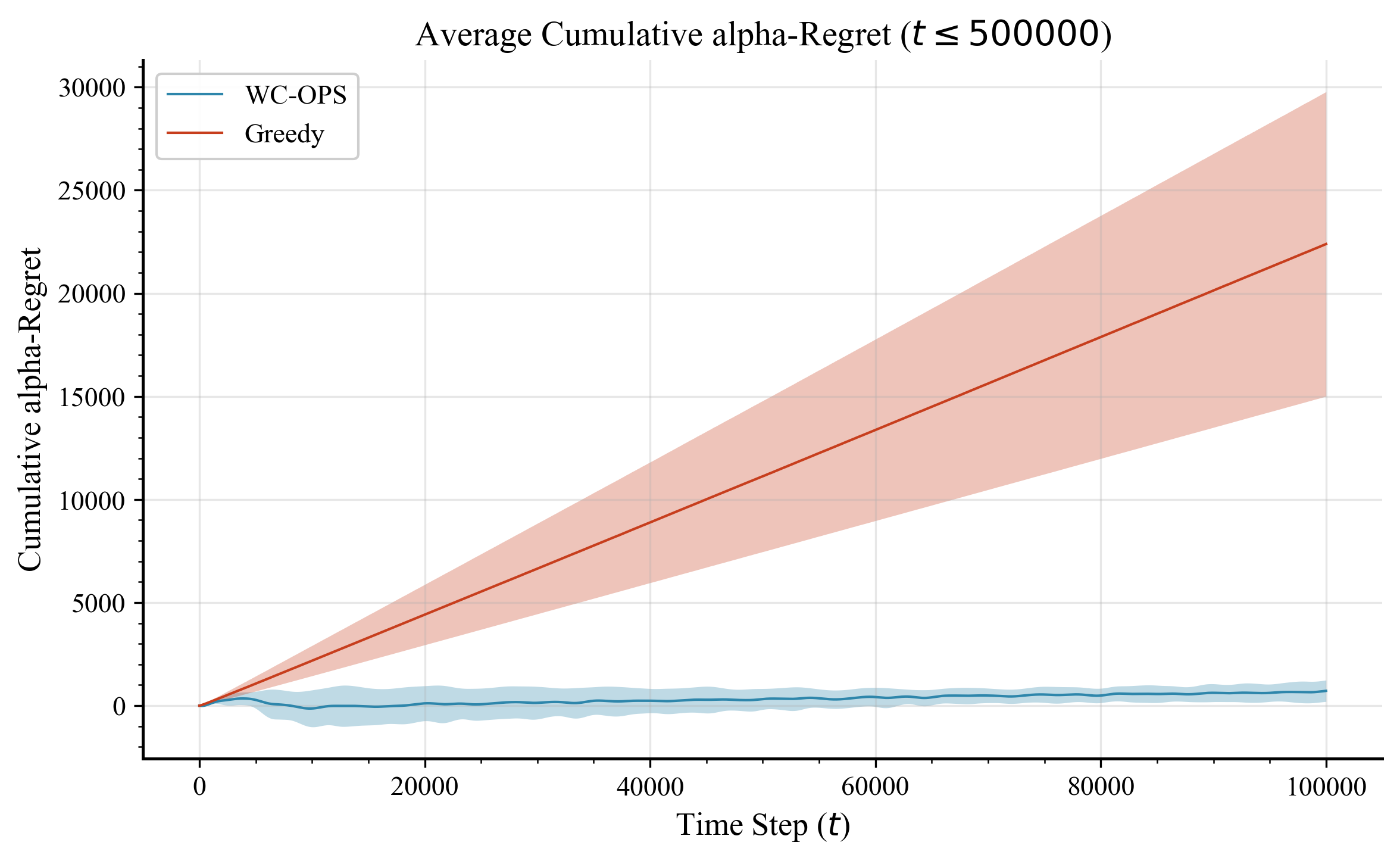}
		\caption{Regret $\alpha\text{-}R_T$}
		\label{fig:regret6}
	\end{subfigure}
	\hfill
	\begin{subfigure}{0.45\textwidth}
		\centering
		\includegraphics[width=\linewidth]{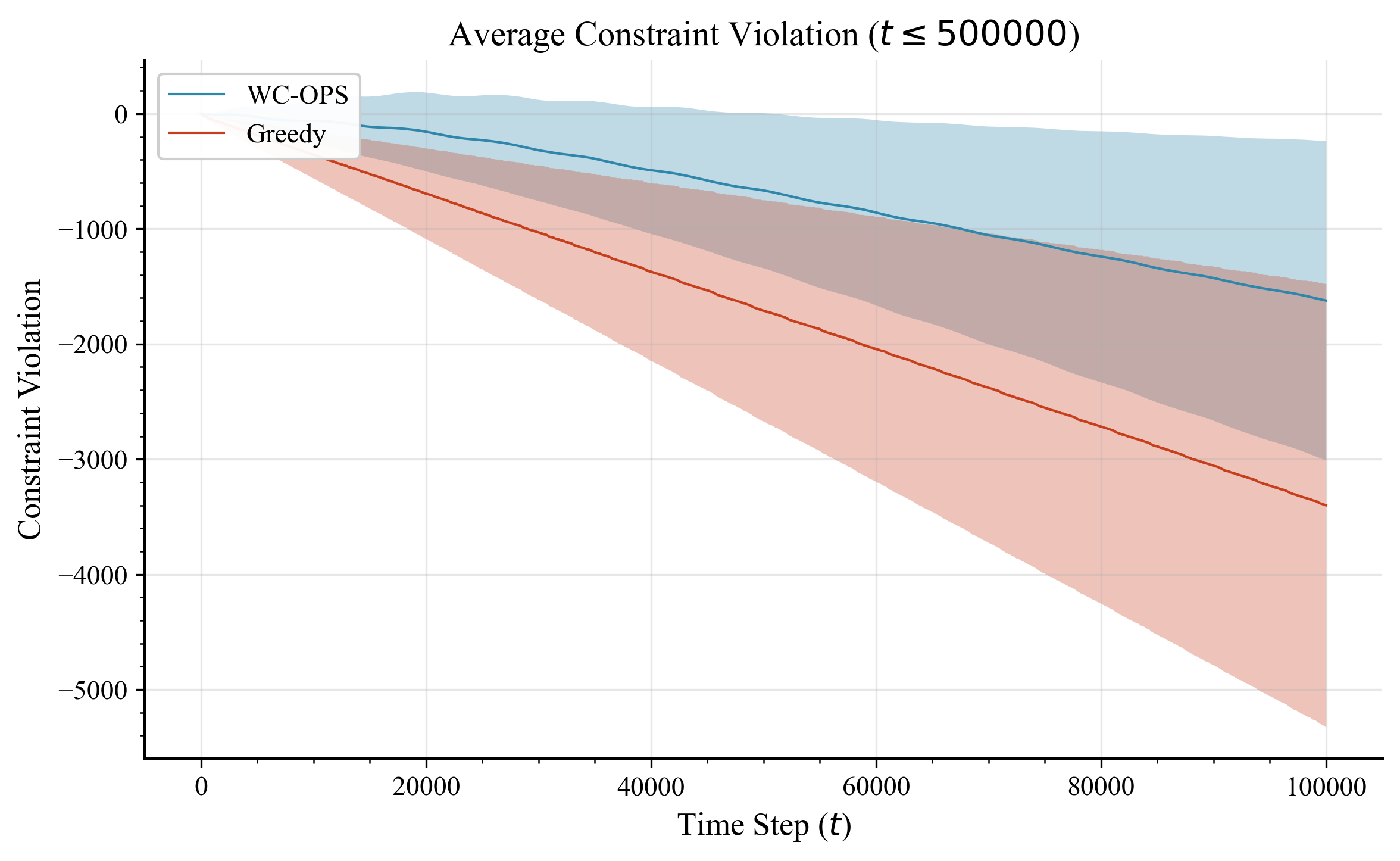}
		\caption{Constraint violation $V_T$}
		\label{fig:violation6}
	\end{subfigure}
	\caption{Adversarial reward and adversarial constraints.}
\end{figure}
In Figures \ref{fig:regret5}~-~\ref{fig:violation5}, we provide the first experiment in the adversarial setting. As expected, \texttt{WC-OPS} significantly outperforms \texttt{Greedy} in terms of $\alpha$-Regret, while, in this case, attains a similar performance in terms of violation. Finally, in Figures~\ref{fig:regret6}~-~\ref{fig:violation6}, we provide a similar experiment.

\subsection{Learning dynamics in the simplex}
In this section, we provide some graphical representations of the dynamics of Algorithm~\ref{alg:main}. The experiments are conducted in a single state environment with three actions. Specifically, in Figures~\ref{fig:quadtree}~-~\ref{fig:quadtree_2}~-~\ref{fig:quadtree_3}, it is possible to verify that Algorithm~\ref{alg:main} converges asymptotically to the true decision space.
\begin{figure}[!htp]
	\centering
	\includegraphics[width=0.7\textwidth]{images/simplex0.png}
	\caption{Learning dynamics of Algorithm~\ref{alg:main}}
	\label{fig:quadtree}
\end{figure}
\begin{figure}[!htp]
	\centering
	\includegraphics[width=0.7\textwidth]{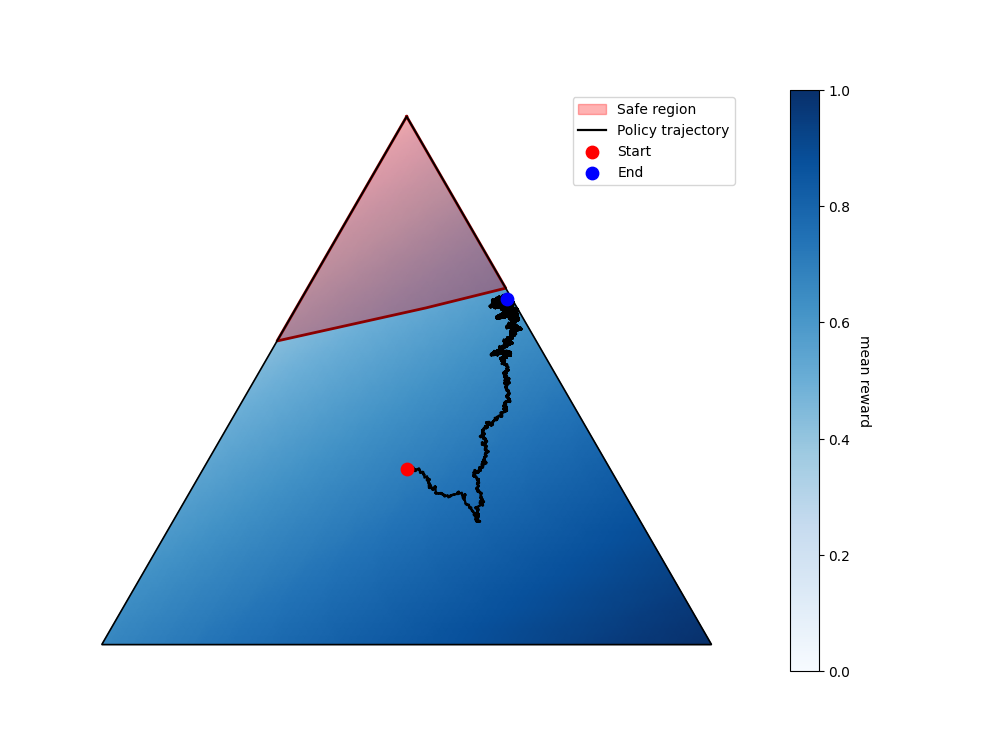}
	\caption{Learning dynamics of Algorithm~\ref{alg:main}}
	\label{fig:quadtree_2}
\end{figure}
\begin{figure}[!htp]
	\centering
	\includegraphics[width=0.7\textwidth]{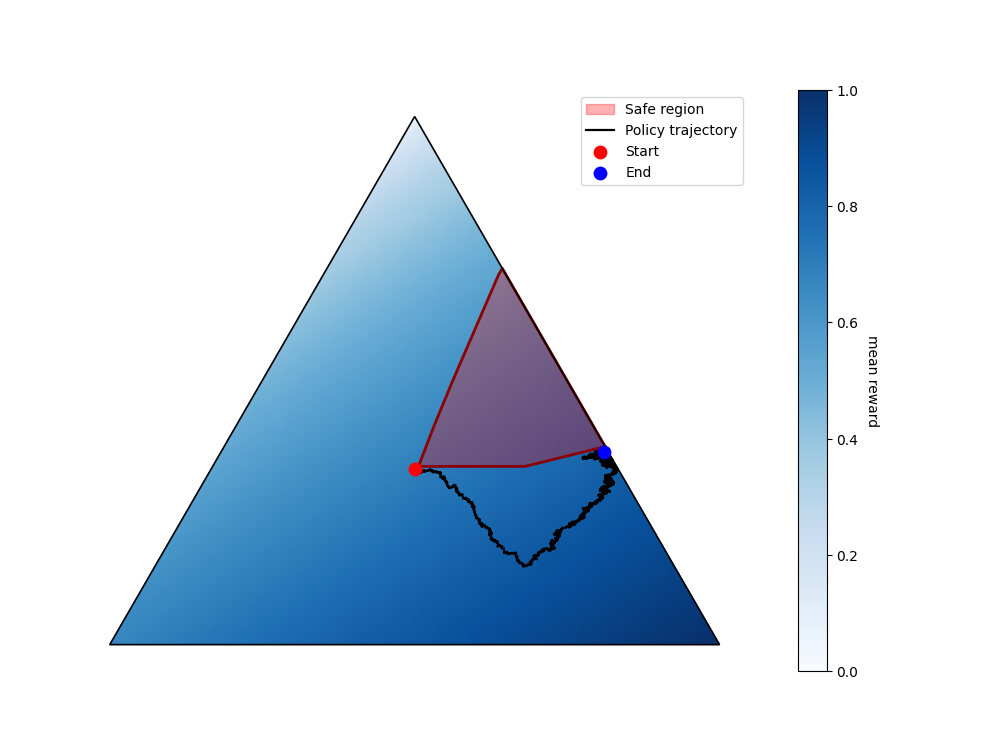}
	\caption{Learning dynamics of Algorithm~\ref{alg:main}}
	\label{fig:quadtree_3}
\end{figure}

\end{document}